\theoremstyle{plain}
\newtheorem{theorem}{Theorem}
\newtheorem{proposition}[theorem]{Proposition}
\newtheorem{lemma}{Lemma}
\theoremstyle{definition}
\newtheorem{assumption}[theorem]{Assumption}
\newtheorem{remark}[theorem]{Remark}
\begin{document}

%

%

\twocolumn[

\aistatstitle{Gaussian Smoothing in Saliency Maps: The Stability-Fidelity Trade-Off in Neural Network Interpretability 
}

\aistatsauthor{ Zhuorui Ye \And Farzan Farnia}

\aistatsaddress{ Tsinghua University \And The Chinese University of Hong Kong} ]

\begin{abstract}

Saliency maps have been widely used to interpret the decisions of neural network classifiers and discover phenomena from their learned functions. However, standard gradient-based maps are frequently observed to be highly sensitive to the randomness of training data and the stochasticity in the training process. In this work, we study the role of Gaussian smoothing in the well-known Smooth-Grad algorithm in the stability of the gradient-based maps to the randomness of training samples. We extend the algorithmic stability framework to gradient-based interpretation maps and prove bounds on the stability error of standard Simple-Grad, Integrated-Gradients, and Smooth-Grad saliency maps. Our theoretical results suggest the role of Gaussian smoothing in boosting the stability of gradient-based maps to the randomness of training settings. On the other hand, we analyze the faithfulness of the Smooth-Grad maps to the original Simple-Grad and show the lower fidelity under a more intense Gaussian smoothing. We support our theoretical results by performing several numerical experiments on standard image datasets. Our empirical results confirm our hypothesis on the fidelity-stability trade-off in the application of Gaussian smoothing to gradient-based interpretation maps. 

\end{abstract}

\section{INTRODUCTION}
\begin{figure}[t]
  \centering
  \includegraphics[width=0.99\columnwidth]{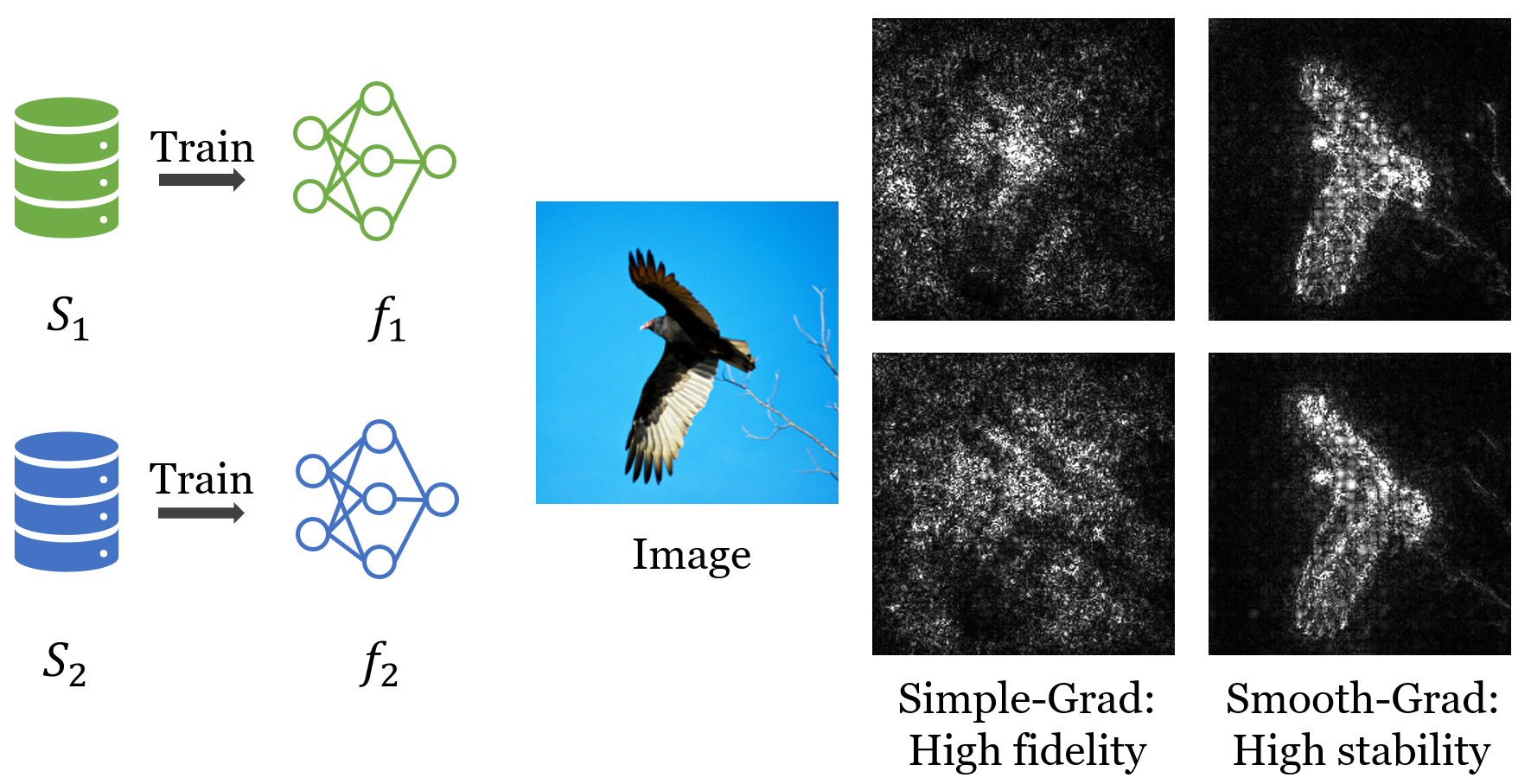}
  \vspace{-0.4cm}
  \caption{The stability-fidelity trade-off introduced by Gaussian smoothing. We trained neural networks on two disjoint training set splits of ImageNet, and computed the Simple-Grad and Smooth-Grad maps for the same test sample. 
  }
  \label{fig:tradeoff}
\vspace{-0.1cm}
\end{figure}

Deep learning models have attained state-of-the-art results over a wide array of supervised learning tasks including image classification \citep{krizhevsky2012imagenet}, speech recognition \citep{graves2013speech}, and text categorization \citep{minaee2021deep}. The trained deep neural networks have been utilized not only to address the target classification task but also to discover the underlying rules influencing the label assignment to a feature vector. To this end, saliency maps, which highlight the features influencing the neural network's predictions, have been widely used to gain an understanding of phenomena from data-driven models~\citep{freiesleben2022scientific} and further to find new discoveries and insights. For example, \citet{mitani2020detection} shows the application of saliency maps to identify the region in fundus images related to anemia, and \citet{bien2018deep} demonstrates that saliency maps can assist clinicians in diagnosing knee injury from MRI scans.

Specifically, gradient-based maps have been widely applied to compute saliency maps for neural net classifiers. Standard gradient-based saliency maps such as Simple-Grad \citep{simonyan2013deep} and Integrated- Gradients \citep{sundararajan2017axiomatic}, represent the input-based gradient of a neural network at a test sample, which reveals the input features with a higher local impact on the neural network classifier's output. Therefore, the assignment of feature importance scores by a gradient-based saliency map can be utilized to reveal the main features influencing the phenomenon. 

However, a consideration for drawing conclusions from gradient-based maps is their potential sensitivity to the training algorithm of the neural network classifier. \citet{arun2021assessing,woerl2023initialization} have shown that gradient maps could be significantly different for independent yet identically distributed training sets, or for a different random initialization used for training the classifier. The sensitivity of the saliency maps is indeed valuable in debugging applications, where the map is used to identify potential reasons behind a misclassification case.  
On the other hand, the instability of gradient-based maps could hinder the applications of Simple-Grad and Integrated-Gradients maps for phenomena discovery~\citep{arun2021assessing}, as for a generalizable understanding of the underlying phenomenon, the resulting saliency maps need to have limited dependence on the stochasticity of training data sampled from the underlying data distribution. 

In this work, we specifically study the influence of Gaussian smoothing used in the Smooth-Grad algorithm~\citep{smilkov2017smoothgrad} on the stability of a gradient-based map. We provide theoretical and numerical evidence that Gaussian smoothing could increase the stability of saliency maps to the stochasticity of the training setting, which can improve the reliability of applying the gradient-based map for phenomena understanding using the neural net classifier. 

To theoretically analyze the stability properties of saliency maps, we utilize the algorithmic stability framework in \citet{bousquet2002stability} to quantify the expected robustness of gradient maps to the stochasticity of a randomized training algorithm, e.g. stochastic gradient descent (SGD), and randomness of training data drawn from the underlying distribution. Following the analysis in \citet{hardt2016train}, we define the stability error of a stochastic training algorithm and bound the error in terms of the number of training data and SGD training iterations for the Simple-Grad, Integrated-Gradients, and Smooth-Grad. Our stability error bounds suggest the improvement in stability by applying the randomized smoothing mechanism in Smooth-Grad. Specifically, our error bounds improve by a factor $1/\sigma$ under a standard deviation parameter $\sigma$ of the Smooth-Grad's Gaussian noise. 

In addition to the stability of gradient maps under Gaussian smoothing, we also bound the faithfulness of the Gaussian smoothed saliency maps to the original Simple-Grad and Integrated-Gradients interpretation maps, for which we define fidelity error\footnote{In this paper, the fidelity term refers to the faithfulness of the regularized saliency map to the original saliency map.}. We show that by increasing the standard deviation parameter $\sigma$, the fidelity error grows proportionally to parameter $\sigma$. This relationship indicates the stabilization offered by Gaussian smoothing at the price of a higher fidelity error compared to the original saliency map. As illustrated in Figure~\ref{fig:tradeoff}, the Smooth-Grad saliency maps could be considerably more stable to the training set of the neural network, while they could significantly differ from the original Simple-Grad map. We note that different applications of saliency maps may prioritize stability or fidelity differently. For instance, debugging misclassifications would require higher fidelity, while phenomena discovery would prioritize higher stability. Therefore, understanding the stability-fidelity properties of Smooth-Grad maps helps with a principled application of the algorithm to different tasks. 

We perform numerical experiments to test our theoretical results on the algorithmic stability and fidelity of interpretation maps under Gaussian smoothing. In the experiments, we tested several standard saliency maps and image datasets. We empirically analyzed a broader range of instability sources in the training setting and demonstrated that Gaussian smoothing can lead to higher stability to the changes in the training algorithm and neural network architecture. The numerical results indicate the impact of Gaussian smoothing in reducing the sensitivity of the gradient-based interpretation map to the stochasticity of the training algorithm at the cost of a higher difference from the original gradient map. We can summarize this work's main contribution as:
\begin{itemize}[leftmargin=*]
    \item Studying the algorithmic stability and generalization properties of gradient-based saliency maps.
    \item Proving bounds on the algorithmic stability error of saliency maps under vanilla and noisy stochastic gradient descent (SGD) training algorithms.
    \item Analyzing the fidelity of Smooth-Grad maps and their faithfulness to the Simple-Grad map.
    \item Providing numerical evidence on the stability-fidelity trade-off of Smooth-Grad maps on standard image datasets. 
\end{itemize}

\section{RELATED WORK}
\noindent \textbf{Gradient-based Interpretation} A prevalent method for generating saliency maps involves computing the gradient of a deep neural network's output with respect to an input image. This technique is extensively utilized in numerous related studies, including Smooth-Grad~\citep{smilkov2017smoothgrad}, Integrated-Gradients~\citep{sundararajan2017axiomatic}, DeepLIFT~\citep{shrikumar2017learning}, Grad-CAM~\citep{selvaraju2017grad}, and Grad-CAM++~\citep{chattopadhay2018generalized}. However, gradient-based saliency maps often exhibit considerable noise. To address this, various methods have been proposed to enhance the quality of these maps by reducing noise. Common strategies include altering the gradient flow through activation functions~\citep{zeiler2014visualizing,springenberg2014striving}, eliminating negative or minor activations~\citep{springenberg2014striving,kim2019saliency,ismail2021improving}, and integrating sparsity priors~\citep{levine2019certifiably,zhang2023moreaugrad}. Despite these advancements, these techniques do not explicitly tackle the noise and variability in stochastic optimization and the randomness of training data.
 
\noindent \textbf{Stability Analysis for Interpretation Maps} Beyond visual quality, several studies have explored the stability of interpretation methods. \citet{arun2021assessing} examined the consistency within the same architecture and across different architectures for various interpretation methods on medical imaging datasets, discovering that most methods failed in their tests. \citet{woerl2023initialization} conducted experiments revealing that neural networks with different initializations produce distinct saliency maps. Similarly, \citet{fel2022good} highlighted the instability in interpretation maps and proposed a metric to evaluate the generalizability of these maps. To address this instability, \citet{woerl2023initialization} suggested a Bayesian marginalization technique to eliminate noise from random initialization and stochastic training, though it is computationally intensive due to the need to train multiple networks. Furthermore, \citet{zhang2023moreaugrad} propose MoreauGrad which is a robust and sparsified version of the SimpleGrad and SmoothGrad algorithms.
The related works \citep{gong2024structured,gong2025boosting} develop adversarial training methods for improving the robustness and visual quality of saliency maps, while \citet{gong2024super} leverage image super-pixels for enhanced stability. On the other hand, our work focuses on the stability effects offered by Gaussian smoothing applied in SmoothGrad. 



\textbf{Sanity checks for saliency maps} Evaluating the application of saliency maps has been studied in several related works. The related work \citep{adebayo2018sanity}
proposes sanity checks for interpretation maps, where the saliency map is supposed to depend on the characteristics of input data and how the label $y$ is determined by the input feature vector $\mathbf{x}$. 
We note that the algorithmic stability considered in our paper is orthogonal to the sanity checks designed in \citet{adebayo2018sanity}, as the algorithmic stability analysis is performed while leaving the distribution $p_{y|\mathbf{x}}$ of training data unchanged.  
Similarly, the algorithmic stability notion in our work is independent of the robustness of interpretation maps to adversarial perturbations studied by \citet{ghorbani2019interpretation} and the insensitivity of the interpretation maps to unrelated features discussed in \citet{kindermans2019reliability}, as our defined algorithmic stability implies neither robustness of the map to adversarially-designed perturbations nor its insensitivity to semantically unassociated features. 


\section{PRELIMINARIES}

\subsection{Supervised Learning and Neural Network Optimization}
\label{subsec:preliminary}
In this work, we consider a classification task with a neural network classifier. The supervised learner has access to the training set $S=\{(x_i,y_i)\}_{i=1}^n$ containing $n$ samples, independently drawn from a population distribution $P_{X,Y}$. We use $m$ to denote the dimension of input feature vector $x\in \mathcal X\subset \mathbb R^m$. Also, $c$ denotes the number of classes in the classification task, i.e, $y\in\{1,2,\dots,c\}$. We apply a gradient-based training algorithm to learn the parameters of the neural network with the training set $S$. 


Our analysis focuses on a class of $k$-layer neural networks. The prediction function can be represented by $f_{\mathcal W}(x)=W_k\phi(W_{k-1}\phi(\dots\phi(W_1x)))\in \mathbb R^c$, with $\mathcal W$ denoting the vector containing all parameters. We assume $\phi(\cdot)$ is 1-Lipschitz and $\phi(0)=0$. 
The neural network parameters are learned by minimizing the loss function defined over the training set, which is \begin{align}
    \min_{\mathcal W}\frac{1}{n}\sum_{i=1}^n\ell(\mathcal W,x_i,y_i)
\end{align} where the loss function computes the difference between prediction logits and the ground-truth label. This problem formulation is well-known as Empirical Risk Minimization (ERM). 
We assume our network loss function which takes as input the logit and a class label $\ell(o,y):\mathbb R^c\times \mathcal Y\to \mathbb R$ to be 1-Lipschitz. The commonly used cross-entropy loss function satisfies this property.

To train the neural network's parameters, we consider the standard stochastic gradient descent (SGD) optimizer that performs $T$ iterations of uniformly selecting a training data point $(x_i,y_i)$ and using this rule: \begin{align}
    \mathcal W_{t+1}=\mathcal W_{t}-\alpha_t\nabla_{\mathcal W}\ell(\mathcal W_t,x_i,y_i)\label{SGD}
\end{align}

We also consider the noisy stochastic gradient descent (SGD) algorithm, with the following update rule at iteration $t$: 
\begin{align}
    &\mathcal W_{t+1}=\,\mathcal W_{t}-\alpha_t\nabla_{\mathcal W}\widetilde{\ell}(\mathcal W_t,x_i,y_i)\nonumber \\
    \text{where }\;\; &\widetilde{\ell}(\mathcal W,x,y):=\,\mathbb{E}_{\mathcal V\in N(0,\kappa^2I)}[\ell(\mathcal W+\mathcal V,x,y)]\label{smooth-SGD}
\end{align}
In the following proposition, we show the noisy loss function optimized in the above update rule is a smooth function.
\begin{proposition}
\label{prop1}
Suppose that for every $x,y$ the loss function $\ell(\mathcal W,x,y)$ is $L$-Lipschitz with respect to $\mathcal W$. Then, the noisy loss function $\widetilde{\ell}(\mathcal{W},x,y)$ is $\frac L \kappa$-smooth with respect to $\mathcal{W}$.
\end{proposition}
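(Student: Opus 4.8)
The plan is to recognize $\widetilde{\ell}(\cdot,x,y)$ as the convolution of $\ell(\cdot,x,y)$ with an isotropic Gaussian kernel and to exploit the resulting regularization. Let $\varphi$ denote the density of $N(0,\kappa^2 I)$. Changing variables via $u=\mathcal{W}+\mathcal{V}$ gives $\widetilde{\ell}(\mathcal{W},x,y)=\int \ell(u,x,y)\,\varphi(u-\mathcal{W})\,\mathrm{d}u$. Because $\ell$ is $L$-Lipschitz in its first argument it grows at most linearly, so both the integrand and its $\mathcal{W}$-gradient are dominated by integrable functions (a polynomial times a Gaussian), which legitimizes differentiating under the integral. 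Using $\nabla_{\mathcal{W}}\varphi(u-\mathcal{W})=\kappa^{-2}(u-\mathcal{W})\,\varphi(u-\mathcal{W})$ and changing variables back, this yields the Stein-type identity
\[
\nabla_{\mathcal{W}}\widetilde{\ell}(\mathcal{W},x,y)\;=\;\frac{1}{\kappa}\,\mathbb{E}_{\mathcal{U}\sim N(0,I)}\bigl[\ell(\mathcal{W}+\kappa\mathcal{U},x,y)\,\mathcal{U}\bigr],
\]
which in particular shows $\widetilde{\ell}(\cdot,x,y)$ is differentiable. Note this identity differentiates the Gaussian kernel rather than $\ell$, so it does not require $\ell$ itself to be differentiable.

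The next step is to bound $\bigl\|\nabla_{\mathcal{W}}\widetilde{\ell}(\mathcal{W}_1,x,y)-\nabla_{\mathcal{W}}\widetilde{\ell}(\mathcal{W}_2,x,y)\bigr\|$ for arbitrary $\mathcal{W}_1,\mathcal{W}_2$. Subtracting the two instances of the identity above with the same $\mathcal{U}$, the difference equals $\frac{1}{\kappa}\,\mathbb{E}_{\mathcal{U}}\bigl[\bigl(\ell(\mathcal{W}_1+\kappa\mathcal{U},x,y)-\ell(\mathcal{W}_2+\kappa\mathcal{U},x,y)\bigr)\,\mathcal{U}\bigr]$, where the scalar coefficient of $\mathcal{U}$ has absolute value at most $L\|\mathcal{W}_1-\mathcal{W}_2\|$ uniformly in $\mathcal{U}$ by $L$-Lipschitzness (the $\kappa\mathcal{U}$ terms cancel) --- crucially a bounded quantity. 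Projecting onto an arbitrary unit vector $e$ and using that $\langle \mathcal{U},e\rangle\sim N(0,1)$ when $\mathcal{U}\sim N(0,I)$,
\[
\bigl|\langle \nabla_{\mathcal{W}}\widetilde{\ell}(\mathcal{W}_1,x,y)-\nabla_{\mathcal{W}}\widetilde{\ell}(\mathcal{W}_2,x,y),\,e\rangle\bigr|\;\le\;\frac{L\|\mathcal{W}_1-\mathcal{W}_2\|}{\kappa}\,\mathbb{E}\bigl|\langle \mathcal{U},e\rangle\bigr|\;=\;\sqrt{\tfrac{2}{\pi}}\,\frac{L}{\kappa}\,\|\mathcal{W}_1-\mathcal{W}_2\|.
\]
Taking the supremum over unit vectors $e$ and using $\sqrt{2/\pi}<1$ then gives $\bigl\|\nabla_{\mathcal{W}}\widetilde{\ell}(\mathcal{W}_1,x,y)-\nabla_{\mathcal{W}}\widetilde{\ell}(\mathcal{W}_2,x,y)\bigr\|\le\frac{L}{\kappa}\|\mathcal{W}_1-\mathcal{W}_2\|$, i.e.\ $\frac{L}{\kappa}$-smoothness.

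I expect the main obstacle to be keeping the bound free of any dependence on $\dim\mathcal{W}$. The direct estimate $\bigl\|\mathbb{E}_{\mathcal{U}}[(\ell(\mathcal{W}_1+\kappa\mathcal{U},x,y)-\ell(\mathcal{W}_2+\kappa\mathcal{U},x,y))\mathcal{U}]\bigr\|\le L\|\mathcal{W}_1-\mathcal{W}_2\|\,\mathbb{E}\|\mathcal{U}\|$ loses a factor $\mathbb{E}\|\mathcal{U}\|=\Theta(\sqrt{\dim\mathcal{W}})$, which would destroy the claimed rate; the projection-onto-$e$ step is precisely what avoids this, replacing $\mathbb{E}\|\mathcal{U}\|$ by the dimension-free scalar $\mathbb{E}|\langle\mathcal{U},e\rangle|=\sqrt{2/\pi}$. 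The only other point requiring care is the interchange of differentiation and integration, which follows routinely from the linear-growth bound on $\ell$ together with the rapid decay of $\varphi$ and $\nabla\varphi$.
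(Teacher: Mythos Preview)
Your proof is correct and follows essentially the same route as the paper: Stein's identity to express the gradient of the smoothed loss, then the variational characterization $\|v\|=\sup_{\|e\|=1}\langle v,e\rangle$ to obtain a dimension-free bound. The only minor difference is the final estimate---the paper applies Cauchy--Schwarz to $\mathbb{E}\bigl[\tfrac{u^{\top}z}{\sigma^{2}}\,(f(x_1+z)-f(x_2+z))\bigr]$ (yielding constant $1$), whereas your direct pointwise bound $|\ell(\mathcal{W}_1+\kappa\mathcal{U})-\ell(\mathcal{W}_2+\kappa\mathcal{U})|\le L\|\mathcal{W}_1-\mathcal{W}_2\|$ followed by $\mathbb{E}|\langle\mathcal{U},e\rangle|=\sqrt{2/\pi}$ gives a slightly sharper constant.
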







\subsection{Notations}
Throughout the paper, the norm $\Vert 
\cdot\Vert$ refers to the $\ell_2$-norm for both vectors and matrices. We assume that the $\ell_2$-matrix norm of each weight matrix $W_i$ is bounded by $B_i$, and the input image $\ell_2$-norm or $\Vert x\Vert$ is upper bounded by $C$. We let $x_{\text{min}}$ and $x_{\text{max}}$ denote the minimum and maximum possible values of all pixels in $x$, respectively. Thus, $x_{\text{max}} - x_{\text{min}}$ denotes the pixel value range.
To simplify our formula, we use the notation $\Delta_{t, i}$ as the sum of the product of all $(t-i)$-subsets of $B_1,\dots, B_t$. In particular, we define
$\Delta_{t,0}=\prod_{i=1}^tB_i,\: \Delta_{t,1}=\sum_{i=1}^t\prod_{j\ne i,j\le t}B_j$.

\subsection{Saliency Maps}
To interpret the classification output of a trained neural network for input $x$, we use the prediction logits $f_{\mathcal W}(x)$ and are in particular interested in the value of the $y$-th component, where $f_{\mathcal W}(x)$ is the neural network function with parameters $\mathcal W$ and the input $x$. For the training set $S$, we use $\mathcal W=A(S)$ to denote the output of a randomized training algorithm $A$. $\operatorname{Sal}$ denotes a general saliency map algorithm, which takes $A(S)$ as input and can output a gradient-based map $\operatorname{Sal}_{A(S)}(x,y)$ at each labeled data point $(x,y)$.  
In this work, we analyze the following standard gradient-based saliency maps:

\textbf{Simple-Grad.} Simple-Grad~\citep{baehrens2010explain,simonyan2013deep} calculates the gradient of the logit concerning each input pixel: 
\begin{align}
    \operatorname{SimpleGrad}_{A(S)}(x,y)=\nabla_x (f_{A(S)}(x))_y.
\end{align}

\textbf{Smooth-Grad.} Smooth-Grad~\citep{smilkov2017smoothgrad} calculates the mean of the Simple-Grad map evaluated at a perturbed input data with a Gaussian noise
\begin{align}
    &\operatorname{SmoothGrad}_{A(S)}(x,y) \nonumber \\
    =\,&\mathbb{E}_{z\sim N(0,\sigma^2I)}\bigl[\nabla_x (f_{A(S)}(x+z))_y\bigr]
\end{align}

\textbf{Integrated-Gradients.} Integrated-Gradients~\citep{sundararajan2017axiomatic} calculates the integral of the scaled gradient map from a reference point $x_0$ to the given data point $x$: \begin{align}
    &\operatorname{IntegratedGrad}_{A(S)}(x,y) \nonumber \\
    =\,&(x-x_0)\odot \int_0^1\nabla_x (f_{A(S)}(x_0+\alpha(x-x_0)))_y\mathrm{d}\alpha.
\end{align}

\section{ALGORITHMIC STABILITY OF SALIENCY MAPS}

In this work, we study the algorithmic stability of saliency maps. To this end, we first define a loss function for saliency maps \begin{align}
    \ell'(A(S),x,y)=\Vert \operatorname{Sal}_{A(S)}(x,y)-\operatorname{Sal}_{A(D)}(x,y)\Vert,
\end{align} where the reference saliency map is defined as $\operatorname{Sal}_{A(D)}(x,y):=\mathbb{E}_{S,A} [\operatorname{Sal}_{A(S)}(x,y)]$, the expectation of saliency maps across all training datasets $S$ of size $n$ drawn from the underlying data distribution $D$. 
Then we can define the corresponding test loss and training loss by \begin{align}
&L'_D(A(S))=\mathbb{E}_{(x,y)\sim D}[\ell'(A(S),x,y)]\\
&L'_S(A(S))=\mathbb{E}_{(x,y)\sim U(S)}[\ell'(A(S),x,y)] 
\end{align}
Intuitively, a more stable saliency map algorithm to the stochasticity of training data would produce saliency maps with higher similarity for two datasets with only one different sample. Formally, we define the stability error of a saliency map algorithm $\operatorname{Sal}$ in the worst case as follows, where $S$ and $S'$ are two datasets of size $n$ that differ in only one sample and we take the expectation over the randomness of the training algorithm $A$:
\begin{align}
    \epsilon_{\mathrm{stability}}(\operatorname{Sal}):=\sup_{S,S',x,y}\mathbb{E}_{A}\Bigl[&\ell'(A(S),x,y) \nonumber \\
    &-\ell'(A(S'),x,y)\Bigr]
\end{align}
We recall the theorem stating that stability implies generalization in expectation~\citep{bousquet2002stability}. In our setting, we extend the original statement and demonstrate that the generalization error of a saliency map is bounded by the stability error of the saliency map algorithm. This indicates the importance of providing theoretical guarantees for stability error.

\begin{theorem}
\label{thm4.1}
With our defined loss function, we can upper bound the generalization error of the saliency map algorithm by its stability error.
\begin{align*}
    &\mathbb{E}_{S,A}[L'_D(A(S))-L'_S(A(S))]\le \epsilon_{\mathrm{stability}}(\operatorname{Sal})
\end{align*}
\end{theorem}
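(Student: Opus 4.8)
The plan is to adapt the classical ghost-sample (symmetrization) argument of Bousquet and Elisseeff \cite{bousquet2002stability} to the saliency loss $\ell'$. The first observation I would make is that the reference map $\operatorname{Sal}_{A(D)}(x,y)$ is a \emph{fixed} (deterministic) function of $(x,y)$, since the expectation $\mathbb{E}_{S,A}[\operatorname{Sal}_{A(S)}(x,y)]$ defining it has already been taken; hence $\ell'(A(S),x,y)=\Vert\operatorname{Sal}_{A(S)}(x,y)-\operatorname{Sal}_{A(D)}(x,y)\Vert$ has exactly the structural form of a per-sample loss evaluated at $A(S)$ plus a constant offset, so the standard stability-implies-generalization template applies verbatim with $\ell'$ in place of the usual loss.

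Next I would introduce an independent ghost sample $S'=\{(x_i',y_i')\}_{i=1}^n$ drawn i.i.d.\ from $D$, independent of $S$ and of the internal randomness of $A$, and for each $i$ define $S^{(i)}$ to be $S$ with its $i$-th entry $(x_i,y_i)$ replaced by $(x_i',y_i')$. Since $(x_i',y_i')$ is a fresh draw from $D$ independent of $A(S)$, we have $\mathbb{E}_{S,A}[L'_D(A(S))]=\mathbb{E}_{S,S',A}[\ell'(A(S),x_i',y_i')]$ for every $i$, and then by exchangeability (the measure-preserving bijection that swaps the $i$-th coordinate of $S$ with the $i$-th ghost coordinate, leaving the randomness of $A$ untouched) $\mathbb{E}_{S,S',A}[\ell'(A(S),x_i',y_i')]=\mathbb{E}_{S,S',A}[\ell'(A(S^{(i)}),x_i,y_i)]$. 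Averaging over $i$ and comparing with $\mathbb{E}_{S,A}[L'_S(A(S))]=\frac1n\sum_{i=1}^n\mathbb{E}_{S,A}[\ell'(A(S),x_i,y_i)]$, the generalization gap becomes $\frac1n\sum_{i=1}^n\mathbb{E}_{S,S',A}\big[\ell'(A(S^{(i)}),x_i,y_i)-\ell'(A(S),x_i,y_i)\big]$.

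Finally, for each fixed $i$ and each realization of $(S,S')$, the datasets $S^{(i)}$ and $S$ differ in exactly one sample, so by the definition of $\epsilon_{\mathrm{stability}}(\operatorname{Sal})$ the inner quantity $\mathbb{E}_A\big[\ell'(A(S^{(i)}),x_i,y_i)-\ell'(A(S),x_i,y_i)\big]$ is at most $\epsilon_{\mathrm{stability}}(\operatorname{Sal})$; here I use that the supremum in the definition ranges over all ordered pairs of one-sample-apart datasets together with all $(x,y)$, and that $\mathbb{E}_A$ of a difference is the difference of the $\mathbb{E}_A$'s, so the coupling between the two runs of $A$ is immaterial. Taking $\mathbb{E}_{S,S'}$ of this bound and substituting into every term of the average yields $\mathbb{E}_{S,A}[L'_D(A(S))-L'_S(A(S))]\le\epsilon_{\mathrm{stability}}(\operatorname{Sal})$.

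I expect the only delicate point to be careful bookkeeping of the order and independence of the three sources of randomness ($S$, the ghost $S'$, and the algorithm $A$) when invoking exchangeability, and confirming that the given definition of $\epsilon_{\mathrm{stability}}$ — stated as a supremum of a single signed difference rather than of an absolute value — is precisely what is needed after the renaming step; both are routine once the ghost-sample decomposition above is set up.
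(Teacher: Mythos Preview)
Your proposal is correct and follows essentially the same ghost-sample symmetrization argument as the paper's proof: both introduce an independent copy $S'$, form the replace-one datasets $S^{(i)}$, rewrite the generalization gap as an average of one-sample differences via exchangeability, and then bound each term by the supremum defining $\epsilon_{\mathrm{stability}}$. The only cosmetic difference is that the paper applies the swap to rewrite the training loss (evaluating the final difference at the ghost point $(x_i',y_i')$) whereas you apply it to rewrite the test loss (evaluating at $(x_i,y_i)$); both are equivalent instances of the same renaming.
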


\begin{proof}
We defer the proof to the Appendix.
\end{proof}

In this section, we then focus on the stability of some commonly used saliency map algorithms, aiming to discover what factors make a saliency map algorithm more stable.

\subsection{Stability of Simple-Grad}

To prove a saliency map algorithm is stable, we consider the $\ell_2$ difference of the predicted saliency maps when the two training sets $S$ and $S'$ only differ at one data point. To upper bound the stability error, we analyze that different loss functions of saliency maps, $\ell'(\cdot)$, are bounded and Lipschitz. 

Similar to \citet{hardt2016train}, we consider using the stochastic gradient descent (SGD) algorithm (can be vanilla or noisy version) and need some assumptions for the training loss function.

\begin{assumption}
\label{assumption}

(a) The loss function $\ell(\cdot,x,y)$ is $L$-Lipschitz for all $x,y$, that is, for any $\mathcal V,\mathcal W$ we have  \begin{align*}
    \Vert\ell(\mathcal V,x,y)-\ell(\mathcal W,x,y)\Vert\le L\Vert \mathcal V-\mathcal W\Vert
\end{align*}

(b) The loss function $\ell(\cdot,x,y)$ is $\beta$-smooth for all $x,y$, that is, for any $\mathcal V,\mathcal W$ we have  
\begin{align*}
    \Vert\nabla_{\mathcal V}\ell(\mathcal V,x,y)-\nabla_{\mathcal W}\ell(\mathcal W,x,y)\Vert\le \beta\Vert \mathcal V-\mathcal W\Vert
\end{align*}
\end{assumption}

\begin{remark}
(1) For noisy SGD introduced in Equation \ref{smooth-SGD}, we can upper bound $L\le \Delta_{k,1}C$ due to lemma \ref{lemma:lipschitz_smooth}, and upper bound $\beta\le \frac{L}{\kappa}$ according to Proposition \ref{prop1}.

(2) For regular SGD, the upper bound for $L$ still holds. However, based on lemma \ref{lemma:lipschitz_smooth}, we need to additionally assume both the activation function $\phi(\cdot)$ and the loss function which takes as input the logit $\ell(\cdot,y)$ for any $y$ are 1-smooth to show $\beta\le (3k+1)\Delta_{k,1}^2C^2$.
\end{remark}


The following theorem upper bounds the stability error of Simple-Grad in terms of parameters of the SGD algorithm ($T$ and $c$), the Lipschitz and smoothness constants of the loss function ($L$ and $\beta$), and the quantity related to the neural network ($\Gamma$). Here we refer to both the vanilla and noisy versions as the SGD algorithm. 


\begin{theorem}
\label{thm4.2}
Suppose Assumption \ref{assumption}(a) and \ref{assumption}(b) hold. If we run SGD for $T$ steps with a decaying step size $\alpha_t\le c/t$ in iteration $t$, by defining $\Gamma=(2\Delta_{k,0}\sum_{i=1}^{k-1}\Delta_{i,1}C)^{1/(\beta c+1)} (2\Delta_{k,0})^{\beta c/(\beta c+1)}$, we can bound the stability error of Simple-Grad by
\begin{align*}
    \epsilon_{\mathrm{stability}}(\operatorname{SimpleGrad})&\le \operatorname{Up}(\operatorname{SimpleGrad}) \\
    &:=\, \frac{1+\beta c}{n-1}(2cL)^{\frac{1}{\beta c+1}}T^{\frac{\beta c}{\beta c+1}}\Gamma
\end{align*}
\end{theorem}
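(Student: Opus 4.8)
The plan is to adapt the algorithmic-stability argument of \cite{hardt2016train} to the saliency-map loss $\ell'$, in four steps. The first is a reduction to parameter drift: since $\operatorname{Sal}_{A(D)}(x,y)=\mathbb{E}_{S,A}[\operatorname{Sal}_{A(S)}(x,y)]$ is a fixed vector (given $x,y,n,D$), the reverse triangle inequality gives, pointwise in the randomness of $A$,
\begin{align*}
\ell'(A(S),x,y)-\ell'(A(S'),x,y)\le\bigl\Vert\operatorname{SimpleGrad}_{A(S)}(x,y)-\operatorname{SimpleGrad}_{A(S')}(x,y)\bigr\Vert .
\end{align*}
So it suffices to bound the expectation of the right-hand side. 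I couple the two SGD runs (vanilla or noisy) on $S$ and $S'$ so that they use the same sequence of sampled indices -- and, in the noisy case, the same Gaussian perturbations inside $\widetilde{\ell}$ -- which makes the single differing sample the only source of divergence; by linearity of expectation this coupling leaves the quantity we must bound unchanged.

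\textbf{Bounds on the Simple-Grad map as a function of $\mathcal W$.} Writing the input-gradient through the chain rule as a matrix product of the weight matrices and the (diagonal) Jacobians of $\phi$ at the layer pre-activations, and using $\Vert W_i\Vert\le B_i$, $\Vert x\Vert\le C$, and the $1$-Lipschitzness (and smoothness) of $\phi$, one obtains two facts in the spirit of Lemma \ref{lemma:lipschitz_smooth}: the map is uniformly bounded, $\Vert\operatorname{SimpleGrad}_{\mathcal W}(x,y)\Vert\le\Delta_{k,0}$, so $\Vert\operatorname{SimpleGrad}_{A(S)}(x,y)-\operatorname{SimpleGrad}_{A(S')}(x,y)\Vert\le 2\Delta_{k,0}=:M$; and it is $\rho$-Lipschitz in $\mathcal W$ with $\rho=2\Delta_{k,0}\sum_{i=1}^{k-1}\Delta_{i,1}C$, obtained by perturbing the product factor by factor -- each weight matrix changes directly, while each activation Jacobian changes Lipschitz-continuously through its pre-activation, whose sensitivity to $\mathcal W$ supplies the $\Delta_{i,1}C$ terms. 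With these, $\Gamma=\rho^{1/(\beta c+1)}M^{\beta c/(\beta c+1)}$.

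\textbf{Growth recursion and optimization.} Let $\delta_t=\Vert\mathcal W_t-\mathcal W'_t\Vert$ under the coupling. Assumption \ref{assumption}(a) bounds each stochastic gradient by $L$, and Assumption \ref{assumption}(b) makes the gradient-step map $(1+\alpha_t\beta)$-expansive (for noisy SGD with $\beta=L/\kappa$, by Proposition \ref{prop1}). Hence at iteration $t$, with probability $1-\tfrac1n$ a common sample is drawn and $\delta_{t+1}\le(1+\alpha_t\beta)\delta_t$, and with probability $\tfrac1n$ the differing sample is drawn and $\delta_{t+1}\le\delta_t+2\alpha_t L$. For any $t_0$, on the event (probability $\ge 1-t_0/n$) that the differing sample is not drawn in the first $t_0$ iterations we have $\delta_{t_0}=0$, and unrolling the expected recursion for $t\ge t_0$ with $\alpha_t\le c/t$, for which $\prod_{s=t+1}^{T}(1+\beta c/s)\le(T/t)^{\beta c}$, bounds $\mathbb{E}[\delta_T\mid\delta_{t_0}=0]$ by a multiple of $(T/t_0)^{\beta c}/n$. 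Splitting the expectation on this event and using $M$ on its complement gives a bound of the form $a\,t_0+b\,t_0^{-\beta c}$; minimizing over $t_0$ and substituting $\Gamma$ yields, after simplification, $\frac{1+\beta c}{n-1}(2cL)^{1/(\beta c+1)}T^{\beta c/(\beta c+1)}\Gamma$, and taking the supremum over $S,S',x,y$ (all bounds being uniform) gives the claim.

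\textbf{Main obstacle.} The recursion and the one-dimensional optimization are the routine Hardt--Recht--Singer part. The delicate step is controlling how the \emph{input}-gradient $\nabla_x(f_{\mathcal W}(x))_y$ varies with the \emph{weights}: one must differentiate through all $k$ layers, express the input-gradient as a matrix product of weights and activation Jacobians, and telescope the perturbation when $\mathcal W\to\mathcal W'$, where each term is $\Delta_{k,0}$ times the $\mathcal W$-sensitivity of a pre-activation (which is where the $\Delta_{i,1}C$ factors, and the smoothness of $\phi$, enter). Getting this constant right -- rather than a crude depth-exponential bound -- is what yields the sharp network-dependent factor $\Gamma$; the exponents $\tfrac{1}{\beta c+1}$ and $\tfrac{\beta c}{\beta c+1}$ then emerge from balancing the bounded regime against the Lipschitz regime in the $t_0$-optimization.
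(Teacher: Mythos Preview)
Your proposal is correct and follows essentially the same route as the paper: first establish that $\operatorname{SimpleGrad}$ is bounded by $\Delta_{k,0}$ (so $M'=2\Delta_{k,0}$) and $L'$-Lipschitz in $\mathcal W$ with $L'=2\Delta_{k,0}\sum_{i=1}^{k-1}\Delta_{i,1}C$ via the layer-by-layer telescoping you describe, then plug these into the Hardt--Recht--Singer coupled-SGD recursion with the $t_0$-split and optimize. The paper packages the second part as a standalone lemma (its Lemma~\ref{lemma:SGD}) giving $\epsilon_{\mathrm{stability}}\le\frac{1+\beta c}{n-1}(2cLL')^{1/(\beta c+1)}(TM')^{\beta c/(\beta c+1)}$, which is exactly your final expression once $\Gamma=(L')^{1/(\beta c+1)}(M')^{\beta c/(\beta c+1)}$ is substituted.
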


\begin{proof}
We defer the proof to the Appendix.
\end{proof}






\subsection{Stability of Smooth-Grad}
For Smooth-Grad with a simple adjustment where we perform normalization to ensure the input to the neural network has an $\ell_2$ norm not exceeding $C$, 
we can prove the following upper bound. 
\begin{theorem}
\label{thm4.4}
Suppose Assumption \ref{assumption}(a) and \ref{assumption}(b) hold. If we run SGD for $T$ steps with a decaying step size $\alpha_t\le c/t$ in iteration $t$, we can bound the stability error of Smooth-Grad by
\begin{align*}
\epsilon_{\mathrm{stability}}(\operatorname{SmoothGrad})&\le \operatorname{Up}(\operatorname{SimpleGrad}) \\
&\times \left(\frac{\Delta_{k,1}}{2\sigma\Delta_{k,0}\sum_{i=1}^{k-1}\Delta_{i,1}}\right)^{\frac{1}{\beta c+1}}
\end{align*}
\end{theorem}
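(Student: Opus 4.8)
The plan is to recycle the stability machinery already developed for Theorem~\ref{thm4.2}, isolating the two quantities through which the bound $\operatorname{Up}(\operatorname{Sal})$ depends on the particular saliency map, and then re-estimating them for Smooth-Grad. Inspecting the proof of Theorem~\ref{thm4.2}, the saliency-map loss $\ell'(\cdot,x,y)$ enters the \cite{hardt2016train}-style SGD-stability argument only through (i) a uniform bound $R_{\operatorname{Sal}}:=\sup_{\mathcal W,x,y}\Vert\operatorname{Sal}_{\mathcal W}(x,y)\Vert$, since then $\ell'(A(S),x,y)\le 2R_{\operatorname{Sal}}$; and (ii) a Lipschitz constant $L_{\operatorname{Sal}}$ of the map $\mathcal W\mapsto\operatorname{Sal}_{\mathcal W}(x,y)$, used to turn the parameter deviation $\mathbb{E}_A\Vert A(S)-A(S')\Vert$ produced by (vanilla or noisy) SGD into a saliency-map deviation. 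Tracking these through the $t_0$-truncation and step-size optimization of \cite{hardt2016train} yields a bound of the form $\operatorname{Up}(\operatorname{Sal})=\frac{1+\beta c}{n-1}(2cL)^{\frac{1}{\beta c+1}}T^{\frac{\beta c}{\beta c+1}}\,L_{\operatorname{Sal}}^{\frac{1}{\beta c+1}}\,(2R_{\operatorname{Sal}})^{\frac{\beta c}{\beta c+1}}$, which in the Simple-Grad case is exactly $\operatorname{Up}(\operatorname{SimpleGrad})$ with $L_{\operatorname{SimpleGrad}}=2\Delta_{k,0}\sum_{i=1}^{k-1}\Delta_{i,1}C$ and $R_{\operatorname{SimpleGrad}}=\Delta_{k,0}$ (the bound on the $\ell_2$-norm of a single row of the network Jacobian at an input of norm at most $C$, using that $\phi$ is $1$-Lipschitz). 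Hence it suffices to prove the two estimates $R_{\operatorname{SmoothGrad}}\le\Delta_{k,0}$ and $L_{\operatorname{SmoothGrad}}\le\Delta_{k,1}C/\sigma$ for the normalized Smooth-Grad map: substituting them and dividing by $\operatorname{Up}(\operatorname{SimpleGrad})$ collapses the $(2R)^{\beta c/(\beta c+1)}$ factors (they are equal) and leaves $\bigl(\tfrac{\Delta_{k,1}C/\sigma}{2\Delta_{k,0}\sum_{i=1}^{k-1}\Delta_{i,1}C}\bigr)^{1/(\beta c+1)}=\bigl(\tfrac{\Delta_{k,1}}{2\sigma\Delta_{k,0}\sum_{i=1}^{k-1}\Delta_{i,1}}\bigr)^{1/(\beta c+1)}$, which is the claimed factor.

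The bound $R_{\operatorname{SmoothGrad}}\le\Delta_{k,0}$ is immediate from Jensen's inequality: $\operatorname{SmoothGrad}_{\mathcal W}(x,y)$ is an average of gradients $\nabla_{x'}(f_{\mathcal W}(x'))_y$ evaluated at normalized inputs of norm at most $C$, each of norm at most $\prod_{i=1}^k B_i=\Delta_{k,0}$, and averaging cannot increase this. This is the same argument as for Simple-Grad, which is why the range factor cancels and only a single power $1/(\beta c+1)$ survives in the final ratio.

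The heart of the argument is the Lipschitz estimate $L_{\operatorname{SmoothGrad}}\le\Delta_{k,1}C/\sigma$, and this is where the $1/\sigma$ gain originates. The key step is to rewrite Smooth-Grad through Stein's identity: since (under the normalization) $\operatorname{SmoothGrad}_{\mathcal W}(x,y)=\nabla_x\,\mathbb{E}_{z\sim N(0,\sigma^2 I)}[(f_{\mathcal W}(x+z))_y]$, differentiating the Gaussian kernel (equivalently, integration by parts) gives $\operatorname{SmoothGrad}_{\mathcal W}(x,y)=\tfrac{1}{\sigma^2}\,\mathbb{E}_z\bigl[z\,(f_{\mathcal W}(x+z))_y\bigr]$, trading a first-order quantity in $x$ for a zeroth-order quantity in $\mathcal W$. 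Subtracting the same identity at $\mathcal W'$ and pairing with an arbitrary unit vector $v$ gives $\langle v,\operatorname{SmoothGrad}_{\mathcal W}(x,y)-\operatorname{SmoothGrad}_{\mathcal W'}(x,y)\rangle=\tfrac{1}{\sigma^2}\,\mathbb{E}_z\bigl[\langle v,z\rangle\bigl((f_{\mathcal W}(x+z))_y-(f_{\mathcal W'}(x+z))_y\bigr)\bigr]$. Now applying Cauchy--Schwarz in $z$ --- rather than pulling $\Vert z\Vert$ out, which would cost $\mathbb{E}\Vert z\Vert\sim\sigma\sqrt m$ --- bounds the right-hand side by $\tfrac{1}{\sigma^2}\sqrt{\mathbb{E}_z\langle v,z\rangle^2}\cdot\sqrt{\mathbb{E}_z\bigl[(f_{\mathcal W}(x+z))_y-(f_{\mathcal W'}(x+z))_y\bigr]^2}$. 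The first factor equals $\sigma$; for the second, the Lipschitzness of the network in its weights (the same estimate behind $L\le\Delta_{k,1}C$ in the Remark, i.e. $\Vert f_{\mathcal W}(u)-f_{\mathcal W'}(u)\Vert\le\Delta_{k,1}\Vert u\Vert\,\Vert\mathcal W-\mathcal W'\Vert$, cf. Lemma~\ref{lemma:lipschitz_smooth}) together with the normalization $\Vert x+z\Vert\le C$ gives at most $\Delta_{k,1}C\,\Vert\mathcal W-\mathcal W'\Vert$. Multiplying, $\Vert\operatorname{SmoothGrad}_{\mathcal W}(x,y)-\operatorname{SmoothGrad}_{\mathcal W'}(x,y)\Vert\le\tfrac{1}{\sigma^2}\cdot\sigma\cdot\Delta_{k,1}C\,\Vert\mathcal W-\mathcal W'\Vert=\tfrac{\Delta_{k,1}C}{\sigma}\Vert\mathcal W-\mathcal W'\Vert$, as needed.

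The step I expect to be the main obstacle is making the $1/\sigma$ scaling (rather than $\sqrt m/\sigma$, or no improvement at all) rigorous: this is exactly what forces the Stein reformulation plus the dimension-free Cauchy--Schwarz, and it is the reason the ``simple adjustment'' renormalizing $x+z$ to $\ell_2$-norm at most $C$ is required --- without it, $\sqrt{\mathbb{E}_z\Vert x+z\Vert^2}=\sqrt{C^2+m\sigma^2}$ would reintroduce the dimension. Everything else is bookkeeping: re-running the \cite{hardt2016train}-type growth recursion with $L_{\operatorname{SmoothGrad}}$ and $R_{\operatorname{SmoothGrad}}$ in place of their Simple-Grad counterparts and simplifying the resulting ratio.
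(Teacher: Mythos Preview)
Your proposal is correct and follows essentially the same route as the paper: establish the range bound $\Vert\operatorname{SmoothGrad}_{\mathcal W}(x,y)\Vert\le\Delta_{k,0}$ by Jensen, establish the Lipschitz bound $\operatorname{lip}_{\mathcal W}(\operatorname{SmoothGrad})\le\Delta_{k,1}C/\sigma$ via Stein's identity followed by the dimension-free Cauchy--Schwarz step $\max_{\Vert u\Vert=1}\mathbb{E}[\langle u,z\rangle\,\cdot\,]\le\sigma\cdot(\cdot)$ combined with $\operatorname{lip}_{\mathcal W}(f_{\mathcal W}(x))\le\Delta_{k,1}C$, and then feed $M'=2\Delta_{k,0}$, $L'=\Delta_{k,1}C/\sigma$ into the general SGD-stability lemma (the paper's Lemma~\ref{lemma:SGD}) and take the ratio with the Simple-Grad constants. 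Your identification of the normalization as the device that keeps the bound dimension-free is exactly how the paper uses it as well.
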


\begin{proof}
We defer the proof to the Appendix.
\end{proof}

Comparing Theorem \ref{thm4.2} and Theorem \ref{thm4.4}, we observe that Smooth-Grad exhibits a lower stability error by multiplying a dimension-free constant smaller than 1. Theorem \ref{thm4.4} also provides a non-asymptotic dimension-free guarantee that as the $\sigma$ value becomes larger, the stability error vanishes to $0$. This suggests that the smoothing filter enhances the algorithmic stability of high-dimensional saliency maps.



\subsection{Fidelity of Gaussian Smoothed Saliency Maps}
We then illustrate that this stability improvement comes at the expense of fidelity. As $\sigma$ increases, the Gaussian-smoothed saliency map diverges further from the ground-truth map.

First, we formally define the fidelity error $\epsilon_{\mathrm{fidelity}}(\operatorname{Sal},\sigma)$ of the $\sigma$-smoothed saliency map algorithm $\operatorname{Sal}$ as the largest possible expected $\ell_2$-distance between the smoothed and unsmoothed saliency maps for any training set $S$, labeled data sample $x,y$, where the expectation is over the randomness of the algorithm. Specifically, fidelity error is defined as:
\begin{align}
    \sup_{S,x,y}\mathbb E_{A}\Bigl[\Big\Vert &\mathbb{E}_{z\sim N(0,\sigma^2I)}\left[\operatorname{Sal}_{A(S)}(x+z,y)\right] \nonumber \\
    &-\operatorname{Sal}_{A(S)}(x,y)\Big\Vert\Bigr]
\end{align}

We present the following proposition to upper bound the fidelity error of Smooth-Grad and the smoothed version of Integrated-Grad, indicating the fidelity error grows with the Gaussian smooth factor $\sigma$. 
\begin{proposition}
\label{thm5.1}
    The fidelity error of both Smooth-Grad and the smoothed version of Integrated-Grad increase accordingly with the Guassian smoothing factor $\sigma$. In particular, by defining $\Lambda=\Delta_{k,0}\sum_{i=1}^{k-1}\Delta_{i,0}$, we have \begin{align*}
        \epsilon_{\mathrm{fidelity}}(\operatorname{SimpleGrad},\sigma)&\le \Lambda\sigma\sqrt{m}\\
        \epsilon_{\mathrm{fidelity}}(\operatorname{IntegratedGrad},\sigma)&\le \Lambda(3C+1)\sigma\sqrt{m}
    \end{align*}
\end{proposition}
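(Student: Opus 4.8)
The plan is to reduce the fidelity error to a Lipschitz bound on the plain gradient map $g_{\mathcal W,y}(x):=\nabla_x (f_{\mathcal W}(x))_y$ and then average that bound against the Gaussian perturbation. Fix a weight vector $\mathcal W=A(S)$ and a labeled point $(x,y)$. By the triangle inequality for the expectation over $z$ (Jensen), $\bigl\Vert\mathbb E_{z\sim N(0,\sigma^2 I)}[\operatorname{Sal}_{\mathcal W}(x+z,y)]-\operatorname{Sal}_{\mathcal W}(x,y)\bigr\Vert\le \mathbb E_z\bigl\Vert\operatorname{Sal}_{\mathcal W}(x+z,y)-\operatorname{Sal}_{\mathcal W}(x,y)\bigr\Vert$, so it suffices to control the right-hand side for each map and then take the supremum over $S,x,y$. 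I will repeatedly use the elementary estimate $\mathbb E_{z\sim N(0,\sigma^2 I)}\Vert z\Vert\le\sqrt{\mathbb E_z\Vert z\Vert^2}=\sigma\sqrt m$.

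The crux, which I expect to be the main obstacle, is the following deterministic estimate: $g_{\mathcal W,y}$ is bounded, $\Vert g_{\mathcal W,y}(x)\Vert\le\Delta_{k,0}$, and $\Lambda$-Lipschitz in $x$ with $\Lambda=\Delta_{k,0}\sum_{i=1}^{k-1}\Delta_{i,0}$. To see this I would write, via the chain rule, $g_{\mathcal W,y}(x)=W_1^{\top}D_1 W_2^{\top}D_2\cdots W_{k-1}^{\top}D_{k-1}w_{k,y}$, where $w_{k,y}$ is the $y$-th row of $W_k$ and $D_i=\mathrm{diag}\bigl(\phi'(\cdot)\bigr)$ is the diagonal Jacobian of the $i$-th activation at the $i$-th pre-activation. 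The norm bound is immediate from $\Vert W_i\Vert\le B_i$ and $\Vert D_i\Vert\le 1$ (since $\phi$ is $1$-Lipschitz). For the Lipschitz bound, I would perturb $x\mapsto x'$ and use the telescoping identity for products: $g_{\mathcal W,y}(x)-g_{\mathcal W,y}(x')$ is a sum over $i=1,\dots,k-1$ of terms in which only the factor $D_i$ is replaced by $D_i(x)-D_i(x')$. The $i$-th pre-activation is $\Delta_{i,0}$-Lipschitz in $x$, and (this is where one additionally needs $\phi$ to be $1$-smooth, exactly as in the remark after Assumption~\ref{assumption}) $\phi'$ is $1$-Lipschitz, hence $\Vert D_i(x)-D_i(x')\Vert\le\Delta_{i,0}\Vert x-x'\Vert$; bounding the remaining weight and activation factors by the $B_j$'s shows the $i$-th term is at most $\Delta_{k,0}\Delta_{i,0}\Vert x-x'\Vert$, and summing over $i$ gives $\Lambda$. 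The only real work is checking that the per-layer constants multiply out to exactly $\Delta_{k,0}\sum_i\Delta_{i,0}$; if Lemma~\ref{lemma:lipschitz_smooth} already records such a smoothness statement for $f_{\mathcal W}$, I would simply invoke it.

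With the lemma in hand the Simple-Grad (hence Smooth-Grad) bound follows at once: $\mathbb E_z\Vert g_{\mathcal W,y}(x+z)-g_{\mathcal W,y}(x)\Vert\le\Lambda\,\mathbb E_z\Vert z\Vert\le\Lambda\sigma\sqrt m$, and taking the supremum over $S,x,y$ finishes that case. For Integrated-Gradients I would expand $\operatorname{IntegratedGrad}_{\mathcal W}(x+z,y)-\operatorname{IntegratedGrad}_{\mathcal W}(x,y)$ as the sum of (i) $z\odot\int_0^1 g_{\mathcal W,y}\bigl(x_0+\alpha(x+z-x_0)\bigr)\,\mathrm d\alpha$ and (ii) $(x-x_0)\odot\int_0^1\bigl(g_{\mathcal W,y}(x_0+\alpha(x+z-x_0))-g_{\mathcal W,y}(x_0+\alpha(x-x_0))\bigr)\mathrm d\alpha$. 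Term (i) is at most $\Delta_{k,0}\Vert z\Vert$ using $\Vert a\odot b\Vert\le\Vert a\Vert\Vert b\Vert$ together with the norm bound on $g_{\mathcal W,y}$; in term (ii) the two gradient arguments differ by $\alpha z$, so the $\Lambda$-Lipschitz property and the same Hadamard inequality bound it by $\Vert x-x_0\Vert\cdot\Lambda\Vert z\Vert$, where $\Vert x-x_0\Vert\le 2C$ since $\Vert x\Vert\le C$ and the baseline $x_0$ is taken in the input domain. Adding the two pieces, taking $\mathbb E_z$, using $\mathbb E_z\Vert z\Vert\le\sigma\sqrt m$, and taking the supremum over $S,x,y$ yields a bound of the advertised form $\Lambda(3C+1)\sigma\sqrt m$, the explicit constant $3C+1$ absorbing the numerical factors from these steps. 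Apart from tracking the Hadamard product and the $(x-x_0)$ prefactor, nothing here is delicate once the Lipschitz estimate for $g_{\mathcal W,y}$ is established, which is why I regard that estimate as the main step.
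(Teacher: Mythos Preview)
Your proposal is correct and follows essentially the same approach as the paper: establish that $g_{\mathcal W,y}$ is $\Lambda$-Lipschitz in $x$ (the paper does this by an induction on layers rather than your product telescope, but the content is identical), then average against the Gaussian using $\mathbb E\Vert z\Vert\le\sigma\sqrt m$; for Integrated-Gradients, split into a ``change of prefactor'' term controlled by $\Vert g\Vert\le\Delta_{k,0}$ and a ``change of gradient'' term controlled by the Lipschitz constant. The only minor difference is the telescoping order: the paper keeps $(x+z-x_0)$ in the gradient-difference term, incurring an extra $\Vert z\Vert\cdot C$ contribution that produces the $3C$, whereas your split keeps $(x-x_0)$ there and yields the slightly sharper $(\Delta_{k,0}+2C\Lambda)\sigma\sqrt m$, which is indeed dominated by $\Lambda(3C+1)\sigma\sqrt m$ under the standing assumption $B_i\ge 1$.
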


\begin{proof}
We defer the proof to the Appendix.
\end{proof}

\section{NUMERICAL EXPERIMENTS}

\begin{figure*}[!ht]
	\centering
    \includegraphics[width=0.95\linewidth]{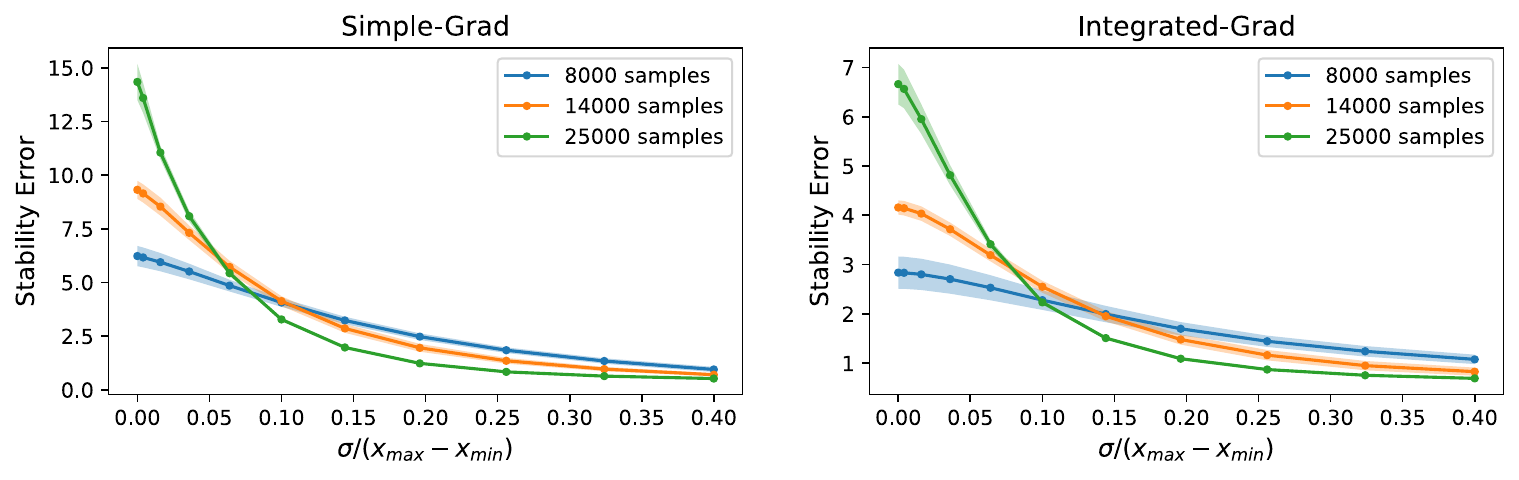}
    \vspace{-0.2cm}
	\caption{Relationship between stability error of saliency maps and sigma, evaluated on the CIFAR10 test set. The shaded region indicates one standard deviation from the mean value. Here noise level $x_{\text{max}}-x_{\text{min}}$ represents the largest value range for all pixels, following the practice of ~\citet{smilkov2017smoothgrad}.}
	\label{fig:cifar10_plot_sigma}
\end{figure*}

\begin{figure*}[t]
	\centering
	\includegraphics[width=0.95\linewidth]{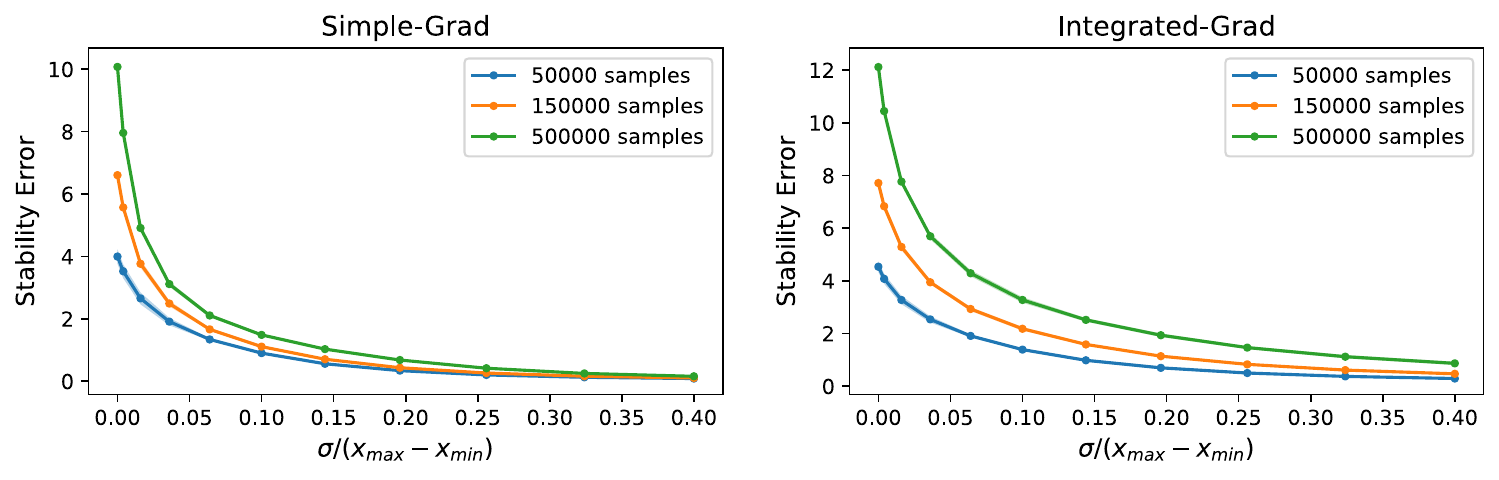}
    \vspace{-0.2cm}
	\caption{Relationship between stability error of saliency maps and sigma, evaluated on the ImageNet test set. The shaded region indicates one standard deviation from the mean value. }
	\label{fig:imagenet_plot_sigma}
\end{figure*}

\begin{figure*}[t]
	\centering
	\includegraphics[width=0.95\linewidth]{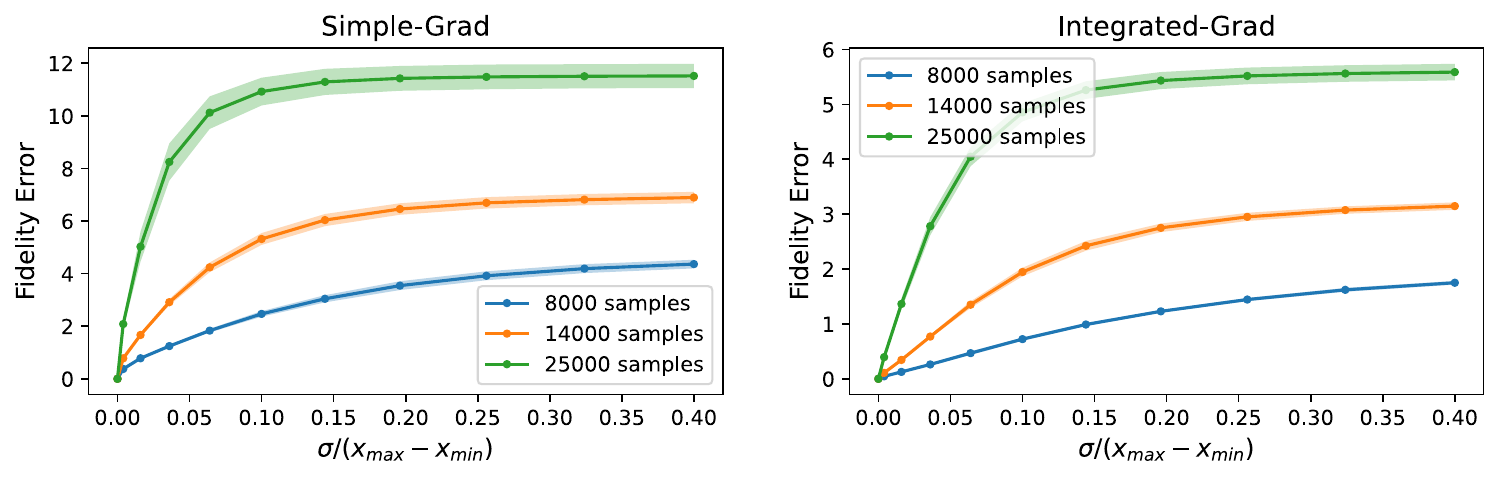}
    \vspace{-0.2cm}
	\caption{The relation between the saliency map fidelity error and sigma. This experiment is conducted on the test set of CIFAR10. The shaded area represents one standard deviation.}
	\label{fig:fidelity_cifar10_plot_sigma}
\end{figure*}

\begin{figure*}[t]
	\centering
	\includegraphics[width=0.95\linewidth]{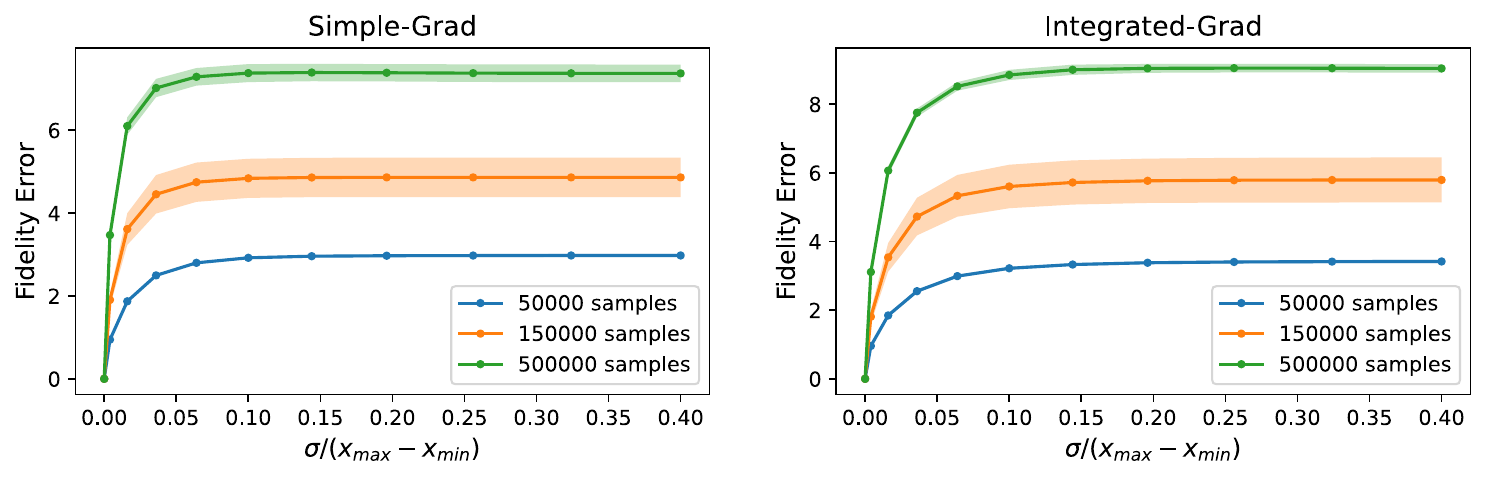}
    \vspace{-0.2cm}
	\caption{The relation between the saliency map fidelity error and sigma. This experiment is conducted on the test set of ImageNet. The shaded area represents one standard deviation.}
	\label{fig:fidelity_imagenet_plot_sigma}
\end{figure*}

The goal of our numerical experiments is to validate the stability-fidelity tradeoff incurred by choosing different smoothing factors $\sigma$.
We evaluated different neural network architectures and different training set sizes on CIFAR10~\citep{Krizhevsky09} and ImageNet~\citep{5206848} datasets. We trained ResNet34~\citep{he2015deep} for CIFAR10 and ResNet50 for ImageNet, due to different dataset scales. More detailed experiment configurations are in Appendix~\ref{appendix:detail}.

\subsection{Saliency Map Stability and Fidelity Numerical Results}
To evaluate the algorithmic stability of saliency maps, we need to perturb one training sample multiple times and train a neural network from scratch for each perturbed training set $S'$. However, this approach is computationally prohibitive. As an alternative, we randomly divide the original training set $S$ into two disjoint subsets $S_1$ and $S_2$ so that the two training subsets can be regarded as independently sampled from the test distribution, and train two neural networks $A(S_1)$ and $A(S_2)$ on them. We then use the average $\ell_2$-difference between two predicted saliency maps as a proxy for stability error, formulated as \begin{align}
    \mathbb{E}_{(x,y)\sim D}\Vert \operatorname{Sal}_{A(S_1)}(x,y)-\operatorname{Sal}_{A(S_2)}(x,y)\Vert
\end{align}

We conducted experiments on CIFAR10 in Figure \ref{fig:cifar10_plot_sigma} and ImageNet in Figure \ref{fig:imagenet_plot_sigma} for different saliency map algorithms as the smoothing factor $\sigma$ changes. Our results confirm that increasing $\sigma$ consistently decreases the stability error of the saliency maps.

To evaluate the fidelity error of a $\sigma$-smoothed saliency map, we similarly calculate
\begin{align}
    \mathbb{E}_{(x,y)\sim D}\mathbb{E}_{i=1}^2\bigl\Vert &\mathbb{E}_{z\sim N(0,\sigma^2I)}[\operatorname{Sal}_{A(S_i)}(x+z,y)] \nonumber \\
    &-\operatorname{Sal}_{A(S_i)}(x,y)\bigr\Vert
\end{align} as a proxy, where $\operatorname{Sal}_{A(S)}(x,y)$ represents a non-smoothed saliency map.

We also performed experiments on CIFAR10 in Figure \ref{fig:fidelity_cifar10_plot_sigma} and ImageNet in Figure \ref{fig:fidelity_imagenet_plot_sigma} to evaluate the fidelity of the saliency maps for different saliency map algorithms as the smoothing factor $\sigma$ varies. The results show that the fidelity error increases consistently as $\sigma$ increases, supporting our theoretical findings.


%
\newcommand{\tablestyle}[2]{\setlength{\tabcolsep}{#1}\renewcommand{\arraystretch}{#2}\centering\footnotesize}

\begin{table*}
\caption{Average SSIM and top-k mIoU scores for saliency maps from two neural networks evaluated on a random subset of the ImageNet test set. Smoothed scores are computed after applying Gaussian smoothing to both saliency maps. We consider various settings including different training sets (DT), different recipes for training (DR), different architectures of neural networks (DA). We use $\sigma/(x_{\mathrm{max}}-x_{\mathrm{min}})=0.15$ and draw $100$ samples to calculate smoothed saliency maps in all settings. \textbf{Bold} represents the better score.}
\label{tab:results}
\centering
\resizebox{\linewidth}{!}{%
\begin{tabular}{ccccccc}
\toprule
\textbf{Setting} & \textbf{Algorithm} & \textbf{Architecture} & \textbf{SSIM} & \textbf{Smoothed SSIM} & \textbf{top-k mIoU} & \textbf{Smoothed top-k mIoU} \\
\midrule
DT & Simple-Grad & ResNet50 & 0.163 & \textbf{0.227} & 0.061 & \textbf{0.122} \\
DT & Integrated-Grad & ResNet50 & 0.270 & \textbf{0.326} & 0.085 & \textbf{0.163} \\
DT & Grad-CAM & ResNet50 & \textbf{0.872} & 0.846 & 0.110 & \textbf{0.158} \\
DT & Grad-CAM++ & ResNet50 & 0.874 & \textbf{0.886} & 0.127 & \textbf{0.221} \\
\midrule
DR & Simple-Grad & ResNet50 & 0.150 & \textbf{0.179} & 0.050 & \textbf{0.092}\\
DR & Integrated-Grad & ResNet50 & 0.259 & \textbf{0.294} & 0.072 & \textbf{0.141}\\
DR & Grad-CAM & ResNet50 & 0.729 & \textbf{0.754} & 0.070 & \textbf{0.165} \\
DR & Grad-CAM++ & ResNet50 & 0.725 & \textbf{0.743} & 0.047 & \textbf{0.174} \\
\midrule
DA & Simple-Grad & ResNet34 \& ResNet50 & 0.141 & \textbf{0.180} & 0.053 & \textbf{0.112} \\
DA & Integrated-Grad & ResNet34 \& ResNet50 & 0.249 & \textbf{0.279} & 0.064 & \textbf{0.150} \\
DA & Simple-Grad & ResNet50 \& Swin-T & 0.095 & \textbf{0.140} & 0.025 & \textbf{0.068} \\
DA & Integrated-Grad & ResNet50 \& Swin-T & 0.172 & \textbf{0.202} & 0.031 & \textbf{0.088} \\
DA & Simple-Grad & ResNet50 \& ConvNeXt-T & 0.102 & \textbf{0.167} & 0.035 & \textbf{0.079} \\
DA & Integrated-Grad & ResNet50 \& ConvNeXt-T & 0.208 & \textbf{0.258} & 0.041 & \textbf{0.109} \\
DA & Simple-Grad & Swin-T \& ConvNeXt-T & \textbf{0.201} & \textbf{0.201} & 0.047 & \textbf{0.100} \\
DA & Integrated-Grad & Swin-T \& ConvNeXt-T & \textbf{0.268} & 0.242 & 0.053 & \textbf{0.121} \\
\bottomrule
\end{tabular}%
}
\end{table*}

\begin{figure*}[t]
\centering
\includegraphics[width=\linewidth]{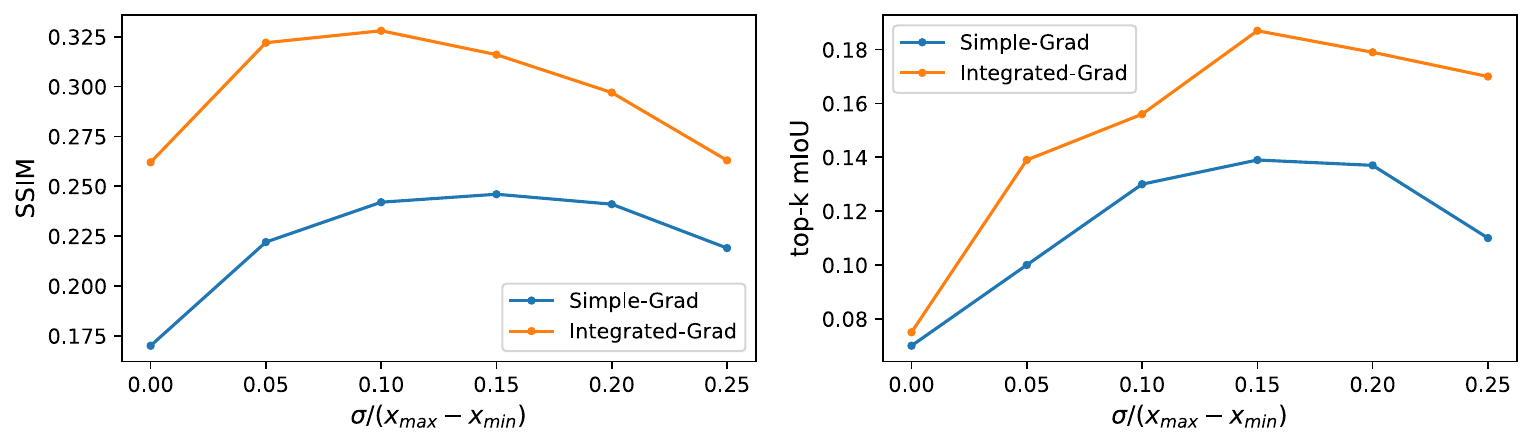}
\vspace{-0.7cm}
\caption{The effect of Gaussian smoothing with different $\sigma$ choices on SSIM and top-k mIoU for Simple-Grad and Integrated-Grad, where the two neural networks are trained on two splits of the ImageNet training set.}
\label{fig:ssim_sigma}
\end{figure*}

\subsection{Visualizations of Saliency Map Stability and Fidelity}
We visualize the differences between two saliency maps when training two neural networks on different training sets. From the results of two neural networks trained on disjoint training subsets of $500,000$ samples from ImageNet in Figure \ref{fig:visualization_Simple}, we observe that Gaussian smoothing makes the saliency maps more similar and thus more stable to the randomness of training samples.



\begin{figure*}[t]
  \centering
  \includegraphics[width=0.8\linewidth]{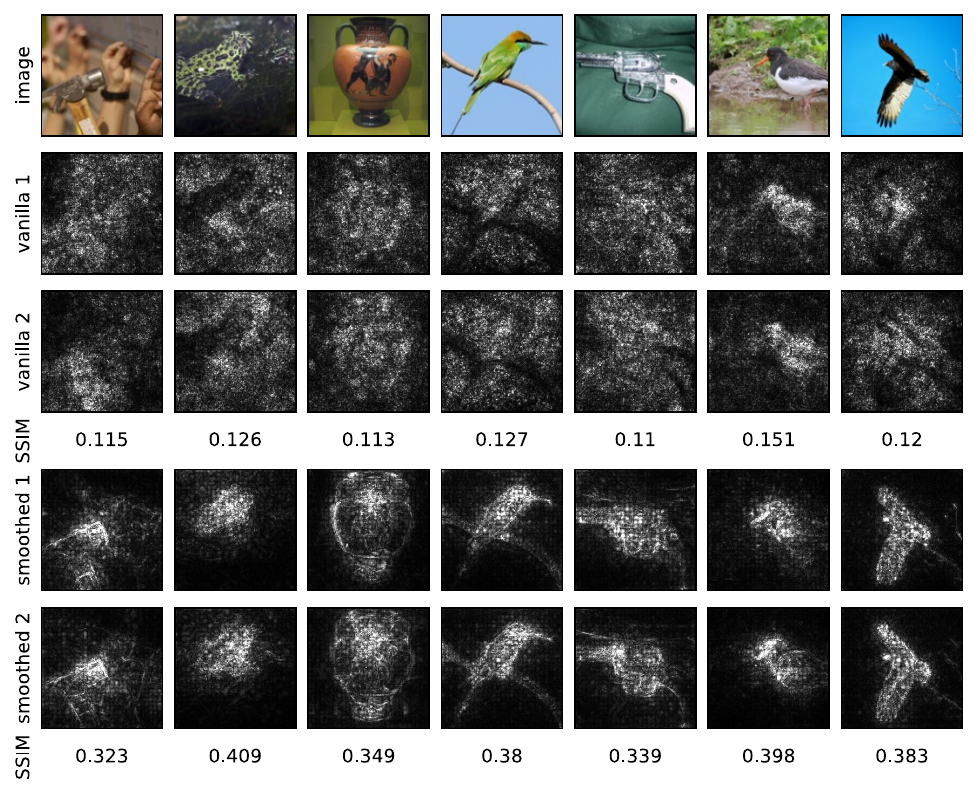}
  \vspace{-0.2cm}
  \caption{Visualization of saliency map differences for two ResNet50 models trained on disjoint subsets of the ImageNet training set, each containing 500,000 random samples. The first row shows the input image. The second and third rows display the Simple-Grad results. The fourth and fifth rows show the Smooth-Grad results with $\sigma/(x_{\mathrm{max}}-x_{\mathrm{min}}) = 0.15$.}
  \label{fig:visualization_Simple}
\end{figure*}

\begin{figure*}[!t]
    \centering
    \begin{minipage}{0.8\linewidth}
        \centering
        \includegraphics[width=1\linewidth]{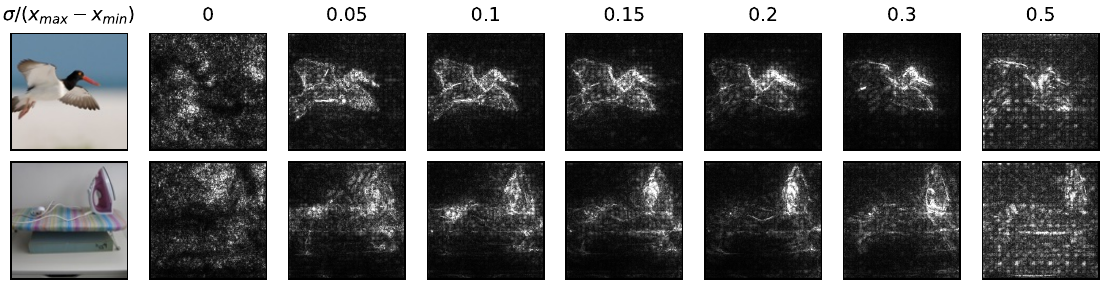}
    \end{minipage}
    \vspace{-0.1cm}
    \caption{Visualization of Simple-Grad becoming less similar to the original unsmoothed map with the Gaussian smoothing as the factor $\sigma/(x_{\mathrm{max}}-x_{\mathrm{min}})$ increases. }
    \vspace{-0.1cm}
    \label{fig:vis_fidelity}
\end{figure*}

Additionally, we visualize the trend that a larger $\sigma$ results in a smoothed saliency map that is less similar to the vanilla version, as demonstrated in Figure \ref{fig:vis_fidelity} for Simple-Grad and in Figure \ref{fig:vis_fidelity_integrated} for Integrated-Grad. 

The visualization results, along with our theoretical findings, indicate that Gaussian smoothing enhances the stability of saliency maps at the expense of fidelity. Therefore, the choice of $\sigma$ in practical applications should be carefully determined by considering this trade-off, since applications of saliency maps could require different levels of the two factors.

\subsection{Empirical Results on Generalized Settings}
Beyond the randomness of training settings, we further examine other sources that could cause instability in interpretation: different training recipes and model architectures. Extensive experiments across various settings, as shown in Table \ref{tab:results}, empirically indicate that the stabilizing effect of Gaussian smoothing also holds in different scenarios. We use the structural similarity index measure  SSIM~\citep{wang2004image} to evaluate the similarity of two saliency maps and a larger SSIM value indicates that the two maps are more perceptually similar. We additionally use top-k mIoU to evaluate the similarity by calculating mIoU between the most salient $k=500$ pixels of two maps.

In the top four rows, we train two ResNet50 models on two disjoint ImageNet training subsets of 500,000 samples each. For all other rows, we use pre-trained checkpoints provided by TorchVision~\citep{torchvision2016}. Since TorchVision provides two ResNet50 checkpoints trained with different schemes, 
we use this pair for the middle four rows of our table. For the ResNet50 models in the bottom rows, we use the checkpoint \texttt{version 1}.

First, when networks are trained on different training sets, Gaussian smoothing makes the resulting saliency maps more similar for both Simple-Grad and Integrated-Grad. For advanced saliency map algorithms such as Grad-CAM~\citep{selvaraju2017grad} and Grad-CAM++~\citep{chattopadhay2018generalized}, the vanilla maps have high SSIM values due to the concentrated and block-like patterns, yet their top-k mIoU are still modest. Applying Gaussian smoothing occasionally causes significant visual changes in one saliency map but not the other, resulting in a slightly lower SSIM value for Grad-CAM. 
Second, the instability caused by different training recipes can be largely mitigated by Gaussian smoothing, with better SSIM and top-k mIoU scores. See the corresponding visualization for Simple-Grad in Figure~\ref{fig:vis_pretrained} in Appendix \ref{appendix:vis}.

Third, the effect of smoothing holds for most cross-architecture experiments. Besides ResNet, we also evaluated more recent architectures such as Swin Transformer~\citep{liu2021Swin} and ConvNeXt~\citep{liu2022convnet}. See Figures \ref{fig:vis_arch}, \ref{fig:vis_arch2}, and \ref{fig:vis_arch3} for corresponding visualizations in Appendix \ref{appendix:vis}. Finally, saliency map algorithms like Grad-CAM and Grad-CAM++ also enjoy improved stability through Gaussian smoothing in some scenarios according to Table~\ref{tab:results}. See Figures \ref{fig:vis_Integrated}, \ref{fig:vis_gradcam}, and \ref{fig:vis_gradcam++} for visualizations of Integrated-Grad, Grad-CAM, and Grad-CAM++ in Appendix \ref{appendix:vis}. Figure~\ref{fig:ssim_sigma} shows that the highest stabilizing effect of Gaussian smoothing can be achieved with a proper choice of $\sigma$. 

\section{CONCLUSION}
In this work, we analyzed the stability fidelity trade-off in the widely-used Smooth-Grad interpretation map. Our results indicate the impact of Gaussian smoothing on the stability of saliency maps to the stochasticity of the training setting. On the other hand, we show Gaussian smoothing could lead to a higher discrepancy with the original gradient map. Therefore, Gaussian smoothing needs to be utilized properly, depending on the target application's aimed stability and fidelity scores. Performing an analytical study of the effects of Gaussian smoothing on the sanity checks in \citet{adebayo2018sanity} is an interesting future direction to our work. While \citet{adebayo2018sanity} numerically shows that Smooth-Grad passes the sanity checks, an analysis of the $\sigma$ parameter in Gaussian smoothing and Smooth-Grad's performance in the sanity checks will be a relevant topic for future study. 


\clearpage
\clearpage
\section*{Acknowledgments}
 
The work of Farzan Farnia is partially supported by a grant from the Research Grants Council of the Hong Kong Special Administrative Region, China, Project 14209920, and is partially supported by CUHK Direct Research Grants with CUHK Project No. 4055164 and 4937054. 
Also, the authors would like to thank the anonymous reviewers for their constructive feedback and suggestions.
{
\bibliography{ref}
}

\section*{Checklist}



 \begin{enumerate}

 \item For all models and algorithms presented, check if you include:
 \begin{enumerate}
   \item A clear description of the mathematical setting, assumptions, algorithm, and/or model. [Yes]
   \item An analysis of the properties and complexity (time, space, sample size) of any algorithm. [Yes]
   \item (Optional) Anonymized source code, with specification of all dependencies, including external libraries. [No]
 \end{enumerate}

 \item For any theoretical claim, check if you include:
 \begin{enumerate}
   \item Statements of the full set of assumptions of all theoretical results. [Yes]
   \item Complete proofs of all theoretical results. [Yes]
   \item Clear explanations of any assumptions. [Yes]     
 \end{enumerate}

 \item For all figures and tables that present empirical results, check if you include:
 \begin{enumerate}
   \item The code, data, and instructions needed to reproduce the main experimental results (either in the supplemental material or as a URL). [Yes]
   \item All the training details (e.g., data splits, hyperparameters, how they were chosen). [Yes]
         \item A clear definition of the specific measure or statistics and error bars (e.g., with respect to the random seed after running experiments multiple times). [Yes]
         \item A description of the computing infrastructure used. (e.g., type of GPUs, internal cluster, or cloud provider). [Yes]
 \end{enumerate}

 \item If you are using existing assets (e.g., code, data, models) or curating/releasing new assets, check if you include:
 \begin{enumerate}
   \item Citations of the creator If your work uses existing assets. [Yes]
   \item The license information of the assets, if applicable. [Not Applicable]
   \item New assets either in the supplemental material or as a URL, if applicable. [Not Applicable]
   \item Information about consent from data providers/curators. [Not Applicable]
   \item Discussion of sensible content if applicable, e.g., personally identifiable information or offensive content. [Not Applicable]
 \end{enumerate}

 \item If you used crowdsourcing or conducted research with human subjects, check if you include:
 \begin{enumerate}
   \item The full text of instructions given to participants and screenshots. [Not Applicable]
   \item Descriptions of potential participant risks, with links to Institutional Review Board (IRB) approvals if applicable. [Not Applicable]
   \item The estimated hourly wage paid to participants and the total amount spent on participant compensation. [Not Applicable]
 \end{enumerate}

 \end{enumerate}

\newpage
\appendix
\renewcommand{\thetheorem}{\Alph{section}.\arabic{theorem}}
\setcounter{theorem}{0}
\renewcommand{\thelemma}{\Alph{section}.\arabic{lemma}}
\setcounter{lemma}{0}
\onecolumn
\section{More Results}
\label{appendix:vis}
Due to the space limit of the main text, we put more results here. 
Figure~\ref{fig:vis_fidelity_integrated} visualizes the trend of less faithfulness to the original saliency map with increasing $\sigma$.
Figure \ref{fig:vis_pretrained} displays two ResNet50 checkpoints trained with different recipes. 
Figure \ref{fig:vis_arch}, \ref{fig:vis_arch2}, \ref{fig:vis_arch3} show our findings extend to the case where two neural networks use different architectures (and naturally the training recipes also differ).
Figure \ref{fig:vis_Integrated}, \ref{fig:vis_gradcam}, \ref{fig:vis_gradcam++} shows the stabilizing effect of Gaussian smoothing for Integrated-Grad, Grad-CAM, and Grad-CAM++, respectively. 
These visualizations, together with Table~\ref{tab:results} in the main text, provide empirical evidence that Gaussian smoothing can generally stabilize saliency maps across various settings.

\begin{figure*}[!ht]
    \centering
    \begin{minipage}{0.8\linewidth}
        \centering
        \includegraphics[width=1\linewidth]{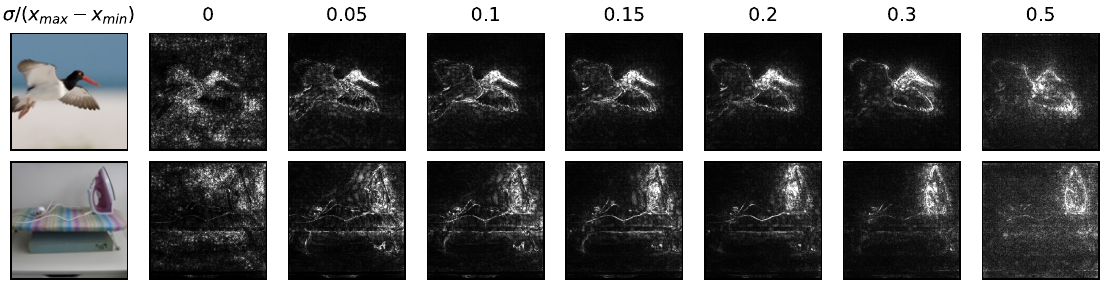}
    \end{minipage}
    \caption{Visualization of Integrated-Grad becoming less similar to the original unsmoothed map with the Gaussian smoothing as the factor $\sigma$ increases. }
    \label{fig:vis_fidelity_integrated}
\end{figure*}

\begin{figure*}[!ht]
  \vspace{-0.2cm}
  \centering
  \includegraphics[width=0.7\textwidth]{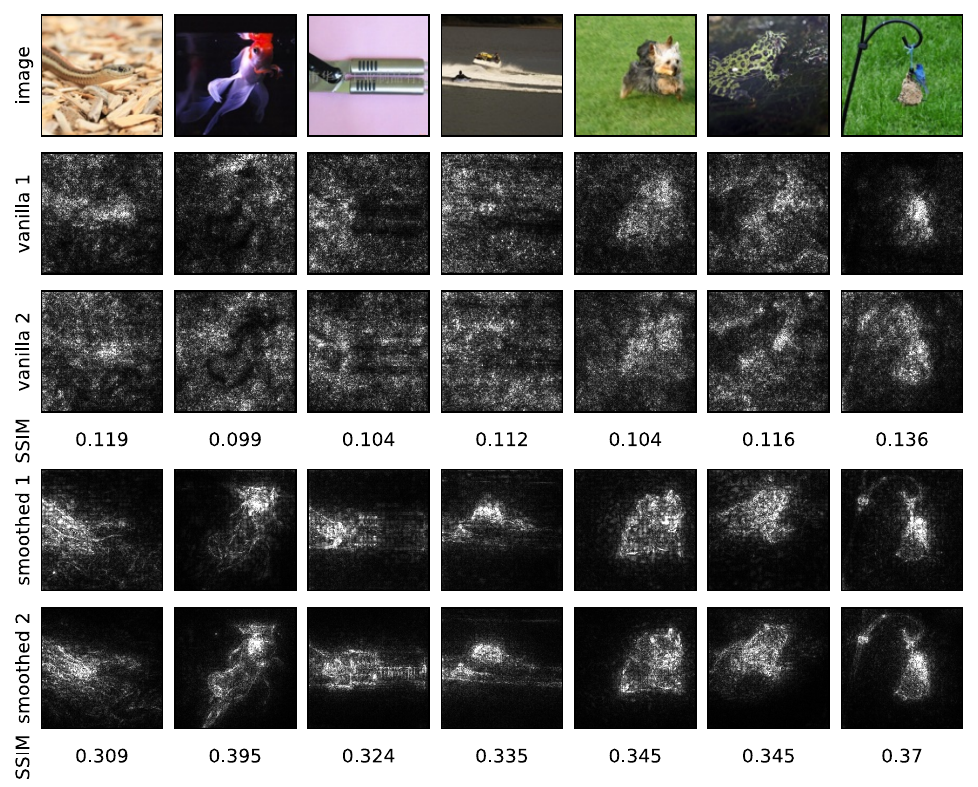}
  \vspace{-0.4cm}
  \caption{Visualization of saliency maps of two pre-trained ResNet50 checkpoints provided by TorchVision~\citep{torchvision2016}. The first row is the input image. The second and third rows are Simple-Grad results. The fourth and fifth rows are Smooth-Grad with $\sigma/(x_{\mathrm{max}}-x_{\mathrm{min}})=0.15$.}
  \label{fig:vis_pretrained}
  \vspace{-0.45cm}
\end{figure*}
\begin{figure*}[!ht]
  \centering
  \includegraphics[width=0.7\textwidth]{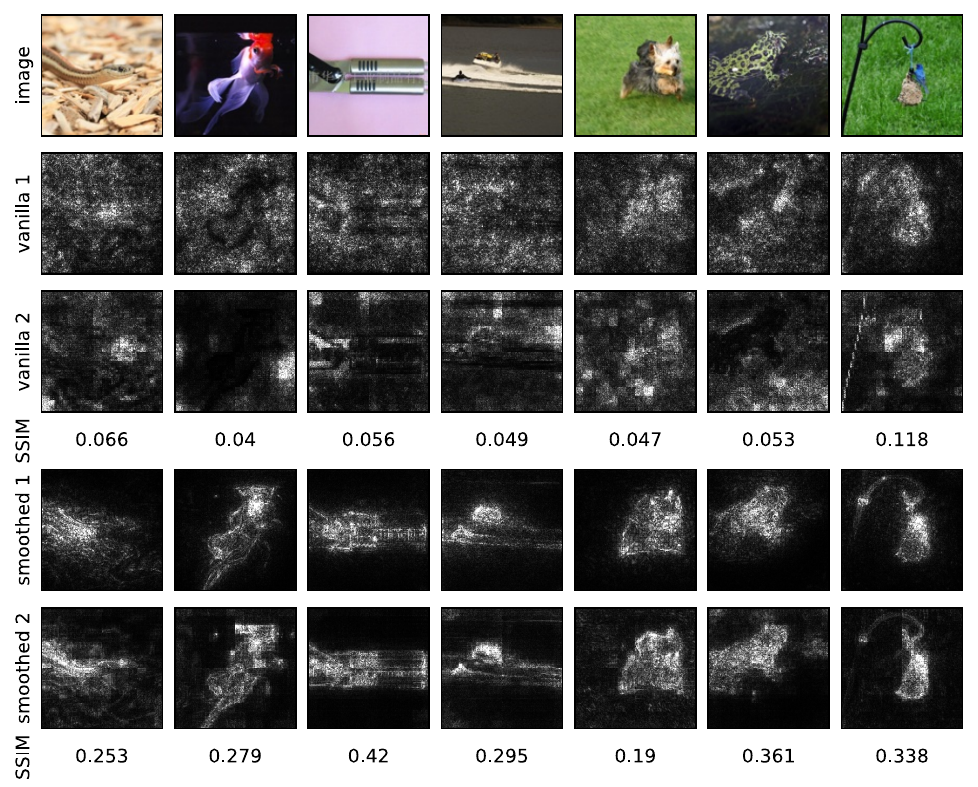}
  \vspace{-0.4cm}
  \caption{Visualization of saliency maps of pre-trained ResNet50 and Swin-T checkpoints provided by TorchVision~\citep{torchvision2016}. The first row is the input image. The second and third rows are Simple-Grad results. The fourth and fifth rows are Smooth-Grad with $\sigma/(x_{\mathrm{max}}-x_{\mathrm{min}})=0.15$.}
  \label{fig:vis_arch}
  \vspace{-0.5cm}
\end{figure*}
\begin{figure*}[!ht]
  \vspace{-0.2cm}
  \centering
  \includegraphics[width=0.7\textwidth]{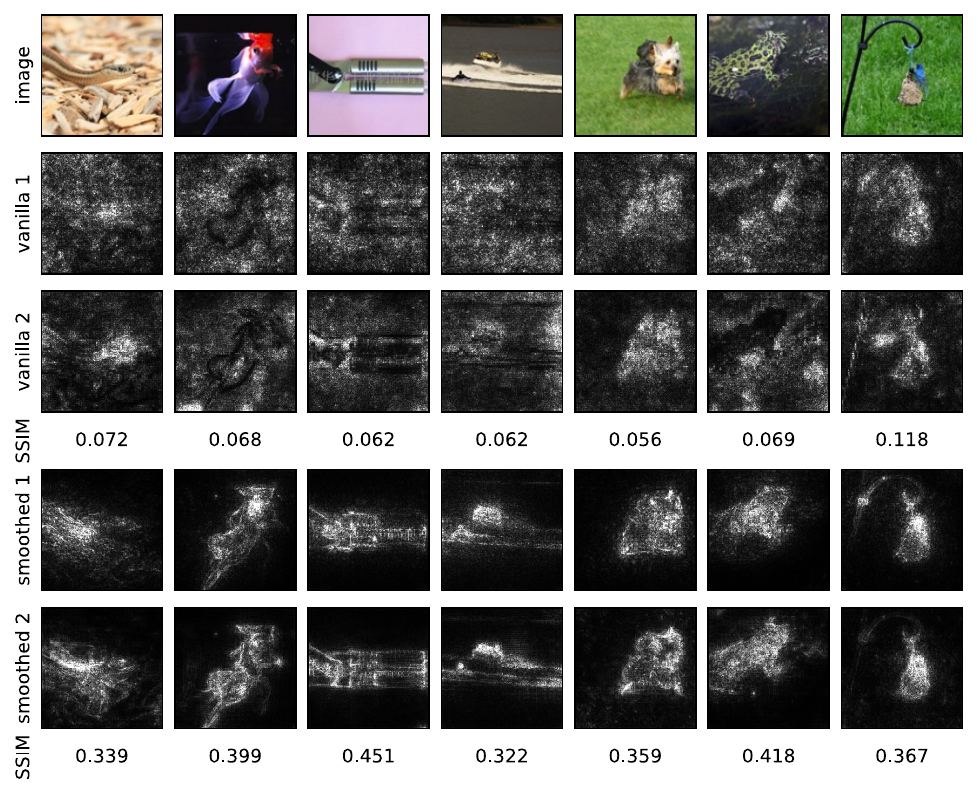}
  \vspace{-0.4cm}
  \caption{Visualization of saliency maps differences of pre-trained ResNet50 and ConvNeXt-tiny checkpoints provided by TorchVision~\citep{torchvision2016}. The first row is the input image. The second and third rows are Simple-Grad results. The fourth and fifth rows are Smooth-Grad with $\sigma/(x_{\mathrm{max}}-x_{\mathrm{min}})=0.15$.}
  \label{fig:vis_arch2}
  \vspace{-0.5cm}
\end{figure*}

\begin{figure*}[!ht]
  \vspace{-0.2cm}
  \centering
  \includegraphics[width=0.7\textwidth]{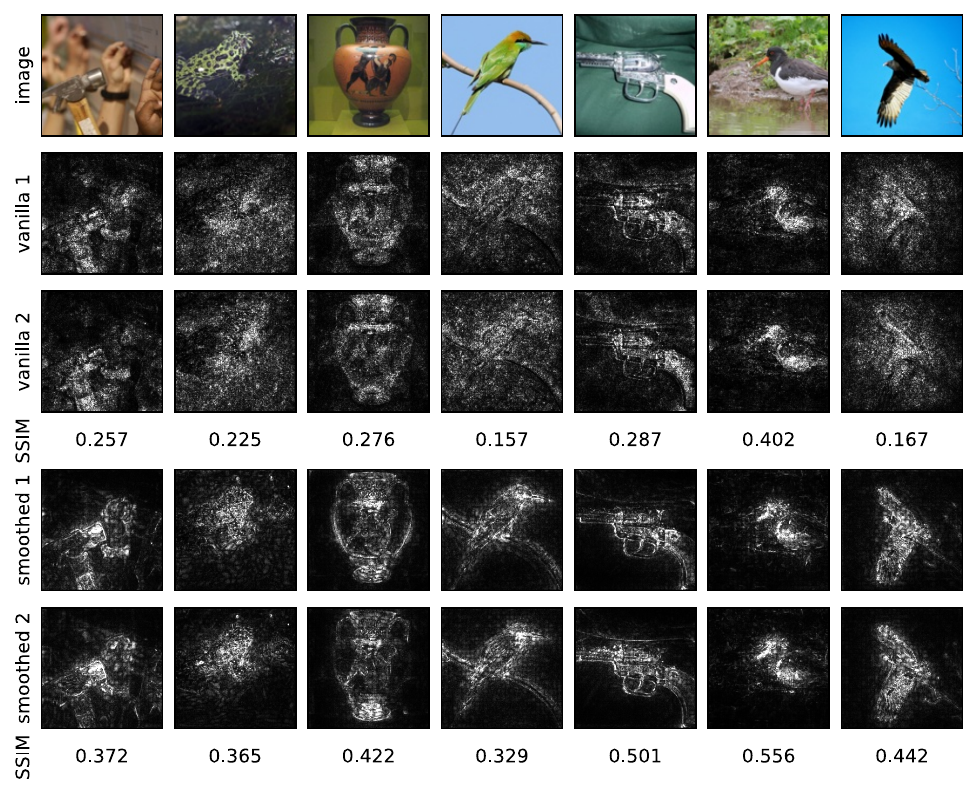}
  \vspace{-0.4cm}
  \caption{Visualization of Integrated-Grad and smoothed Integrated-Grad differences of two ResNet50's trained using two disjoint halves of the training set, each with $500000$ random samples. The first row is the input image. The second and third rows are Integrated-Grad results. The fourth and fifth rows are smoothed Integrated-Grad with $\sigma/(x_{\mathrm{max}}-x_{\mathrm{min}})=0.15$.}
  \label{fig:vis_Integrated}
\vspace{-0.1cm}
\end{figure*}
\begin{figure*}[!ht]
  \centering
  \includegraphics[width=0.7\textwidth]{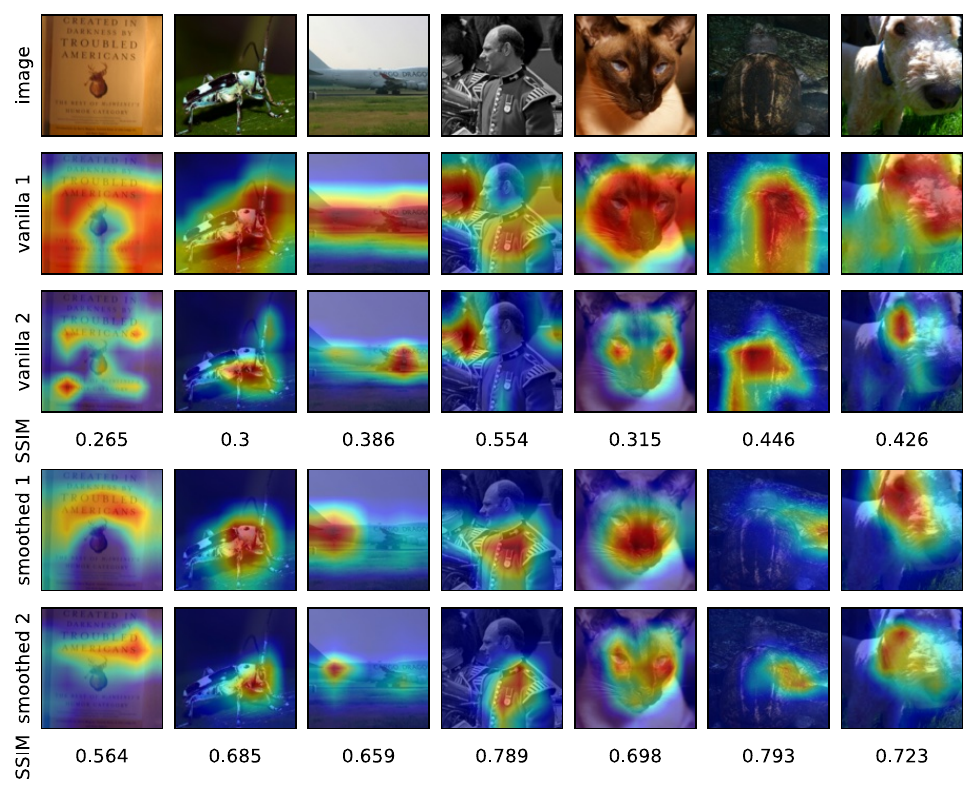}
  \vspace{-0.4cm}
  \caption{Visualization of Grad-CAM and smoothed version of Grad-CAM differences on ImageNet, where the two neural networks are different pre-trained ResNet50 checkpoints provided by TorchVision~\citep{torchvision2016}. The first row is the input image. The second and third rows are Simple-Grad results. The fourth and fifth rows are Smooth-Grad with $\sigma/(x_{\mathrm{max}}-x_{\mathrm{min}})=0.15$.}
  \label{fig:vis_gradcam}
\end{figure*}
\begin{figure*}[!ht]
  \vspace{-0.1cm}
  \centering
  \includegraphics[width=0.7\textwidth]{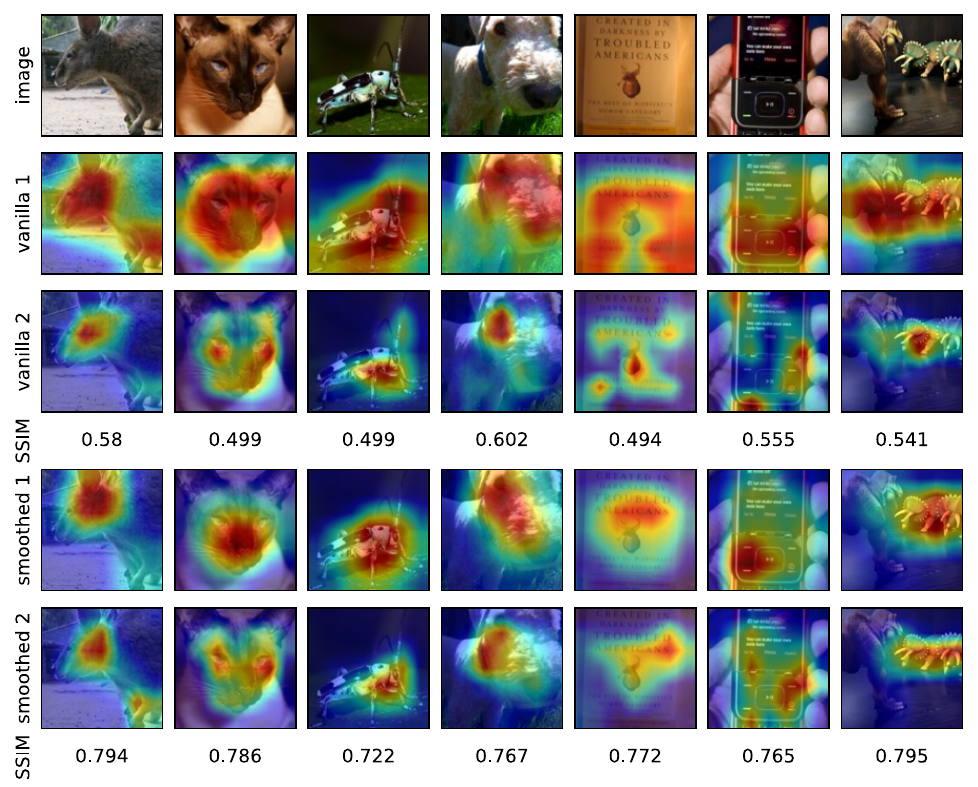}
  \vspace{-0.4cm}
  \caption{Visualization of Grad-CAM++ and smoothed version of Grad-CAM++ differences on ImageNet, where the two neural networks are different pre-trained ResNet50 checkpoints provided by TorchVision~\citep{torchvision2016}. The first row is the input image. The second and third rows are Simple-Grad results. The fourth and fifth rows are Smooth-Grad with $\sigma/(x_{\mathrm{max}}-x_{\mathrm{min}})=0.15$.}
  \label{fig:vis_gradcam++}
\end{figure*}

\clearpage

\begin{figure*}[!ht]
  \vspace{-0.2cm}
  \centering
  \includegraphics[width=0.7\textwidth]{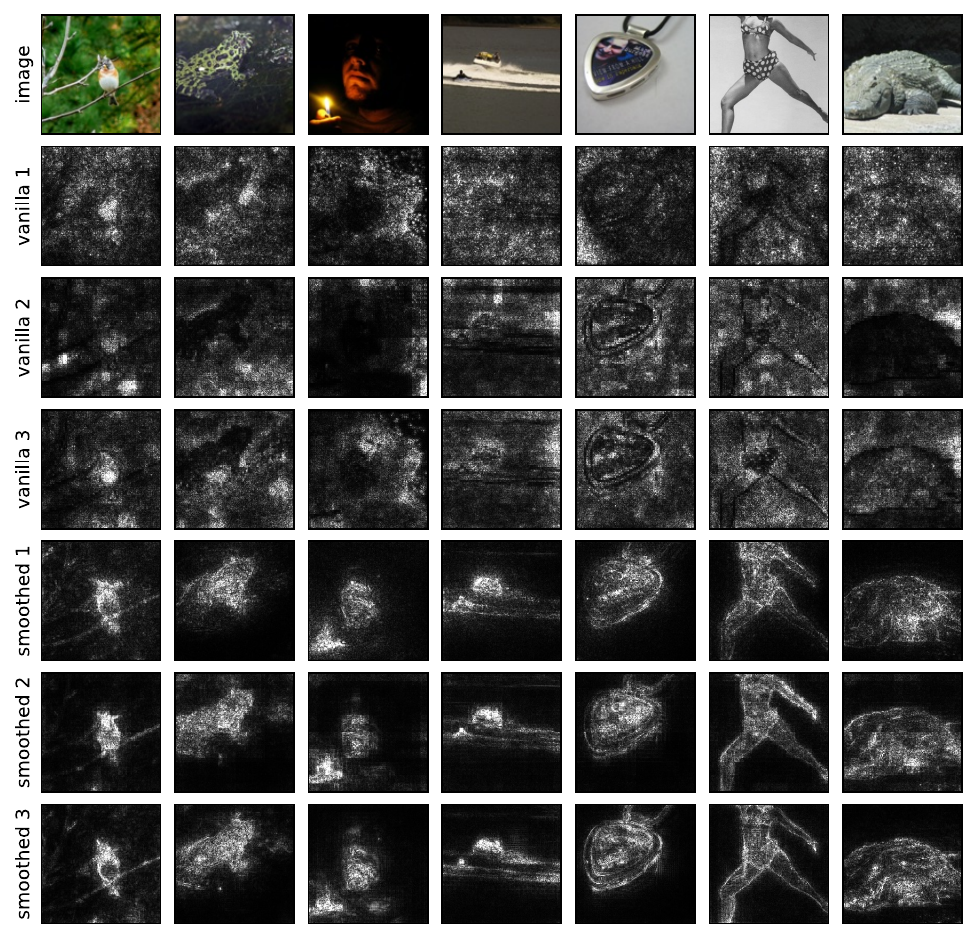}
  \vspace{-0.3cm}
  \caption{Visualization of Simple-Grad and Smooth-Grad differences on ImageNet, where the three neural networks are pre-trained ResNet50, Swin-T, and ConvNeXt-tiny checkpoints provided by TorchVision~\citep{torchvision2016}. The first row is the input image. The second, third, and fourth rows are Simple-Grad results. The fifth, sixth, and seventh rows are Smooth-Grad with $\sigma/(x_{\mathrm{max}}-x_{\mathrm{min}})=0.15$.}
  \label{fig:vis_arch3}
  \vspace{-0.3cm}
\end{figure*}

In addition to Table~\ref{tab:results} in the main text, we consider two more saliency map algorithms Gradient $\odot$ Input~\citep{shrikumar2017learning} and MoreauGrad~\citep{zhang2023moreaugrad} in Table~\ref{tab:addtl_results} and can see the stabilizing effect of Gaussian smoothing still holds.

\begin{table*}
\caption{Average SSIM and top-k mIoU scores for saliency maps from two neural networks evaluated on a random subset of the ImageNet test set. The two ResNet50 networks are trained on two different splits of the training set. Smoothed scores are computed after applying Gaussian smoothing to both saliency maps.}
\label{tab:addtl_results}
\centering
\begin{tabular}{ccccccc}
\toprule
\textbf{Algorithm} & \textbf{SSIM} & \textbf{Smoothed SSIM} & \textbf{top-k mIoU} & \textbf{Smoothed top-k mIoU} \\
\midrule
Gradient $\odot$ Input & 0.264 & \textbf{0.323} & 0.092 & \textbf{0.154} \\
MoreauGrad & 0.621 & \textbf{0.785} & 0.118 & \textbf{0.156} \\
\bottomrule
\end{tabular}%
\end{table*}

\section{Bias-Variance Relation}
To make our study more comprehensive, we also explore the bias-variance relation after using a smoothing kernel. Intuitively, this relation highly correlates to the stability-fidelity relation we focus on. It is natural to define the ground truth saliency map as the vanilla one without smoothing. 

To make the calculation of variance more reliable, we randomly split the training set into ten splits and train ten neural networks independently. Specifically, we define averaged fidelity error as \begin{align*}
    \mathbb{E}_{(x,y)\sim D}\mathbb{E}_{i=1}^{10} \Vert \mathbb{E}_{z\sim N(0,\sigma^2I)}\operatorname{Sal}_{A(S_i)}(x+z,y)-\operatorname{Sal}_{A(S_i)}(x,y)\Vert\end{align*}
and define averaged stability error as
\begin{align*}
    \mathbb{E}_{(x,y)\sim D}\mathbb{E}_{i=1}^{10} \Vert \mathbb{E}_{z\sim N(0,\sigma^2I)}\operatorname{Sal}_{A(S_i)}(x+z,y)-\mathbb{E}_{j=1}^{10} \mathbb{E}_{z\sim N(0,\sigma^2I)}\operatorname{Sal}_{A(S_j)}(x+z,y)\Vert^2
\end{align*} where $\operatorname{Sal}_{A(S)}(x,y)$ is the non-smoothed saliency map.

See the bias-variance relation figures for both Simple-Grad and Integrated-Grad calculated on ImageNet in Figure \ref{fig:bias_variance}. The results summarize our results of stability and fidelity and effectively confirm our main results. We can see that smoothing can reduce variance at the cost of bias. We recommend people choose a suitable smoothing $\sigma$ considering this trade-off in future applications.

\begin{figure*}[ht]
\vspace{-0.2cm}
\centering
\includegraphics[width=\linewidth]{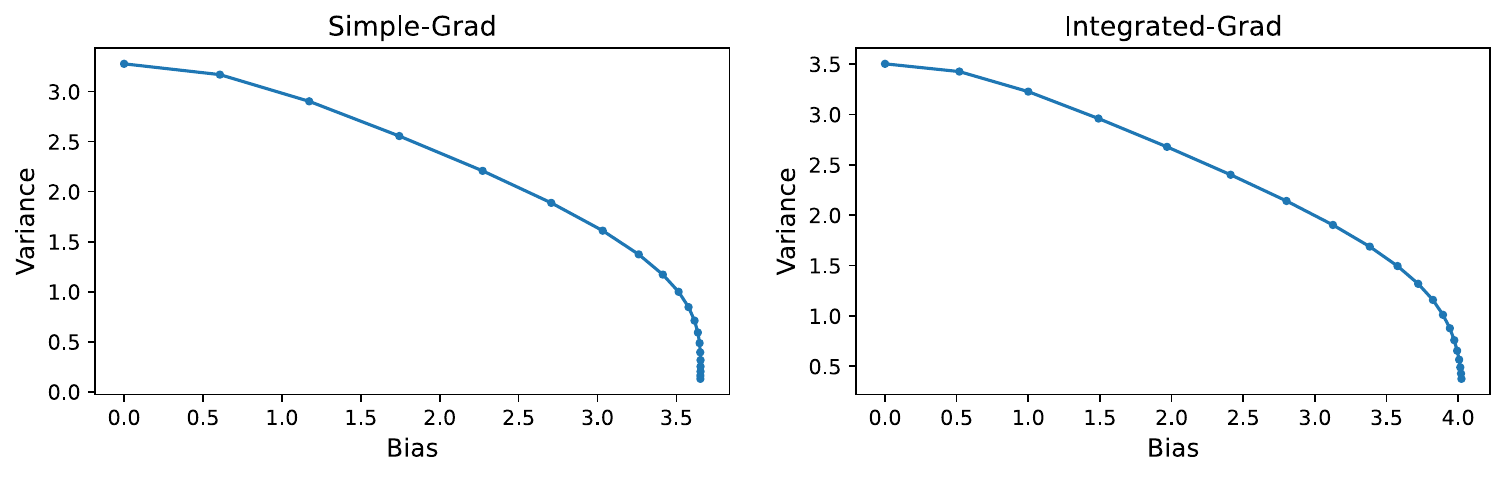}
\vspace{-0.3cm}
\caption{The bias-variance figures for Simple-Grad and Integrated-Grad on ImageNet, by choosing different smoothing factors $\sigma$. When choosing a larger $\sigma$, bias will increase with variance decreasing. In the figure, the square root of $\sigma/(x_{\mathrm{max}}-x_{\mathrm{min}})$ is sampled from 0 to $\sqrt{0.3}$ with equal intervals.}
\label{fig:bias_variance}
\end{figure*}

\section{More Details of Experiments}
\label{appendix:detail}
\subsection{Training Details}
For ImageNet, We train 80 epochs using the SGD optimizer with a momentum of 0.9, the StepLR scheduler with a step size of 30 and a gamma of 0.1 to adjust the learning rate, and weight decay of 0.001. We use \texttt{RandomResizedCrop} to ensure images have shape $224\times 224$, and normalize using the mean and standard deviation computed from ImageNet. All these configurations are default ones in the training script provided by Pytorch.~\footnote{\url{https://github.com/pytorch/examples/blob/main/imagenet/main.py}} For CIFAR10, we train 100 epochs using the SGD optimizer with a momentum of 0.9, cosine learning rate annealing starting from 0.1 down to 0, and weight decay of 0.001. We normalize images using the mean and standard deviation of $0.5$ in all 3 channels. The base models we use are provided by TorchVision~\citep{torchvision2016}. We run experiments on 4 GeForce RTX 3090 GPUs, the training time for a 500000 training subset takes within 12 hours. Except for the training process, most of our experiments run within one hour.

For each specified number of training data points $N$, we conduct the experiments three times, using seeds 10007, 5678, and 12345. In each run, the seed is employed to randomly shuffle the list of all indices. We then select the first $N$ indices to train one neural network, and the subsequent $N$ indices to train another. In this way, we can compute standard deviations for Figure~\ref{fig:cifar10_plot_sigma}, \ref{fig:imagenet_plot_sigma}, \ref{fig:fidelity_cifar10_plot_sigma}, \ref{fig:fidelity_imagenet_plot_sigma}. For the experiment in \ref{fig:bias_variance} which requires 10 splits, we similarly use seed 10007 to shuffle and choose each consecutive $N$ samples as one split. 

\subsection{Implementation of Saliency Map Algorithms}
We use the \texttt{saliency library} provided by PAIR~\footnote{\url{https://github.com/PAIR-code/saliency}} to compute SimpleGrad, IntegratedGrad, and their smoothed versions, as well as visualize those saliency maps. We use the \texttt{pytorch-grad-cam} library~\footnote{\url{https://github.com/jacobgil/pytorch-grad-cam}}~\citep{jacobgilpytorchcam} to compute Grad-CAM and Grad-CAM++ since the former library does not natively support them. We set the desired layer to be \texttt{model.layer4[-1]} for ResNet50 and \texttt{model.norm} for Swin-T, both are close to final layers and our experiments show good visualization results under these choices. According to this library, we visualize a CAM figure as an RGB image by overlaying the cam mask on the image as a heatmap.

When calculating the smoothed version of saliency maps, we draw 100 samples. For IntegratedGrad, we draw 20 intermediate points to compute the integral.

\subsection{Other Details}
Due to computation speed issues, we choose a 512-sample subset of the whole ImageNet test set to compute results for Table~\ref{tab:results} and Figure~\ref{fig:ssim_sigma}. This test subset is created by randomly shuffling the list of all indices with seed 10007 and choosing the first 512 data points, which remains fixed throughout our experiments.

For the calculation of SSIM, we use the one provided by library \texttt{scikit-image}, and set \texttt{gaussian\_weights} to True, sigma to 1.5, \texttt{use\_sample\_covariance} to False, and specify the \texttt{data\_range} argument (to 1 for grey images and 255 for RGB images) according to the official suggestions~\footnote{\url{https://scikit-image.org/docs/stable/api/skimage.metrics.html\#skimage.metrics.structural\_similarity}} to match the implementation of~\citep{wang2004image}. To calculate top-k mIoU, we set $k=500$ pixels for both saliency maps, and calculate the average IoU for these pixels with top-k values according to the definition.

\section{Proofs}
\subsection{Proof of Proposition \ref{prop1}}
We begin by introducing Stein's lemma, as presented in~\citep{lin2019steins}. This lemma is crucial to our overall analysis, as it allows us to express the gradient in expectation in a more manageable form.
\begin{lemma}
    Stein's lemma. For a differentiable function $f$ for which the expectation $\mathbb{E}_{z\sim N(0,\sigma^2 I)}[\nabla_xf(x+z)]$ exists, we have
    $$\mathbb{E}_{z\sim N(0,\sigma^2 I)}[\nabla_xf(x+z)]=\mathbb{E}_{z\sim N(0,\sigma^2 I)}\left[\frac{z}{\sigma^2}f(x+z)\right]$$
\label{lemma:stein}
\end{lemma}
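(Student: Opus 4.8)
The plan is to prove the identity by Gaussian integration by parts, exploiting the special structure of the Gaussian density. Let $p(z)=(2\pi\sigma^2)^{-m/2}\exp(-\Vert z\Vert^2/(2\sigma^2))$ denote the density of $N(0,\sigma^2 I)$. The single algebraic fact driving the whole argument is that the score function of the Gaussian is linear: a direct computation gives $\nabla_z p(z)=-(z/\sigma^2)\,p(z)$, equivalently $(z/\sigma^2)\,p(z)=-\nabla_z p(z)$. This lets me convert the weight $z/\sigma^2$ appearing on the right-hand side into a derivative of the density, which is exactly the object an integration by parts can absorb.

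First I would rewrite the left-hand side so that the derivative falls on the noise variable rather than the input. Since $f$ is evaluated at the shifted point $x+z$, the chain rule gives $\nabla_x f(x+z)=(\nabla f)(x+z)=\nabla_z f(x+z)$, so the gradient with respect to the input $x$ and with respect to the noise $z$ coincide. Hence $\mathbb{E}_{z}[\nabla_x f(x+z)]=\int \nabla_z f(x+z)\,p(z)\,\mathrm{d}z$, and it suffices to move the derivative off $f$ and onto $p$.

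Next I would integrate by parts componentwise. Fixing a coordinate $j$, I consider $\int \partial_{z_j} f(x+z)\,p(z)\,\mathrm{d}z$ and integrate by parts in $z_j$, treating the remaining coordinates as parameters and invoking Fubini. The interior term equals $-\int f(x+z)\,\partial_{z_j} p(z)\,\mathrm{d}z$, and substituting $\partial_{z_j} p(z)=-(z_j/\sigma^2)\,p(z)$ turns it into $\int f(x+z)\,(z_j/\sigma^2)\,p(z)\,\mathrm{d}z$. Stacking over $j=1,\dots,m$ reassembles the vector identity and reconstructs exactly the right-hand side $\mathbb{E}_{z}[(z/\sigma^2)\,f(x+z)]$, completing the proof.

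The one point requiring care — and the main obstacle — is the vanishing of the boundary terms in the integration by parts, i.e. that $f(x+z)\,p(z)\to 0$ as $|z_j|\to\infty$. This is precisely where the existence hypothesis on $\mathbb{E}_{z}[\nabla_x f(x+z)]$ is used: it is enough that $f$ grows subexponentially, which holds in our application since $f$ is a logit of a Lipschitz network and hence at most linear in $\Vert z\Vert$, so that the Gaussian factor $p(z)$, decaying like $e^{-\Vert z\Vert^2/(2\sigma^2)}$, dominates and forces the boundary contributions to zero. I would state this growth condition explicitly, consistent with the Lipschitz assumptions imposed elsewhere in the paper, to license both Fubini and the vanishing boundary terms; once these are justified the remaining computation is routine.
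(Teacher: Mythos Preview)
Your proof is correct; Gaussian integration by parts, using the score identity $\nabla_z p(z)=-(z/\sigma^2)p(z)$ together with $\nabla_x f(x+z)=\nabla_z f(x+z)$, is the standard way to establish Stein's lemma, and your handling of the boundary terms via the Lipschitz growth bound is appropriate in this setting. The paper itself does not prove the lemma at all: it simply states the result and cites \cite{lin2019steins}, so your proposal actually supplies more than the paper does.
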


\begin{lemma}
    Suppose function $f:\mathcal X\to \mathcal Y$ is $L$-Lipschitz, then its $\sigma$-Gaussian smoothed version $\Tilde{f}$ defined by $\Tilde{f}(x)=\mathbb{E}_{z\sim N(0,\sigma^2I)}[f(x+z)]$ is $\frac{L}{\sigma}$-smooth.
\end{lemma}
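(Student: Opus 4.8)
The plan is to apply Stein's lemma (Lemma~\ref{lemma:stein}) to rewrite $\nabla\tilde f$ without any gradient of $f$, and then control the $\ell_2$-norm of the gradient difference by pairing it with an arbitrary unit vector, so that only a \emph{one-dimensional} Gaussian moment appears and the resulting smoothness constant is dimension-free.

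First, since $\tilde f$ is a Gaussian convolution of $f$ it is differentiable, and Stein's lemma gives
\[
\nabla\tilde f(x)=\mathbb{E}_{z\sim N(0,\sigma^2 I)}\bigl[\nabla_x f(x+z)\bigr]=\mathbb{E}_{z\sim N(0,\sigma^2 I)}\Bigl[\tfrac{z}{\sigma^2}\,f(x+z)\Bigr].
\]
Hence for any two points $x_1,x_2$,
\[
\nabla\tilde f(x_1)-\nabla\tilde f(x_2)=\mathbb{E}_{z}\Bigl[\tfrac{z}{\sigma^2}\bigl(f(x_1+z)-f(x_2+z)\bigr)\Bigr].
\]
To bound the norm of this vector I would pair it with an arbitrary unit vector $v$ and use that $v^\top z\sim N(0,\sigma^2)$ is scalar, so $\mathbb{E}\,|v^\top z|\le\sqrt{\mathbb{E}(v^\top z)^2}=\sigma$ independently of the ambient dimension. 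Combining with the $L$-Lipschitz bound $|f(x_1+z)-f(x_2+z)|\le L\|x_1-x_2\|$ gives
\[
\bigl|v^\top(\nabla\tilde f(x_1)-\nabla\tilde f(x_2))\bigr|
\le \tfrac{L\|x_1-x_2\|}{\sigma^2}\,\mathbb{E}\,|v^\top z|
\le \tfrac{L}{\sigma}\,\|x_1-x_2\|,
\]
and taking the supremum over unit vectors $v$ yields $\|\nabla\tilde f(x_1)-\nabla\tilde f(x_2)\|\le\frac{L}{\sigma}\|x_1-x_2\|$, i.e.\ $\tilde f$ is $\frac{L}{\sigma}$-smooth.

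The main obstacle is the dimension dependence, which is exactly what the projection step resolves: if one instead pulls the norm directly inside the expectation, the bound picks up the factor $\mathbb{E}\|z\|\asymp\sigma\sqrt{m}$, which would give a useless $\frac{L\sqrt m}{\sigma}$-smoothness estimate. Projecting onto $v$ first replaces $\mathbb{E}\|z\|$ by the one-dimensional moment $\mathbb{E}\,|v^\top z|\le\sigma$. A minor technical point I would note is that Stein's lemma is stated for differentiable $f$, whereas here $f$ is only assumed Lipschitz; this is handled by observing that the Gaussian mollification $\tilde f$ is itself smooth and that the identity $\nabla\tilde f(x)=\mathbb{E}_z[\tfrac{z}{\sigma^2}f(x+z)]$ extends to Lipschitz $f$ by a standard approximation argument, consistent with how the lemma is subsequently invoked in Proposition~\ref{prop1}.
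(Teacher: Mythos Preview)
Your proof is correct and follows essentially the same route as the paper: apply Stein's lemma, then project onto a unit vector to reduce to a one-dimensional Gaussian moment and obtain the dimension-free constant. The only cosmetic difference is that the paper bounds $\mathbb{E}_z\bigl[\tfrac{u^\top z}{\sigma^2}\,(f(x_1+z)-f(x_2+z))\bigr]$ via Cauchy--Schwarz on the product, whereas you first pull out the uniform Lipschitz bound and then estimate $\mathbb{E}|v^\top z|\le\sigma$; both yield the same $\frac{L}{\sigma}$ constant.
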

\begin{proof}
    Note that $L$-Lipschitz implies $\Vert \nabla f(x)\Vert\le L$. Define $\Tilde{g}(x)=\nabla_x\Tilde{f}(x)$. Then for any $x_1,x_2$, we have \begin{align*}
        \Vert \Tilde{g}(x_1)-\Tilde{g}(x_2)\Vert&\le \Vert \mathbb{E}_{z\sim N(0,\sigma^2I)}[\nabla_x f(x_1+z)]-\mathbb{E}_{z\sim N(0,\sigma^2I)}[\nabla_x f(x_2+z)]\Vert\\
        &=\left\Vert \mathbb{E}_{z\sim N(0,\sigma^2I)}\left[\frac{z}{\sigma^2}(f(x_1+z)-f(x_2+z))\right]\right\Vert\\
        &=\max_{\Vert u\Vert=1}\mathbb{E}_{z\sim N(0,\sigma^2I)}\left[\frac{u^Tz}{\sigma^2}(f(x_1+z)-f(x_2+z))\right]\\
        &\le \max_{\Vert u\Vert=1}\sqrt{\mathbb{E}_{z\sim N(0,\sigma^2I)}\left(\frac{u^Tz}{\sigma^2}\right)^2\mathbb{E}_{z\sim N(0,\sigma^2I)}(f(x_1+z)-f(x_2+z))^2}\\
        &\le \frac{1}{\sigma}L\Vert x_1-x_2\Vert
    \end{align*}
    where the second inequality is due to Cauchy-Schwarz inequality. Thus $\Tilde{g}(x)$ is $\frac{L}{\sigma}$-Lipschitz, directly implying $\Tilde{f}(x)$ is $\frac{L}{\sigma}$-smooth.
\end{proof}

\subsection{Proof of Theorem \ref{thm4.1}}
Let $S=(z_1,\dots,z_n)$ and $S'=(z_1',\dots,z_n')$ be independent random samples and $S^{(i)}=(z_1,\dots,z_{i-1},z_i',z_{i+1},\dots z_n)$ be identical to $S$ except the the $i$-th position. Then we have 

\begin{align*}
    \mathbb{E}_{S,A}[L'_S(A(S))]&=\mathbb{E}_{S,A}\Bigl[\frac 1n\sum_{i=1}^n\ell'(A(S),x_i,y_i)\Bigr]\\
    &=\mathbb{E}_{S,S',A}\Bigl[\frac 1n\sum_{i=1}^n\ell'(A(S^{(i)}),x_i',y_i')\Bigr]\\
    &=\mathbb{E}_{S,S',A}\Bigl[\frac 1n\sum_{i=1}^n\ell'(A(S),x_i',y_i')\Bigr]-\delta\\
    &=\mathbb{E}_{S,A}[L'_D(A(S))]-\delta
\end{align*}

Then the generalization gap can be written as \begin{align*}
    \delta&=\mathbb{E}_{S,A}[L'_D(A(S))-L'_S(A(S))]\\
    &=\mathbb{E}_{S,S',A}\Bigl[\frac 1n\sum_{i=1}^n\ell'(A(S),x_i',y_i')-\ell'(A(S^{(i)}),x_i',y_i')\Bigr]\\
    &= \mathbb{E}_{S,S',i,A} [\ell'(A(S),x_i',y_i')-\ell'(A(S^{(i)}),x_i',y_i')]
\end{align*}

By taking supremum over any two datasets $S$ and $S'$ differing at only one data point, we can bound 
$$\delta\le \sup_{S,S',x,y}\mathbb{E}_A [\ell'(A(S),x,y)-\ell'(A(S^{(i)}),x,y)]=\epsilon_{\mathrm{stability}}.$$
\hfill$\qedsymbol$

\subsection{Upper Bound of Stability Error for Arbitrary Loss Function}
Throughout the Appendix, we use $f_{\mathcal W}^k(x)$ to denote the neural network function function of the first $k$ layers at input $x$.
\begin{lemma}
\label{lemma:0}
For the class of $k$-layer neural networks $f_{\mathcal W}(x)=W_k\phi(W_{k-1}\phi(\dots\phi(W_1x)))\in \mathbb R^c$ with 1-Lipschitz $\phi$ where $\phi(0) = 0$, we have $$\Vert f_{\mathcal W}(x)\Vert\le \Delta_{k,0}C$$ 
\end{lemma}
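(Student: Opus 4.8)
The plan is to bound $\Vert f_{\mathcal W}(x)\Vert$ by peeling off one layer at a time, using the sub-multiplicativity of the operator norm together with the $1$-Lipschitzness (and $\phi(0)=0$) of the activation. Concretely, I would prove by induction on $j$ that $\Vert f_{\mathcal W}^j(x)\Vert \le \bigl(\prod_{i=1}^j B_i\bigr)\Vert x\Vert$, where $f_{\mathcal W}^j$ denotes the first $j$ layers of the network.

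For the base case $j=1$ we have $f_{\mathcal W}^1(x) = W_1 x$, so $\Vert f_{\mathcal W}^1(x)\Vert \le \Vert W_1\Vert \, \Vert x\Vert \le B_1 \Vert x\Vert$ by the assumed bound on $\Vert W_1\Vert$. For the inductive step, suppose $\Vert f_{\mathcal W}^{j}(x)\Vert \le \bigl(\prod_{i=1}^{j} B_i\bigr)\Vert x\Vert$. Then
\begin{align*}
    \Vert f_{\mathcal W}^{j+1}(x)\Vert &= \Vert W_{j+1}\,\phi(f_{\mathcal W}^{j}(x))\Vert \le B_{j+1}\,\Vert \phi(f_{\mathcal W}^{j}(x))\Vert \le B_{j+1}\,\Vert f_{\mathcal W}^{j}(x)\Vert,
\end{align*}
where the last inequality uses that $\phi$ is $1$-Lipschitz and $\phi(0)=0$, hence $\Vert \phi(v)\Vert = \Vert \phi(v)-\phi(0)\Vert \le \Vert v\Vert$ (applied coordinatewise, this gives $\Vert\phi(v)\Vert\le\Vert v\Vert$ in the $\ell_2$ norm since $\phi$ acts entrywise and $|\phi(t)|\le|t|$). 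Combining with the inductive hypothesis gives $\Vert f_{\mathcal W}^{j+1}(x)\Vert \le \bigl(\prod_{i=1}^{j+1} B_i\bigr)\Vert x\Vert$.

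Taking $j = k$ and using $\Vert x\Vert \le C$ together with $\Delta_{k,0} = \prod_{i=1}^{k} B_i$ yields $\Vert f_{\mathcal W}(x)\Vert \le \Delta_{k,0} C$, as claimed. There is no real obstacle here; the only point requiring a little care is the step $\Vert\phi(v)\Vert\le\Vert v\Vert$ for the $\ell_2$ norm — this follows because $\phi$ is applied entrywise with $\phi(0)=0$ and $|\phi(a)-\phi(b)|\le|a-b|$, so $|\phi(v_i)|\le|v_i|$ for each coordinate, and hence $\sum_i \phi(v_i)^2 \le \sum_i v_i^2$.
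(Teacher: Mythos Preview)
Your proposal is correct and follows essentially the same approach as the paper: both peel off one layer at a time via induction, using the operator norm bound $\Vert W_j\Vert\le B_j$ together with $\Vert\phi(v)\Vert\le\Vert v\Vert$ from the $1$-Lipschitzness and $\phi(0)=0$. Your write-up is slightly more explicit about the coordinatewise argument for $\Vert\phi(v)\Vert\le\Vert v\Vert$, but the structure is identical.
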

\begin{proof}
    By the 1-Lipschitz property of the activation function, we have \begin{align*}
        \Vert f_{\mathcal W}^k(x)\Vert&=\Vert W_k\phi(W_{k-1}\phi(\dots\phi(W_1x)))\Vert\\
        &\le B_k\Vert W_{k-1}\phi(W_{k-2}\phi(\dots\phi(W_1x)))\Vert\\
        &=B_k\Vert f_{\mathcal W}^{k-1}(x)\Vert.
    \end{align*}
    Since we have $f_{\mathcal W}^1(x)\le B_1\Vert x\Vert\le B_1C$, we immediately have the conclusion by induction.
\end{proof}

\begin{lemma}
\label{lemma:1}
For the class of $k$-layer neural networks $f_{\mathcal W}(x)=W_k\phi(W_{k-1}\phi(\dots\phi(W_1x)))\in \mathbb R^c$ with 1-Lipschitz $\phi$ where $\phi(0) = 0$, we have $$\operatorname{lip}_{\mathcal W}(f_{\mathcal W}(x))\le \Delta_{k,1}C$$ 
\end{lemma}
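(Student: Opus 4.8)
The plan is to bound the Lipschitz constant of the map $\mathcal{W}\mapsto f_{\mathcal{W}}(x)$ for a fixed input $x$, by controlling how a perturbation in each weight matrix $W_i$ propagates through the network. I would set up the argument by induction on the depth, tracking simultaneously two quantities for the sub-network $f^j_{\mathcal{W}}$ consisting of the first $j$ layers: the norm $\Vert f^j_{\mathcal{W}}(x)\Vert$, which Lemma \ref{lemma:0} already bounds by $\Delta_{j,0}C$, and the Lipschitz constant $\operatorname{lip}_{\mathcal{W}}(f^j_{\mathcal{W}}(x))$ with respect to the parameters $(W_1,\dots,W_j)$ appearing in that sub-network.

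The key recursive step is to compare $f^j_{\mathcal{W}}(x)=W_j\phi(f^{j-1}_{\mathcal{W}}(x))$ for two parameter vectors $\mathcal{W}$ and $\mathcal{W}'$. I would write $f^j_{\mathcal{W}}(x)-f^j_{\mathcal{W}'}(x)=W_j\bigl(\phi(f^{j-1}_{\mathcal{W}}(x))-\phi(f^{j-1}_{\mathcal{W}'}(x))\bigr)+(W_j-W_j')\phi(f^{j-1}_{\mathcal{W}'}(x))$ and apply the triangle inequality. The first term is bounded by $B_j$ times the $1$-Lipschitzness of $\phi$ times $\operatorname{lip}_{\mathcal W}(f^{j-1}_{\mathcal W}(x))\,\Vert\mathcal W-\mathcal W'\Vert$; the second term is bounded by $\Vert W_j-W_j'\Vert$ times $\Vert\phi(f^{j-1}_{\mathcal{W}'}(x))\Vert\le\Vert f^{j-1}_{\mathcal{W}'}(x)\Vert\le\Delta_{j-1,0}C$ using $\phi(0)=0$, $1$-Lipschitzness, and Lemma \ref{lemma:0}. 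This yields the recursion $\operatorname{lip}_j\le B_j\operatorname{lip}_{j-1}+\Delta_{j-1,0}C$, with base case $\operatorname{lip}_1\le\Vert x\Vert\le C=\Delta_{0,0}C$ (taking the empty product $\Delta_{0,0}=1$).

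Finally I would unroll this recursion: $\operatorname{lip}_k\le\sum_{i=1}^{k}\bigl(\prod_{j=i+1}^{k}B_j\bigr)\Delta_{i-1,0}C=\sum_{i=1}^{k}\bigl(\prod_{j=i+1}^{k}B_j\bigr)\bigl(\prod_{j=1}^{i-1}B_j\bigr)C=\sum_{i=1}^{k}\bigl(\prod_{j\ne i,\,j\le k}B_j\bigr)C=\Delta_{k,1}C$, which is exactly the claimed bound. The only mild subtlety is bookkeeping the index conventions so that the collapsing product $\prod_{j=i+1}^k B_j\cdot\prod_{j=1}^{i-1}B_j$ correctly reproduces the definition $\Delta_{k,1}=\sum_{i=1}^k\prod_{j\ne i,j\le k}B_j$; I expect that index-matching and the careful treatment of the base case to be the main (modest) obstacle, the rest being a routine telescoping induction analogous to the proof of Lemma \ref{lemma:0}.
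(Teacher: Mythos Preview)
Your proposal is correct and follows essentially the same approach as the paper: both split $f^j_{\mathcal W}(x)-f^j_{\mathcal W'}(x)$ via a telescoping triangle inequality into a ``changed weight'' term and a ``propagated inner difference'' term, obtain the identical recursion $\operatorname{lip}_j\le B_j\operatorname{lip}_{j-1}+\Delta_{j-1,0}C$ using Lemma~\ref{lemma:0} and the $1$-Lipschitzness of $\phi$, and unroll it to $\Delta_{k,1}C$. The only cosmetic difference is that the paper inserts the intermediate point $(W_k+V_k)\phi(f_{\mathcal W}^{k-1}(x))$ whereas you insert $W_j\phi(f^{j-1}_{\mathcal W'}(x))$, which yields the same bound.
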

\begin{proof}
    Consider the value difference at $\mathcal W$ and $\mathcal W+\mathcal V$. It is bounded by \begin{align*}
        &\Vert f_{\mathcal W}^k(x)-f_{\mathcal W+\mathcal V}^k(x)\Vert\\
        =\, &\Vert W_k\phi(f_{\mathcal W}^{k-1}(x))-(W_k+V_k)\phi(f_{\mathcal W+\mathcal V}^{k-1}(x))\Vert\\
        \le\, &\Vert W_k\phi(f_{\mathcal W}^{k-1}(x))-(W_k+V_k)\phi(f_{\mathcal W}^{k-1}(x))\Vert\\
        &\: +\Vert(W_k+V_k)\phi(f_{\mathcal W}^{k-1}(x))-(W_k+V_k)\phi(f_{\mathcal W+\mathcal V}^{k-1}(x))\Vert\\
        \le\, & \Vert V_k\Vert\cdot \Vert \phi(f_{\mathcal W}^{k-1}(x))\Vert+\Vert W_k+V_k\Vert\cdot\Vert \phi(f_{\mathcal W}^{k-1}(x))-\phi(f_{\mathcal W+\mathcal V}^{k-1}(x))\Vert\\
        \le\, &\Vert \mathcal V\Vert\cdot \Vert f_{\mathcal W}^{k-1}(x)\Vert+B_k\cdot \operatorname{lip}_{\mathcal W}(f_{\mathcal W}^{k-1}(x))\Vert \mathcal V\Vert\\
        \le\, &\left(\prod_{i=1}^{k-1}B_iC+B_k\operatorname{lip}_{\mathcal W}(f_{\mathcal W}^{k-1}(x))\right)\Vert \mathcal V\Vert,
    \end{align*} where in the last line we apply lemma \ref{lemma:0}.
    It is easy to show $\operatorname{lip}_{\mathcal W}(f_{\mathcal W}^1(x))\le C$, so by induction we have \begin{align*}
        \operatorname{lip}_{\mathcal W}(f_{\mathcal W}^k(x))&\le\prod_{i=1}^{k-1}B_iC+B_k\operatorname{lip}_{\mathcal W}(f_{\mathcal W}^{k-1}(x))\\
        &\le\sum_{i=1}^k\prod_{j\le k,j\ne i}B_jC\\
        &=\Delta_{k,1}C.
    \end{align*} 
\end{proof}

To prepare for our next lemma, which involves taking the gradient with respect to $\mathcal{W}$, we present the following useful result.
\begin{lemma}
    For any matrix $W\in\mathbb R^{n\times m}$ and vector $x\in\mathbb R^c$, we have $$\left\Vert \frac{\partial Wx}{\partial \mathrm{vec}(W)}\right\Vert=\Vert x\Vert$$
\end{lemma}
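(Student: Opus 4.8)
The plan is to recognize that $\frac{\partial Wx}{\partial \mathrm{vec}(W)}$ is the Jacobian of the map $\mathrm{vec}(W)\mapsto Wx$, which is linear in the entries of $W$, so the Jacobian is a fixed matrix independent of $W$; computing its operator norm is then a direct calculation. First I would fix a convention for $\mathrm{vec}(\cdot)$ — say stacking the rows (or columns) of $W$ — and write $(Wx)_p = \sum_{q} W_{pq} x_q$, so that $\frac{\partial (Wx)_p}{\partial W_{rs}} = \delta_{pr} x_s$. This shows the Jacobian has a block structure: it is, up to a permutation of coordinates matching the chosen vectorization, the block-diagonal (or Kronecker) matrix $I_n \otimes x^\top$, whose rows are copies of $x^\top$ placed in disjoint coordinate blocks.

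Next I would compute the $\ell_2$ operator norm of this matrix directly. For any $\mathcal V = \mathrm{vec}(V)$ with $\Vert \mathcal V\Vert = 1$, we have $\frac{\partial Wx}{\partial \mathrm{vec}(W)}\mathcal V = Vx$, and $\Vert Vx\Vert \le \Vert V\Vert_{\mathrm{op}}\Vert x\Vert \le \Vert V\Vert_F \Vert x\Vert = \Vert \mathcal V\Vert \,\Vert x\Vert = \Vert x\Vert$, giving the upper bound $\le \Vert x\Vert$. For the matching lower bound I would exhibit a unit vector achieving it: take $V = u x^\top / \Vert x\Vert$ for any unit vector $u\in\mathbb R^n$, so that $\Vert V\Vert_F = 1$ and $Vx = u\Vert x\Vert$, hence $\Vert Vx\Vert = \Vert x\Vert$. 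Combining the two bounds gives exactly $\left\Vert \frac{\partial Wx}{\partial \mathrm{vec}(W)}\right\Vert = \Vert x\Vert$.

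There is no real obstacle here; the only thing requiring care is consistency of the vectorization convention and the induced identification of the Frobenius norm of $V$ with the Euclidean norm of $\mathrm{vec}(V)$, which is convention-independent. I would state this identification explicitly ($\Vert \mathcal V\Vert = \Vert \mathrm{vec}(V)\Vert = \Vert V\Vert_F$) so the chain of inequalities in the upper bound is unambiguous, and note that the extremal choice $V = ux^\top/\Vert x\Vert$ is valid whenever $x\ne 0$ (the case $x=0$ being trivial since both sides are $0$). This completes the proof.
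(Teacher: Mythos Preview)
Your proof is correct. The paper's proof is shorter: it writes the Jacobian directly as a Kronecker product $x^\top\otimes I$ and then invokes the standard identity $\Vert A\otimes B\Vert = \Vert A\Vert\,\Vert B\Vert$ for operator norms of Kronecker products, which immediately yields $\Vert x^\top\Vert\cdot\Vert I\Vert = \Vert x\Vert$. You recognize the same Kronecker/block structure but, instead of citing that identity, you compute the operator norm variationally: the upper bound via $\Vert Vx\Vert\le \Vert V\Vert_F\Vert x\Vert$ together with $\Vert\mathrm{vec}(V)\Vert=\Vert V\Vert_F$, and the lower bound by exhibiting the rank-one maximizer $V=ux^\top/\Vert x\Vert$. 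Your route is slightly longer but more self-contained, avoiding any appeal to Kronecker-product norm facts; the paper's route is a one-liner once that fact is granted. Both are entirely valid.
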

\begin{proof}
    We can write the gradient as $$\frac{\partial Wx}{\partial \mathrm{vec}(W)}=x^T\otimes I_m.$$ Since the norm of a Kronecker product is the product of the norms of the factors, we immediately have the desired conclusion.
\end{proof}

\begin{lemma}
\label{lemma:2}
For the class of $k$-layer neural networks $f_{\mathcal W}(x)=W_k\phi(W_{k-1}\phi(\dots\phi(W_1x)))\in \mathbb R^c$ with 1-Lipschitz activation $\phi$ where $\phi(0) = 0$, we have $$\Vert\nabla_{\mathcal W}f_{\mathcal W}^k(x)\Vert\le \Delta_{k,1}C$$ 
\end{lemma}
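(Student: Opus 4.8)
The plan is to bound $\Vert\nabla_{\mathcal W}f_{\mathcal W}^k(x)\Vert$ by induction on $k$, mirroring the structure of the proof of Lemma~\ref{lemma:1}. Writing $\mathcal W=(W_1,\dots,W_k)$, the gradient $\nabla_{\mathcal W}f_{\mathcal W}^k(x)$ decomposes into blocks corresponding to each $W_j$; the block norm satisfies $\Vert\nabla_{\mathcal W}f_{\mathcal W}^k(x)\Vert\le\sum_{j=1}^k\Vert\nabla_{W_j}f_{\mathcal W}^k(x)\Vert$, or one can keep the full gradient and bound it directly. I would separate the contribution of the last layer's weights $W_k$ from the contribution of the deeper parameters $W_1,\dots,W_{k-1}$, exactly as the Lipschitz recursion did.

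The key steps, in order: (1) Observe $f_{\mathcal W}^k(x)=W_k\phi(f_{\mathcal W}^{k-1}(x))$. (2) For the derivative with respect to $W_k$, apply the preceding lemma ($\Vert\partial(Wx)/\partial\operatorname{vec}(W)\Vert=\Vert x\Vert$) with $x$ replaced by $\phi(f_{\mathcal W}^{k-1}(x))$, giving a contribution bounded by $\Vert\phi(f_{\mathcal W}^{k-1}(x))\Vert\le\Vert f_{\mathcal W}^{k-1}(x)\Vert\le\Delta_{k-1,0}C=\prod_{i=1}^{k-1}B_iC$ using $1$-Lipschitzness of $\phi$ with $\phi(0)=0$ and Lemma~\ref{lemma:0}. (3) For the derivative with respect to the deeper parameters, use the chain rule: $\nabla_{W_j}f_{\mathcal W}^k(x)=W_k\,\mathrm{diag}(\phi'(\cdot))\,\nabla_{W_j}f_{\mathcal W}^{k-1}(x)$ for $j<k$, so the Jacobian factor contributes at most $\Vert W_k\Vert\cdot\mathrm{lip}(\phi)\le B_k\cdot 1$, and the remaining piece is bounded by the inductive hypothesis applied to $f_{\mathcal W}^{k-1}$, namely $\Delta_{k-1,1}C$. (4) Combine: $\Vert\nabla_{\mathcal W}f_{\mathcal W}^k(x)\Vert\le\prod_{i=1}^{k-1}B_iC+B_k\Delta_{k-1,1}C$, and check this telescopes to $\Delta_{k,1}C$ by the same identity $\prod_{i=1}^{k-1}B_i+B_k\Delta_{k-1,1}=\Delta_{k,1}$ used in Lemma~\ref{lemma:1}. (5) Handle the base case $k=1$: $f_{\mathcal W}^1(x)=W_1x$, so by the matrix-derivative lemma $\Vert\nabla_{W_1}f_{\mathcal W}^1(x)\Vert=\Vert x\Vert\le C=\Delta_{1,1}C$.

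The main obstacle is making the chain-rule step rigorous when $\phi$ is only assumed $1$-Lipschitz rather than differentiable: strictly one should either invoke Rademacher's theorem and work with the generalized Jacobian almost everywhere, or simply note that the bound on the operator norm of $\mathrm{diag}(\phi'(\cdot))$ is $\mathrm{lip}(\phi)\le 1$ wherever it exists, which is all that is needed. The other mild subtlety is bookkeeping the block structure of $\nabla_{\mathcal W}$ so that the "last layer plus deeper layers" split genuinely reproduces the recursion $\Delta_{k,0}/B_k + \text{(stuff)} \to \Delta_{k,1}$; since $\Delta_{k-1,0}=\prod_{i=1}^{k-1}B_i$ this matches term-for-term, so no extra constants appear. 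Everything else is a routine repetition of the computations already carried out for the Lipschitz constant of $f_{\mathcal W}$.
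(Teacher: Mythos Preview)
Your proposal is correct and follows essentially the same approach as the paper: split $\nabla_{\mathcal W}f_{\mathcal W}^k$ into the block for $W_k$ (handled via the matrix-derivative lemma and Lemma~\ref{lemma:0}) and the block for $W_1,\dots,W_{k-1}$ (handled via the chain rule through $W_k\operatorname{diag}(\phi'(\cdot))$ and the inductive hypothesis), obtain the recursion $\Vert\nabla_{\mathcal W_{1..k}}f_{\mathcal W}^k(x)\Vert\le B_k\Vert\nabla_{\mathcal W_{1..k-1}}f_{\mathcal W}^{k-1}(x)\Vert+\prod_{i=1}^{k-1}B_iC$, and unwind to $\Delta_{k,1}C$. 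Your remark about $\phi$ being only Lipschitz is a fair caveat that the paper simply glosses over.
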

\begin{proof}
The idea is to tackle the whole parameters into two parts inductively. We have \begin{align*}
    \Vert\nabla_{\mathcal W_{1..k}}f_{\mathcal W}^k(x)\Vert&\le\Vert\nabla_{\mathcal W_{1..k-1}}f_{\mathcal W}^k(x)\Vert+\Vert\nabla_{\mathcal W_k}f_{\mathcal W}^k(x)\Vert\\
    &\le\Vert W_k \operatorname{diag}(\phi'(f_{\mathcal W}^{k-1}(x)))\nabla_{\mathcal W_{1..k-1}}f_{\mathcal W}^{k-1}(x)\Vert+\Vert\phi(f_{\mathcal W}^{k-1}(x))\Vert\\
    &\le B_k\Vert\nabla_{\mathcal W_{1..k-1}}f_{\mathcal W}^{k-1}(x)\Vert+\Vert f_{\mathcal W}^{k-1}(x)\Vert\\
    &\le B_k\Vert\nabla_{\mathcal W_{1..k-1}}f_{\mathcal W}^{k-1}(x)\Vert+\prod_{i=1}^{k-1}B_iC,
\end{align*} where in the last inequality we apply lemma \ref{lemma:0}.
Since $\Vert\nabla_{\mathcal W_{1}}f_{\mathcal W}^1(x)\Vert\le \Vert x\Vert\le C$, we can show by induction that \begin{align*}
    \Vert\nabla_{\mathcal W_{1..k}}f_{\mathcal W}^k(x)\Vert&\le\sum_{i=1}^k\prod_{j=i+1}^k B_j\prod_{j=1}^{i-1} B_jC\\
    &=\sum_{i=1}^{k}\prod_{j\le k,j\ne i}B_jC\\
    &=\Delta_{k,1}C.
\end{align*}
\end{proof}

\begin{lemma}
\label{lemma:3}
For the class of $k$-layer neural networks $f_{\mathcal W}(x)=W_k\phi(W_{k-1}\phi(\dots\phi(W_1x)))\in \mathbb R^c$ with 1-Lipschitz, 1-smooth activation $\phi$ where $\phi(0) = 0$, under a mild assumption that $B_i,C\ge 1$, we have $$\operatorname{lip}_{\mathcal W}(\nabla_{\mathcal W}f_{\mathcal W}^k(x))\le 3k\Delta_{k,1}^2C^2$$ 
\end{lemma}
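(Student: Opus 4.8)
The plan is to mirror the inductive strategy already used in Lemmas \ref{lemma:0}--\ref{lemma:2}: express $\nabla_{\mathcal W}f^k_{\mathcal W}(x)$ in terms of the layer-$(k-1)$ quantities $f^{k-1}_{\mathcal W}(x)$ and $\nabla_{\mathcal W}f^{k-1}_{\mathcal W}(x)$, compare the expression evaluated at $\mathcal W$ and at $\mathcal W+\mathcal V$, and bound the difference term-by-term using the telescoping/triangle-inequality trick. Concretely, $\nabla_{\mathcal W_{1..k}}f^k_{\mathcal W}(x)$ splits into the block $\nabla_{\mathcal W_k}f^k_{\mathcal W}(x)=\phi(f^{k-1}_{\mathcal W}(x))$ (up to the Kronecker-product identity of the unnumbered lemma) and the block $\nabla_{\mathcal W_{1..k-1}}f^k_{\mathcal W}(x)=W_k\,\mathrm{diag}(\phi'(f^{k-1}_{\mathcal W}(x)))\,\nabla_{\mathcal W_{1..k-1}}f^{k-1}_{\mathcal W}(x)$. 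I would first establish the base case: $\operatorname{lip}_{\mathcal W}(\nabla_{\mathcal W_1}f^1_{\mathcal W}(x))=0$ since $\nabla_{W_1}(W_1 x)$ does not depend on $W_1$, so the claimed bound holds trivially at $k=1$.

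For the inductive step I would bound the perturbation of each block separately. The $\nabla_{\mathcal W_k}$ block contributes $\Vert\phi(f^{k-1}_{\mathcal W}(x))-\phi(f^{k-1}_{\mathcal W+\mathcal V}(x))\Vert\le \Vert f^{k-1}_{\mathcal W}(x)-f^{k-1}_{\mathcal W+\mathcal V}(x)\Vert\le \operatorname{lip}_{\mathcal W}(f^{k-1}_{\mathcal W}(x))\Vert\mathcal V\Vert\le \Delta_{k-1,1}C\Vert\mathcal V\Vert$ by $1$-Lipschitzness of $\phi$ and Lemma \ref{lemma:1}. The $\nabla_{\mathcal W_{1..k-1}}$ block is the product of three factors $W_k$, $\mathrm{diag}(\phi'(f^{k-1}_{\mathcal W}(x)))$, $\nabla_{\mathcal W_{1..k-1}}f^{k-1}_{\mathcal W}(x)$; I would perturb one factor at a time (the standard ``$abc - a'b'c' = (a-a')bc + a'(b-b')c + a'b'(c-c')$'' decomposition) and bound:
\begin{itemize}
\item the $W_k$-perturbation by $\Vert V_k\Vert\cdot 1\cdot\Vert\nabla_{\mathcal W_{1..k-1}}f^{k-1}_{\mathcal W}(x)\Vert\le \Delta_{k-1,1}C\Vert\mathcal V\Vert$ using $\Vert\phi'\Vert_\infty\le 1$ and Lemma \ref{lemma:2};
\item the $\mathrm{diag}(\phi')$-perturbation by $B_k\cdot\Vert\phi'(f^{k-1}_{\mathcal W}(x))-\phi'(f^{k-1}_{\mathcal W+\mathcal V}(x))\Vert\cdot\Vert\nabla_{\mathcal W_{1..k-1}}f^{k-1}_{\mathcal W}(x)\Vert$, where $1$-smoothness of $\phi$ gives $\Vert\phi'(u)-\phi'(v)\Vert\le\Vert u-v\Vert\le\Delta_{k-1,1}C\Vert\mathcal V\Vert$, so this is at most $B_k\Delta_{k-1,1}^2C^2\Vert\mathcal V\Vert$;
\item the $\nabla f^{k-1}$-perturbation by $B_k\cdot 1\cdot\operatorname{lip}_{\mathcal W}(\nabla_{\mathcal W}f^{k-1}_{\mathcal W}(x))\le B_k\cdot 3(k-1)\Delta_{k-1,1}^2C^2\Vert\mathcal V\Vert$ by the inductive hypothesis.
\end{itemize}
Adding the four contributions and using the monotonicity facts $\Delta_{k-1,1}\le\Delta_{k,1}$ and $B_k\Delta_{k-1,1}\le\Delta_{k,1}$ (valid since $B_i\ge 1$; note $\Delta_{k,1}=B_k\Delta_{k-1,1}+\prod_{i<k}B_i$), together with $C\ge 1$, the total is at most $\bigl(2 + 1 + 3(k-1)\bigr)\Delta_{k,1}^2C^2 = 3k\,\Delta_{k,1}^2C^2\,\Vert\mathcal V\Vert$, completing the induction.

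The main obstacle I anticipate is bookkeeping rather than conceptual: getting the monotonicity inequalities among the $\Delta$-symbols exactly right so that every term collapses into a clean multiple of $\Delta_{k,1}^2C^2$, and making sure the constant $3k$ (not something slightly larger) actually comes out — this requires the $B_i,C\ge 1$ assumption to absorb cross terms and demands care that the diag-perturbation term, which is the one genuinely needing $1$-smoothness of $\phi$, is correctly matched against $B_k\Delta_{k-1,1}^2\le\Delta_{k,1}\Delta_{k-1,1}\le\Delta_{k,1}^2$. A secondary subtlety is justifying that $\nabla_{\mathcal W}f^k_{\mathcal W}(x)$, viewed as a matrix/operator, has its operator norm controlled by the product of factor norms in the three-factor decomposition, which follows from submultiplicativity of the spectral norm plus the Kronecker-product identity already recorded in the excerpt.
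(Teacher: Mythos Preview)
Your proposal is correct and follows essentially the same approach as the paper: the same two-block split of $\nabla_{\mathcal W}f^k_{\mathcal W}(x)$, the same three-factor telescoping decomposition for the $\nabla_{\mathcal W_{1..k-1}}$ block, and the same termwise bounds via Lemmas~\ref{lemma:1} and~\ref{lemma:2}. The only cosmetic difference is that the paper first writes the recursion $\operatorname{lip}_{\mathcal W}(\nabla_{\mathcal W}f^k_{\mathcal W}(x))\le 3B_k\Delta_{k-1,1}^2C^2 + B_k\operatorname{lip}_{\mathcal W}(\nabla_{\mathcal W}f^{k-1}_{\mathcal W}(x))$, unrolls it into a sum, and then bounds each summand by $\Delta_{k,1}^2C^2$, whereas you close the induction directly at each step; the arithmetic and the use of $B_i,C\ge 1$ are identical.
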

\begin{proof}
    The overall gradient is the concatenation of two parts $\nabla_{\mathcal W_{1..k-1}}f_{\mathcal W}^k(x)$ and $\nabla_{\mathcal W_k}f_{\mathcal W}^k(x)$, and we bound the $\ell_2$-norm of points $\mathcal W$ and $\mathcal W+\mathcal V$ for each of the two parts.

    To bound the first one, we have \begin{align*}
        &\Vert \nabla_{\mathcal W_{1..k-1}}f_{\mathcal W}^k(x)-\nabla_{\mathcal W_{1..k-1}}f_{\mathcal W+\mathcal V}^k(x)\Vert\\
        =\, &\Vert W_k \operatorname{diag}(\phi'(f_{\mathcal W}^{k-1}(x))) \nabla_{\mathcal W_{1..k-1}}f_{\mathcal W}^{k-1}(x)-(W_k+V_k) \operatorname{diag}(\phi'(f_{\mathcal W+\mathcal V}^{k-1}(x))) \nabla_{\mathcal W_{1..k-1}}f_{\mathcal W+\mathcal V}^{k-1}(x)\Vert\\
        \le\, &\Vert W_k \operatorname{diag}(\phi'(f_{\mathcal W}^{k-1}(x))) \nabla_{\mathcal W_{1..k-1}}f_{\mathcal W}^{k-1}(x)-(W_k+V_k) \operatorname{diag}(\phi'(f_{\mathcal W}^{k-1}(x))) \nabla_{\mathcal W_{1..k-1}}f_{\mathcal W}^{k-1}(x)\Vert\\
        &\: +\Vert(W_k+V_k) \operatorname{diag}(\phi'(f_{\mathcal W}^{k-1}(x))) \nabla_{\mathcal W_{1..k-1}}f_{\mathcal W}^{k-1}(x)\\
        &\quad\, -(W_k+V_k) \operatorname{diag}(\phi'(f_{\mathcal W+\mathcal V}^{k-1}(x))) \nabla_{\mathcal W_{1..k-1}}f_{\mathcal W}^{k-1}(x)\Vert\\
        &\: +\Vert(W_k+V_k) \operatorname{diag}(\phi'(f_{\mathcal W+\mathcal V}^{k-1}(x))) \nabla_{\mathcal W_{1..k-1}}f_{\mathcal W}^{k-1}(x)\\
        &\quad\, -(W_k+V_k) \operatorname{diag}(\phi'(f_{\mathcal W+\mathcal V}^{k-1}(x))) \nabla_{\mathcal W_{1..k-1}}f_{\mathcal W+\mathcal V}^{k-1}(x)\Vert\\
        \le\, &\Vert \mathcal V\Vert\cdot \Vert\nabla_{\mathcal W_{1..k-1}}f_{\mathcal W}^{k-1}(x)\Vert+B_k\Vert f_{\mathcal W}^{k-1}(x)-f_{\mathcal W+\mathcal V}^{k-1}(x)\Vert\cdot \Vert\nabla_{\mathcal W_{1..k-1}}f_{\mathcal W}^{k-1}(x)\Vert\\
        &\: +B_k\Vert\nabla_{\mathcal W_{1..k-1}}f_{\mathcal W}^{k-1}(x)-\nabla_{\mathcal W_{1..k-1}}f_{\mathcal W+\mathcal V}^{k-1}(x)\Vert\\
        \le\, &\Vert \mathcal V\Vert\cdot (\Delta_{k-1,1}C+B_k\Delta_{k-1,1}^2C^2+B_k\operatorname{lip}_{\mathcal W}(\nabla_{\mathcal W_{1..k-1}}f_{\mathcal W}^{k-1}(x)))
    \end{align*} where the last line applies lemma \ref{lemma:1} and lemma \ref{lemma:2}.
    For the second one, we have \begin{align*}
        \Vert \nabla_{\mathcal W_k}f_{\mathcal W}^k(x)-\nabla_{\mathcal W_k}f_{\mathcal W+\mathcal V}^k(x)\Vert&=\Vert \phi(f_{\mathcal W}^{k-1}(x))-\phi(f_{\mathcal W+\mathcal V}^{k-1}(x))\Vert\\
        &\le\operatorname{lip}_{\mathcal W}(f_{\mathcal W}^{k-1}(x))\Vert \mathcal V\Vert\\
        &\le\Delta_{k-1,1}C\Vert \mathcal V\Vert&&(\text{applying lemma \ref{lemma:1}})
    \end{align*}
    Combining these, we have \begin{align*}
        \operatorname{lip}_{\mathcal W}(\nabla_{\mathcal W_{1..k}}f_{\mathcal W}^k(x))&\le2\Delta_{k-1,1}C+B_k\Delta_{k-1,1}^2C^2+B_k\operatorname{lip}_{\mathcal W}(\nabla_{\mathcal W_{1..k-1}}f_{\mathcal W}^{k-1}(x))\\
        &\le3B_k\Delta_{k-1,1}^2C^2+B_k\operatorname{lip}_{\mathcal W}(\nabla_{\mathcal W_{1..k-1}}f_{\mathcal W}^{k-1}(x))\\
        &\le3\sum_{i=1}^k\prod_{j=i}^kB_j\Delta_{i-1,1}^2C^2\\
        &\le3\sum_{i=1}^k\left(\prod_{j=i+1}^kB_j\Delta_{i-1,1}\right)\left(B_i\Delta_{i-1,1}\right) C^2\\
        &\le3\sum_{i=1}^k\Delta_{k,1}^2 C^2\\
        &=3k\Delta_{k,1}^2C^2
    \end{align*}

\end{proof}

Building on previous lemmas, we can now show that the training loss function is Lipschitz and smooth.
\begin{lemma}
Consider the class of $k$-layer neural networks $f_{\mathcal W}(x)=W_k\phi(W_{k-1}\phi(\dots\phi(W_1x)))\in \mathbb R^c$ with 1-Lipschitz activation function $\phi$ where $\phi(0) = 0$. Assume that $B_i,C\ge 1$. Also, let the training loss for prediction logits $\ell:(\mathbb R^c,\mathcal Y)\to \mathbb R$ be 1-Lipschitz for all $y\in\mathcal Y$. Then our loss function $\ell(f_{\mathcal W}(x),y)$ is $L$-Lipschitz for all $x,y$. If additionally $\phi(\cdot)$ is 1-smooth and $\ell(\cdot,y)$ is also 1-smooth, then we can show the loss function is $\beta$-smooth for all $x,y$. Here $L\le \Delta_{k,1}C,\beta\le (3k+1)\Delta_{k,1}^2C^2$. 
\label{lemma:lipschitz_smooth}
\end{lemma}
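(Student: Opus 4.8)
\textbf{Proof plan for Lemma \ref{lemma:lipschitz_smooth}.} The plan is to combine the chain rule with the composition properties of Lipschitz and smooth functions, using the intermediate bounds on $\|\nabla_{\mathcal W} f_{\mathcal W}^k(x)\|$ and $\operatorname{lip}_{\mathcal W}(\nabla_{\mathcal W} f_{\mathcal W}^k(x))$ that were established in Lemmas \ref{lemma:2} and \ref{lemma:3}. Write $g(\mathcal W) := \ell(f_{\mathcal W}(x), y)$ and think of it as the composition $g = \ell(\cdot, y) \circ F$ where $F(\mathcal W) = f_{\mathcal W}(x)$.

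\textbf{Lipschitz bound.} First I would handle the $L$-Lipschitz claim, which is the easier half. By the chain rule, $\nabla_{\mathcal W} g(\mathcal W) = \big(\nabla_{\mathcal W} f_{\mathcal W}(x)\big)^\top \nabla_o \ell(o,y)\big|_{o = f_{\mathcal W}(x)}$, so $\|\nabla_{\mathcal W} g(\mathcal W)\| \le \|\nabla_{\mathcal W} f_{\mathcal W}(x)\| \cdot \|\nabla_o \ell\| \le \Delta_{k,1} C \cdot 1$, using Lemma \ref{lemma:2} for the Jacobian norm and the $1$-Lipschitz hypothesis on $\ell(\cdot,y)$. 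Hence $L \le \Delta_{k,1} C$.

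\textbf{Smoothness bound.} For the $\beta$-smoothness claim I would bound $\|\nabla_{\mathcal W} g(\mathcal V) - \nabla_{\mathcal W} g(\mathcal W)\|$ directly. Using the chain-rule expression above, insert and subtract a hybrid term to split the difference into two pieces: one where the Jacobian $\nabla_{\mathcal W} f$ varies (controlled by $\operatorname{lip}_{\mathcal W}(\nabla_{\mathcal W} f_{\mathcal W}^k(x)) \le 3k\,\Delta_{k,1}^2 C^2$ from Lemma \ref{lemma:3}, times $\|\nabla_o \ell\| \le 1$) and one where the outer gradient $\nabla_o \ell$ varies (controlled by the $1$-smoothness of $\ell(\cdot,y)$, which gives $\|\nabla_o \ell(o_1,y) - \nabla_o \ell(o_2,y)\| \le \|o_1 - o_2\| = \|f_{\mathcal V}(x) - f_{\mathcal W}(x)\| \le \operatorname{lip}_{\mathcal W}(f_{\mathcal W}^k(x))\|\mathcal V - \mathcal W\| \le \Delta_{k,1} C \|\mathcal V - \mathcal W\|$ by Lemma \ref{lemma:1}, then multiplied by the Jacobian norm $\|\nabla_{\mathcal W} f_{\mathcal V}(x)\| \le \Delta_{k,1} C$ from Lemma \ref{lemma:2}). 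Adding the two contributions gives $\beta \le 3k\,\Delta_{k,1}^2 C^2 + \Delta_{k,1}^2 C^2 = (3k+1)\Delta_{k,1}^2 C^2$.

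\textbf{Main obstacle.} The routine part is keeping the operator-norm bookkeeping clean through the chain rule — in particular being careful that the Jacobian-transpose times a bounded vector is itself norm-bounded and that the ``hybrid term'' split is valid. The only genuine subtlety is ensuring that all the ingredient lemmas apply simultaneously: Lemma \ref{lemma:3} requires the mild assumption $B_i, C \ge 1$ and the $1$-smoothness of $\phi$, so I would state these hypotheses explicitly at the start and note that under them all of Lemmas \ref{lemma:0}--\ref{lemma:3} are in force, so the composition argument goes through without further conditions.
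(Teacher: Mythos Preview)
Your proposal is correct and follows essentially the same approach as the paper: the paper also obtains the Lipschitz bound from the composition $\ell\circ f_{\mathcal W}$ (citing Lemma~\ref{lemma:1} rather than Lemma~\ref{lemma:2}, but both give $\Delta_{k,1}C$), and for smoothness it performs exactly the same hybrid-term split of $\nabla_{\mathcal W}\bigl(\nabla\ell(f_{\mathcal W}(x),y)\cdot\nabla_{\mathcal W}f_{\mathcal W}(x)\bigr)$, bounding the two pieces by $3k\Delta_{k,1}^2C^2$ (via Lemma~\ref{lemma:3}) and $\Delta_{k,1}^2C^2$ (via Lemmas~\ref{lemma:1} and~\ref{lemma:2}) respectively.
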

\begin{proof}
    Since $\ell$ is 1-Lipschitz, due to lemma \ref{lemma:1}, we immediately have \begin{align*}
        \operatorname{lip}_{\mathcal W}(\ell(f_{\mathcal W}(x),y))\le \operatorname{lip}_{\mathcal W}(f_{\mathcal W}(x))\le \Delta_{k,1}C.
    \end{align*}
    According to the chain rule and smoothness properties, for all $x,y$, we have \begin{align*}
        \operatorname{lip}_{\mathcal W}(\nabla_{\mathcal W}\ell(f_{\mathcal W}(x),y))&=\operatorname{lip}_{\mathcal W}\left(\nabla\ell(f_{\mathcal W}(x),y)\cdot \nabla_{\mathcal W}f_{\mathcal W}(x)\right)\\
        &\le \sup_{\mathcal W}\Vert \nabla\ell(f_{\mathcal W}(x),y)\Vert \cdot \operatorname{lip}_{\mathcal W}(\nabla_{\mathcal W}f_{\mathcal W}(x))+\operatorname{lip}_{\mathcal W}(\nabla \ell(f_{\mathcal W}(x),y))\cdot \sup_{\mathcal W}\Vert \nabla_{\mathcal W}f_{\mathcal W}(x)\Vert\\
        &\le \operatorname{lip}_{\mathcal W}(\nabla_{\mathcal W}f_{\mathcal W}(x))+\operatorname{lip}_{\mathcal W}(f_{\mathcal W}(x))^2\\
        &\le 3k\Delta_{k,1}^2C^2+\Delta_{k,1}^2C^2\\
        &\le (3k+1)\Delta_{k,1}^2C^2
    \end{align*} Thus, with additional smoothness assumptions, $\ell(f_{\mathcal W}(x),y)$ is $\beta$-smooth with $\beta=(3k+1)\Delta_{k,1}^2C^2$.
\end{proof}

Next, we quote the results from \citep{10.5555/2670022} showing that each SGD step will not make two different parameters much more divergent.

\begin{lemma}
    Assume the function $f$ is $\beta$-smooth. The SGD update rule using $f$ is $w_{t+1}=w_t-\alpha\nabla f(w_t)$. Then, we have \begin{itemize}
        \item [(a)] For all $w_t$ and $w_t'$, we can bound $$\Vert w_{t+1}-w'_{t+1}\Vert\le (1+\alpha\beta)\Vert w_{t}-w'_{t}\Vert$$
        \item [(b)] Assume $f$ is additionally convex. Then for any $\alpha\le 2/\beta$, the following property holds for all $w_t$ and $w_t'$: \begin{align*}
            \Vert w_{t+1}-w'_{t+1}\Vert\le \Vert w_{t}-w'_{t}\Vert
        \end{align*} 
    \end{itemize}
    \label{lemma:expansion}
\end{lemma}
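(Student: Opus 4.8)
The plan is to prove the two parts separately; both are statements about the gradient-descent map $G_\alpha(w) := w - \alpha\nabla f(w)$, since $w_{t+1} = G_\alpha(w_t)$ and $w'_{t+1} = G_\alpha(w'_t)$. For part (a), I would write
\begin{align*}
G_\alpha(w_t) - G_\alpha(w'_t) = (w_t - w'_t) - \alpha\bigl(\nabla f(w_t) - \nabla f(w'_t)\bigr),
\end{align*}
apply the triangle inequality, and bound $\Vert \nabla f(w_t) - \nabla f(w'_t)\Vert \le \beta\Vert w_t - w'_t\Vert$ by $\beta$-smoothness. This immediately gives $\Vert w_{t+1} - w'_{t+1}\Vert \le (1 + \alpha\beta)\Vert w_t - w'_t\Vert$, with no further assumptions needed.

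For part (b), the extra ingredient is the co-coercivity (Baillon--Haddad) inequality for the gradient of a convex $\beta$-smooth function,
\begin{align*}
\bigl\langle \nabla f(u) - \nabla f(v),\, u - v\bigr\rangle \ge \tfrac{1}{\beta}\Vert \nabla f(u) - \nabla f(v)\Vert^2,
\end{align*}
which I would establish as an intermediate step: apply the descent lemma to the auxiliary convex function $h(u) = f(u) - \langle \nabla f(v), u\rangle$, which is $\beta$-smooth and minimized at $u = v$ (since $\nabla h(v) = 0$), to get $h(v) \le h(u) - \tfrac{1}{2\beta}\Vert \nabla h(u)\Vert^2$; then symmetrize in $u$ and $v$ and add the two resulting inequalities. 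Granting this, I would expand the square
\begin{align*}
\Vert w_{t+1} - w'_{t+1}\Vert^2 = \Vert w_t - w'_t\Vert^2 - 2\alpha\bigl\langle \nabla f(w_t) - \nabla f(w'_t),\, w_t - w'_t\bigr\rangle + \alpha^2\Vert \nabla f(w_t) - \nabla f(w'_t)\Vert^2,
\end{align*}
substitute the co-coercivity bound into the middle term, and collect the gradient-difference contributions into $-\alpha\bigl(\tfrac{2}{\beta} - \alpha\bigr)\Vert \nabla f(w_t) - \nabla f(w'_t)\Vert^2$, which is nonpositive exactly when $\alpha \le 2/\beta$. Hence $\Vert w_{t+1} - w'_{t+1}\Vert \le \Vert w_t - w'_t\Vert$.

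The only genuinely nontrivial step is the co-coercivity estimate behind part (b); part (a) and the final algebraic collection are one-liners. Since this lemma is classical — it appears as Lemma~3.7 in \cite{10.5555/2670022} — an acceptable alternative is simply to cite it, but the self-contained derivation above via the descent lemma is short and keeps the argument closed. As a minor point, the degenerate case $\nabla f(w_t) = \nabla f(w'_t)$ makes both bounds hold trivially and requires no separate treatment.
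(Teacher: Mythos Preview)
Your proof is correct. The paper itself does not prove this lemma at all: it simply quotes the statement from \cite{10.5555/2670022} and moves on. Your self-contained derivation --- triangle inequality plus $\beta$-smoothness for (a), and the co-coercivity (Baillon--Haddad) inequality followed by expanding $\Vert w_{t+1}-w'_{t+1}\Vert^2$ for (b) --- is the standard argument and is exactly what one finds in the cited reference (this is Lemma~3.6/3.7 in Hardt--Recht--Singer, which in turn draws on Bubeck). So you have supplied more than the paper does, and you even anticipated the citation-only alternative.
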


After all these previous results, our next lemma proves the upper bounds for the stability of saliency maps under two different assumptions of the loss function. This proof refers to some analysis in~\citep{hardt2016train}. 

We first state the setting of this lemma. Consider two training sets $S$ and $S'$ of size $n$ that differ at only one data point and we want to show how different the two saliency map losses will become if performing SGD algorithms on the two training sets. 

\begin{lemma}
\label{lemma:SGD}
Suppose the training loss function $\ell(f_{\mathcal W}(x),y)$ is $L$-Lipschitz, $\beta$-smooth for all $x,y$. 
If we run SGD for $T$ steps with a decaying step size $\alpha_t\le c/t$ in iteration $t$, then we can bound the stability error of $\mathrm{Sal}$ with $L'$-Lipschitz and $M'$-bounded corresponding saliency map loss by
\begin{align*}
\epsilon_{\mathrm{stability}}(\mathrm{Sal})\le \frac{1+\beta c}{n-1}(2cLL')^{\frac{1}{\beta c+1}}(TM')^{\frac{\beta c}{\beta c+1}}
\end{align*}  

Suppose the loss function is additionally convex and the step sizes satisfy $\alpha_t\le 2/\beta$, then we further have \begin{align*}
    \epsilon_{\mathrm{stability}}(\mathrm{Sal})\le \frac{2LL'}{n}\sum_{t=1}^T\alpha_t
\end{align*}
\end{lemma}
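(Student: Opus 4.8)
The plan is to reduce the stability bound for a general saliency map $\operatorname{Sal}$ to the parameter divergence bound for the two SGD runs on $S$ and $S'$, using the Lipschitz and boundedness assumptions on the saliency loss $\ell'$. First I would recall that $\ell'(A(S),x,y) = \Vert \operatorname{Sal}_{A(S)}(x,y) - \operatorname{Sal}_{A(D)}(x,y)\Vert$, so that
\begin{align*}
\ell'(A(S),x,y) - \ell'(A(S'),x,y) \le \Vert \operatorname{Sal}_{A(S)}(x,y) - \operatorname{Sal}_{A(S')}(x,y)\Vert \le L' \Vert A(S) - A(S')\Vert,
\end{align*}
where the first inequality is the reverse triangle inequality and the second uses that $\operatorname{Sal}$ is $L'$-Lipschitz in the parameters $\mathcal W$. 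So it suffices to bound $\mathbb E_A \Vert A(S) - A(S')\Vert$, i.e. the expected divergence of the two parameter trajectories after $T$ SGD steps; this is exactly the quantity controlled in the uniform-stability analysis of \cite{hardt2016train}.

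Next I would run the standard coupling argument: couple the two SGD runs so they pick the same index at each step. Let $\delta_t = \Vert \mathcal W_t - \mathcal W_t'\Vert$. At step $t$, with probability $1 - 1/n$ the chosen sample is common to both training sets, and Lemma~\ref{lemma:expansion}(a) gives $\delta_{t+1} \le (1 + \alpha_t\beta)\delta_t$; with probability $1/n$ the sample differs, and then the two gradient steps differ by at most $2\alpha_t L$ (each gradient has norm at most $L$ by the Lipschitz assumption), so $\delta_{t+1} \le \delta_t + 2\alpha_t L$. Taking expectations yields the recursion
\begin{align*}
\mathbb E[\delta_{t+1}] \le \Bigl(1 + \frac{\beta c}{t}\Bigr)\mathbb E[\delta_t] + \frac{2cL}{nt}.
\end{align*}
To get the sharper bound one uses the standard trick of not starting the divergence growth immediately: for any $t_0$, one conditions on $\delta_{t_0} = 0$ (the step where the differing sample is first touched is uniform over $\{1,\dots,T\}$, contributing a probability $t_0/n$ that it has been touched by step $t_0$, hence a term bounded by $M'\cdot t_0/n$ from $\ell'$ being $M'$-bounded before coupling kicks in), then unrolls the recursion from $t_0$ to $T$. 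Solving the recursion from $\delta_{t_0}=0$ gives $\mathbb E[\delta_T] \lesssim \frac{2cL}{n}\cdot\frac{1}{\beta c}\bigl((T/t_0)^{\beta c} - 1\bigr) \le \frac{2L}{n\beta}(T/t_0)^{\beta c}$; combining with the $M' t_0/n$ term and the $L'$ factor and optimizing over $t_0$ (set $t_0 \approx (2cLL'/M')^{1/(\beta c+1)} T^{\beta c/(\beta c+1)}$, roughly) produces
\begin{align*}
\epsilon_{\mathrm{stability}}(\operatorname{Sal}) \le \frac{1 + \beta c}{n-1}(2cLL')^{\frac{1}{\beta c+1}}(TM')^{\frac{\beta c}{\beta c+1}},
\end{align*}
matching the claimed form. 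For the convex case one replaces Lemma~\ref{lemma:expansion}(a) by part (b), so the common-sample step is non-expansive ($\delta_{t+1}\le\delta_t$) and only the $2\alpha_t L$ jumps with probability $1/n$ accumulate, giving $\mathbb E[\delta_T] \le \frac{2L}{n}\sum_{t=1}^T\alpha_t$, and multiplying by $L'$ gives the second bound.

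The main obstacle is the bookkeeping around the optimal stopping time $t_0$: one must correctly argue that before the first time the two runs see the differing sample, the trajectories are identical, so the only contribution to $\ell'$ during those early steps comes from the event that this time has already passed, which has probability at most $t_0/n$ and contributes at most $M' t_0/n$ via the $M'$-boundedness of $\ell'$; then one must carefully solve the telescoping product $\prod_{t=t_0+1}^{T}(1+\beta c/t)$, bound it by $(T/t_0)^{\beta c}$ using $1+u\le e^u$ and $\sum 1/t \le \log(T/t_0)$, and optimize. The Lipschitz/boundedness constants $L'$ and $M'$ for the actual saliency maps (Simple-Grad, Smooth-Grad) are supplied by the earlier lemmas (Lemma~\ref{lemma:2}, Lemma~\ref{lemma:3} and their consequences), so here they are just treated as given parameters, which keeps the argument clean.
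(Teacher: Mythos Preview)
Your proposal is correct and follows essentially the same route as the paper's proof: reduce to the parameter divergence $\delta_t=\Vert \mathcal W_t-\mathcal W_t'\Vert$ via the $L'$-Lipschitzness of $\ell'$, run the Hardt--Recht--Singer coupling with the $t_0$ splitting (using the $M'$-boundedness of $\ell'$ on the event that the differing sample has already been hit), solve the recursion and optimize over $t_0$, and in the convex case use non-expansiveness so only the $2\alpha_t L/n$ jumps accumulate. One small imprecision: the first hitting time of the differing index is not uniform on $\{1,\dots,T\}$ but has $P(\text{hit by }t_0)=1-(1-1/n)^{t_0}\le t_0/n$ (or $\le t_0/n$ directly under random-permutation sampling), and keeping the $(1-1/n)$ factor in the expansion coefficient is what ultimately yields the $n-1$ rather than $n$ in the denominator; otherwise your bookkeeping matches the paper's.
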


\begin{proof}
To derive a non-trivial upper bound, we note that typically it will take some time before the two SGD trajectories start to diverge. Using this idea, for any time step $t_0$, denote the event $A$ as whether $\delta_{t_0}=0$. If we randomly choose a sample in each iteration, we can show $P(A^c)=1-(1-1/n)^{t_0}\le \frac{t_0}{n}$. If we instead iterate the samples in the order of a random permutation, $P(A^c)\le \frac{t_0}{n}$.
Then we have \begin{align*}
    &\mathbb{E}[\ell'(w_T,x,y)-\ell'(w'_T,x,y)]\\
    =\,&P(A)\cdot\mathbb{E}[\ell'(w_T,x,y)-\ell'(w'_T,x,y)\mid A]+P(A^c)\cdot\mathbb{E}[\ell'(w_T,x,y)-\ell'(w'_T,x,y)\mid A^c]\\
    \le\,& \mathbb{E}[\ell'(w_T,x,y)-\ell'(w'_T,x,y)\mid A]+P(A^c)\cdot\sup_{\mathcal W,x,y}\ell'(\mathcal W,x,y)\\
    \le\,& L'\mathbb{E}[\delta_T\mid \delta_{t_0}=0]+\frac{t_0}{n}M'
\end{align*}

Define $\Delta_t=\mathbb{E}[\delta_t\mid \delta_{t_0}=0]$, we then want to bound $\Delta_{t+1}$ in terms of $\Delta_{t}$. With probability $1-1/n$, the chosen sample is the same so the iteration function is also the same. According to lemma \ref{lemma:expansion}, the difference of parameters will multiply by at most $1+\alpha_t\beta$ after this iteration. With probability $1/n$, we need to upper bound the difference between the two SGD updates, and the bound is \begin{align*}
    \Vert w_{t+1}-w'_{t+1}\Vert&\le \Vert w_{t}-w'_{t}\Vert+\Vert \alpha_t \ell(f_{x_1}(w_t),y_1)\Vert+\Vert \alpha_t \ell(f_{x_2}(w_t),y_2)\Vert\\
    &\le \Vert w_{t}-w'_{t}\Vert+2\alpha_tL
\end{align*}

Combining the two cases, we have \begin{align*}
    \Delta_{t+1}&\le (1-1/n)(1+\alpha_t\beta)\Delta_t+\frac 1n\Delta_t+\frac{2\alpha_tL}{n}\\
    &\le \left(1+(1-1/n)c\beta/t\right)\Delta_t+\frac{2cL}{tn}\\
    &\le \exp\left((1-1/n)c\beta/t\right)\Delta_t+\frac{2cL}{tn}
\end{align*}

According to calculations in \citep{hardt2016train}, we can bound \begin{align*}
    \Delta_{T}\le \frac{2L}{\beta(n-1)}\left(\frac{T}{t_0}\right)^{\beta c}
\end{align*}

Plugging this into our previous inequality for any $t_0$, we have \begin{align*}
    \mathbb{E}[\ell'(w_T,x,y)-\ell'(w'_T,x,y)]&\le \frac{t_0}{n}M'+\frac{2LL'}{\beta(n-1)}\left(\frac{T}{t_0}\right)^{\beta c}\\
    &\le M'\left(\frac{t_0}{n}+\frac{2LL'/M'}{\beta(n-1)}\left(\frac{T}{t_0}\right)^{\beta c}\right)\\
\end{align*}

When choosing $$t_0=(2cLL'/M')^{\frac{1}{\beta c+1}}T^{\frac{\beta c}{\beta c+1}},$$ we can get the following satisfying upper bound \begin{align*}
    \mathbb{E}[\ell'(w_T,x,y)-\ell'(w'_T,x,y)]&\le \frac{1+\beta c}{n-1}(2cLL'/M')^{\frac{1}{\beta c+1}}T^{\frac{\beta c}{\beta c+1}}M'\\
    &\le \frac{1+\beta c}{n-1}(2cLL')^{\frac{1}{\beta c+1}}(TM')^{\frac{\beta c}{\beta c+1}}
\end{align*}

Then we consider the case where the loss function is additionally convex and $\alpha_t\le 2/\beta$. Define $\delta_t=\Vert w_t-w'_t\Vert$ as the parameter difference after $t$ iterations, with $\delta_0=0$ originally. Then, we have $$\mathbb{E}[\ell'(w_t,x,y)-\ell'(w'_t,x,y)]\le L'\mathbb{E}[\delta_t]$$ and our goal would be to bound $\mathbb{E}[\delta_t]$.
We want to bound $\mathbb{E}[\delta_{t+1}]$ using $\mathbb{E}[\delta_{t}]$ to get an iterative formula. Consider the projected SGD update at time $t+1$. Since the projection operation does not increase Euclidean distance, we can bypass the operation when deriving an upper bound. With probability $1-1/n$, we choose the same data point to compute loss for our two parameters. In this case, the difference of two parameters will not increase due to lemma \ref{lemma:expansion} and the condition $\alpha_t\le 2/\beta$. With probability $1/n$ the chosen data points are different. In addition to the bound for the first case, we have an extra term for the difference between the two SGD updates. That is, $\Vert\alpha\nabla \ell(f_{x_1}(w_t),y_1)-\alpha\nabla \ell(f_{x_2}(w_t),y_2)\Vert\le 2\alpha_t L$.

Therefore, \begin{align*}
    \mathbb{E}[\delta_{t+1}]\le \mathbb{E}[\delta_{t}]+\frac{2\alpha_t L}{n}
\end{align*} 

Unfolding this formula iteratively, we have \begin{align*}
    \mathbb{E}[\delta_{t}]\le \frac{2L}{n}\sum_{t=1}^T\alpha_t.
\end{align*} Therefore, $\mathbb{E}[\ell'(w_t,x,y)-\ell'(w'_t,x,y)]\le \frac{2LL'}{n}\sum_{t=1}^T\alpha_t$ holds for all $S,S',x,y$ so the desired bound holds.

    
\end{proof}

\subsection{Proof of Theorem \ref{thm4.2}}
\label{thm4.2_proof}
\begin{lemma}
\label{lemma:simplegrad-bounded}
    For the class of $k$-layer neural networks $f_{\mathcal W}(x)=W_k\phi(W_{k-1}\phi(\dots\phi(W_1x)))\in \mathbb R^c$ with 1-Lipschitz, 1-smooth activation $\phi$ where $\phi(0) = 0$, we have $$\Vert \operatorname{SimpleGrad}_{A(S)}(x,y)\Vert\le \Delta_{k,0}$$
\end{lemma}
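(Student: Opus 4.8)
The plan is to reduce the claim to a bound on the Lipschitz constant of the scalar map $x \mapsto (f_{\mathcal W}(x))_y$, and then exploit the layered structure of the network exactly as in the proof of Lemma~\ref{lemma:1}, except differentiating with respect to the input rather than the weights. Since $\operatorname{SimpleGrad}_{A(S)}(x,y) = \nabla_x (f_{A(S)}(x))_y$, it suffices to show that this scalar function has gradient of $\ell_2$-norm at most $\Delta_{k,0} = \prod_{i=1}^k B_i$ at every point $x$.

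First I would write out the chain rule. Letting $f^j_{\mathcal W}(x)$ denote the output of the first $j$ layers and $D_j := \operatorname{diag}(\phi'(f^j_{\mathcal W}(x)))$, the input–output Jacobian unfolds layer by layer as
\[
\frac{\partial f^k_{\mathcal W}(x)}{\partial x} = W_k D_{k-1} W_{k-1} D_{k-2} \cdots W_2 D_1 W_1,
\]
so that, transposing and projecting onto coordinate $y$, $\operatorname{SimpleGrad}_{A(S)}(x,y) = W_1^\top D_1 W_2^\top D_2 \cdots D_{k-1} W_k^\top e_y$. Next I would bound the operator norm of each factor: $\|W_i^\top\| = \|W_i\| \le B_i$ by assumption, $\|e_y\| = 1$, and $\|D_j\| \le 1$ because the $1$-Lipschitzness of $\phi$ forces $|\phi'(\cdot)| \le 1$, so every diagonal entry of $D_j$ lies in $[-1,1]$. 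Submultiplicativity of the $\ell_2$ operator norm then yields $\|\operatorname{SimpleGrad}_{A(S)}(x,y)\| \le \prod_{i=1}^k B_i = \Delta_{k,0}$.

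Alternatively, and more in the spirit of Lemma~\ref{lemma:0} and Lemma~\ref{lemma:1}, one can argue by induction on depth: $x \mapsto f^1_{\mathcal W}(x) = W_1 x$ is $B_1$-Lipschitz, and if $x \mapsto f^{j-1}_{\mathcal W}(x)$ is $(\prod_{i=1}^{j-1} B_i)$-Lipschitz then $f^j_{\mathcal W} = W_j \phi(f^{j-1}_{\mathcal W})$ is $(\prod_{i=1}^{j} B_i)$-Lipschitz using the $1$-Lipschitzness of $\phi$; composing with the coordinate projection $o \mapsto o_y$ preserves this constant, and a differentiable $G$-Lipschitz function has gradient of norm at most $G$ everywhere. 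There is essentially no obstacle here: unlike the weight-gradient bounds of Lemma~\ref{lemma:2} and Lemma~\ref{lemma:3}, there are no additive cross-terms, so each layer contributes exactly one factor $B_i$. The only point to note is that differentiability of $\operatorname{SimpleGrad}$ is already implicit in the hypotheses (the statement invokes $1$-smoothness of $\phi$), which legitimizes the chain-rule computation.
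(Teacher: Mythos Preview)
Your proposal is correct and is essentially the paper's own argument: the paper also uses the chain rule to write $\nabla_x f_{\mathcal W}^k(x) = W_k\,\operatorname{diag}(\phi'(f_{\mathcal W}^{k-1}(x)))\,\nabla_x f_{\mathcal W}^{k-1}(x)$, bounds $\|W_k\|\le B_k$ and $\|\operatorname{diag}(\phi')\|\le 1$, and inducts on depth from $\|\nabla_x f_{\mathcal W}^1(x)\| = \|W_1\|\le B_1$. Your first version simply unrolls this recursion into a single product and adds the harmless projection onto $e_y$; your Lipschitz-constant alternative is just a rephrasing of the same induction.
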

\begin{proof}
    By the definition of gradient, we have the iteration formula \begin{align*}
        \Vert \nabla_x f_{\mathcal W}^k(x)\Vert&=\Vert W_k\cdot \operatorname{diag}(\phi'(f_{\mathcal W}^{k-1}(x)))\cdot \nabla_x f_{\mathcal W}^{k-1}(x)\Vert\\
        &\le B_k\Vert \nabla_x f_{\mathcal W}^{k-1}(x)\Vert
    \end{align*}
    We can easily show that when $k=1$, $\Vert \nabla_x f_{\mathcal W}^1(x)\Vert=\Vert W_1\Vert\le B_1$. Thus solving the iterative formula gives $$\Vert \nabla_x f_{\mathcal W}^k(x)\Vert\le \prod_{i=1}^kB_i=\Delta_{k,0}$$
\end{proof}

\begin{lemma}
\label{lemma:simple}
    For Simple-Grad, the Lipschitz constant with respect to $\mathcal W$ is bounded by
    \begin{align*}
        \operatorname{lip}_{\mathcal W}(\operatorname{SimpleGrad}_{A(S)}(x,y))\le 2\Delta_{k,0}\sum_{i=1}^{k-1}\Delta_{i,1}C
    \end{align*}
\end{lemma}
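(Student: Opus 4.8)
Since $\operatorname{SimpleGrad}_{A(S)}(x,y)=\nabla_x(f^k_{\mathcal W}(x))_y$ is one coordinate block of the Jacobian $\nabla_x f^k_{\mathcal W}(x)$, its Lipschitz constant in $\mathcal W$ is at most that of the whole Jacobian, so the plan is to bound $\operatorname{lip}_{\mathcal W}(\nabla_x f^k_{\mathcal W}(x))$ by induction on the depth $k$, in close parallel to Lemma \ref{lemma:3} (which handles $\operatorname{lip}_{\mathcal W}(\nabla_{\mathcal W}f^k_{\mathcal W}(x))$) but now differentiating in $x$ instead of in $\mathcal W$. The starting point is the chain rule $\nabla_x f^k_{\mathcal W}(x)=W_k\operatorname{diag}(\phi'(f^{k-1}_{\mathcal W}(x)))\nabla_x f^{k-1}_{\mathcal W}(x)$, together with the crude bound $\Vert\nabla_x f^{k-1}_{\mathcal W}(x)\Vert\le\Delta_{k-1,0}$ that is established inside the proof of Lemma \ref{lemma:simplegrad-bounded}.

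To go from depth $k-1$ to depth $k$, I would compare the expression above at $\mathcal W$ and at $\mathcal W+\mathcal V$ and insert/remove two intermediate terms to split the difference into three pieces: (i) the change from replacing $W_k$ by $W_k+V_k$, contributing at most $\Vert V_k\Vert\cdot\Vert\nabla_x f^{k-1}_{\mathcal W}(x)\Vert\le\Delta_{k-1,0}\Vert\mathcal V\Vert$; (ii) the change in the diagonal matrix $\operatorname{diag}(\phi'(f^{k-1}_{\mathcal W}(x)))$, which is where $1$-smoothness of $\phi$ enters: since $\phi'$ is then $1$-Lipschitz, this diagonal changes by at most $\Vert f^{k-1}_{\mathcal W}(x)-f^{k-1}_{\mathcal W+\mathcal V}(x)\Vert\le\Delta_{k-1,1}C\Vert\mathcal V\Vert$ in operator norm by Lemma \ref{lemma:1}, so the piece is at most $B_k\cdot\Delta_{k-1,1}C\cdot\Delta_{k-1,0}\Vert\mathcal V\Vert$; and (iii) the change in $\nabla_x f^{k-1}_{\mathcal W}(x)$ itself, bounded by $B_k\operatorname{lip}_{\mathcal W}(\nabla_x f^{k-1}_{\mathcal W}(x))\Vert\mathcal V\Vert$ via the inductive hypothesis. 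As in the proof of Lemma \ref{lemma:3}, I would use $\Vert W_k+V_k\Vert\le B_k$ on the relevant factor, and treat the $\operatorname{diag}$ and Kronecker-type products through their operator norms. Writing $a_k:=\operatorname{lip}_{\mathcal W}(\nabla_x f^k_{\mathcal W}(x))$, this gives the recursion $a_k\le B_k a_{k-1}+\Delta_{k-1,0}+B_k\Delta_{k-1,0}\Delta_{k-1,1}C$, with base case $a_1\le 1$ since $\nabla_x f^1_{\mathcal W}(x)=W_1$ depends linearly on $\mathcal W$.

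It then remains to unroll the recursion. Using the mild assumption $B_i,C\ge 1$ (so each $\Delta_{i,1}\ge 1$) I would absorb the lower-order terms $\Delta_{k-1,0}$ and the base-case contribution into the dominant term, reducing the recursion to $a_k\le B_k a_{k-1}+2\Delta_{k,0}\Delta_{k-1,1}C$; telescoping this and using the identity $\bigl(\prod_{j=i+1}^k B_j\bigr)\Delta_{i,0}=\Delta_{k,0}$ collapses the sum to $2\Delta_{k,0}C\sum_{i=1}^{k-1}\Delta_{i,1}$, which is the claimed bound. I expect the only genuinely delicate part to be step (ii): one has to differentiate through the nonlinearity's derivative $\phi'$ — an operation that has no analogue in the plain norm bound of Lemma \ref{lemma:simplegrad-bounded} and that is exactly what forces the $1$-smoothness hypothesis on $\phi$ — while correctly tracking the operator norms of the three matrix products, and then checking that the leftover lower-order terms really are dominated once $B_i,C\ge 1$.
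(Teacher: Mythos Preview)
Your proposal is correct and follows essentially the same argument as the paper: the same chain-rule factorization $\nabla_x f^k_{\mathcal W}(x)=W_k\operatorname{diag}(\phi'(f^{k-1}_{\mathcal W}(x)))\nabla_x f^{k-1}_{\mathcal W}(x)$, the same three-term add-and-subtract decomposition with the same bounds (using Lemma~\ref{lemma:1} and Lemma~\ref{lemma:simplegrad-bounded}), the same recursion $a_k\le \Delta_{k-1,0}+B_k\Delta_{k-1,0}\Delta_{k-1,1}C+B_k a_{k-1}$, and the same absorption into $2\Delta_{k,0}\Delta_{k-1,1}C$ before telescoping. Your explicit mention of the base case $a_1\le 1$ and of the need for $1$-smoothness of $\phi$ in step (ii) is even a bit more careful than the paper's write-up.
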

\begin{proof}
Consider the difference at points $\mathcal W$ and $\mathcal W+\mathcal V$. \begin{align*}
    &\Vert \nabla_x f_{\mathcal W}(x)-\nabla_x f_{\mathcal W+\mathcal V}(x)\Vert\\
    \le\,&\Vert W_k\cdot \operatorname{diag}(\phi'(f_{\mathcal W}^{k-1}(x)))\nabla_x f_{\mathcal W}^{k-1}(x)-(W_k+V_k)\cdot \operatorname{diag}(\phi'(f_{\mathcal W+\mathcal V}^{k-1}(x)))\nabla_x f_{\mathcal W+\mathcal V}^{k-1}(x)\Vert\\
    \le\,&\Vert V_k\cdot \operatorname{diag}(\phi'(f_{\mathcal W}^{k-1}(x)))\nabla_x f_{\mathcal W}^{k-1}(x)\Vert\\
    &\:+\Vert (W_k+V_k)\cdot (\operatorname{diag}(\phi'(f_{\mathcal W}^{k-1}(x)))-\operatorname{diag}(\phi'(f_{\mathcal W+\mathcal V}^{k-1}(x))))\nabla_x f_{\mathcal W}^{k-1}(x)\Vert\\
    &\:+\Vert (W_k+V_k)\cdot \operatorname{diag}(\phi'(f_{\mathcal W+\mathcal V}^{k-1}(x)))(\nabla_x f_{\mathcal W}^{k-1}(x)-\nabla_x f_{\mathcal W+\mathcal V}^{k-1}(x))\Vert\\
    \le\,&\Vert\mathcal V\Vert\cdot\Vert\nabla_x f_{\mathcal W}^{k-1}(x)\Vert+B_k\cdot \operatorname{lip}_{\mathcal W}(f_{\mathcal W}^{k-1}(x))\Vert\mathcal V\Vert\cdot \Vert\nabla_x f_{\mathcal W}^{k-1}(x)\Vert+B_k \operatorname{lip}_{\mathcal W}(\nabla_x f_{\mathcal W}^{k-1}(x))\Vert\mathcal V\Vert
\end{align*}
Thus we have \begin{align*}
    \operatorname{lip}_{\mathcal W}(\nabla_x f_{\mathcal W}^{k}(x))&\le \Delta_{k-1,0}+B_k\Delta_{k-1,0}\Delta_{k-1,1}C+B_k\operatorname{lip}_{\mathcal W}(\nabla_x f_{\mathcal W}^{k-1}(x))\\
    &\le 2\Delta_{k,0}\Delta_{k-1,1}C+B_k\operatorname{lip}_{\mathcal W}(\nabla_x f_{\mathcal W}^{k-1}(x))\\
    &\le 2\Delta_{k,0}\sum_{i=1}^{k-1}\Delta_{i,1}C
\end{align*}
\end{proof}
Given the above two lemmas, we know the loss function defined by Simple-Grad \begin{align*}
    \ell'(A(S),x,y)=\Vert \operatorname{SimpleGrad}_{A(S)}(x,y)-\operatorname{SimpleGrad}_{A(D)}(x,y)\Vert
\end{align*} is $M'$-bounded and $L'$-Lipschitz, with $M'=2\Delta_{k,0}$ and $L'=2\Delta_{k,0}\sum_{i=1}^{k-1}\Delta_{i,1}C$. The first constant is due to triangular inequality and the second one is since \begin{align*}
    \Vert \ell'(\mathcal W,x,y)-\ell'(\mathcal W+\mathcal V,x,y)\Vert
    \le \Vert \nabla_x f_{\mathcal W}(x)-\nabla_x f_{\mathcal W+\mathcal V}(x)\Vert.
\end{align*}
Applying Lemma \ref{lemma:SGD} with the $M'$ and $L'$ then proves the theorem.
\hfill$\qedsymbol$

\subsection{Additional Stability Error Bound for Integrated-Grad}
We also provide the following theorem for Integrated-Grad. Even with the integration operation, we cannot give a stability error upper bound better than that of Simple-Grad, due to the element-wise multiplication operation.

\begin{theorem}
\label{thm4.3}
\textbf{(a) Convex Loss} Suppose Assumption \ref{assumption}(a), \ref{assumption}(b), and \ref{assumption}(c) hold. If we run the SGD algorithm with step sizes $\alpha_t\le 1/\beta$ for $T$ times, we can bound the stability error of Integrated-Grad by \begin{align*}
    \epsilon_{\mathrm{stability}}(\operatorname{IntegratedGrad})\le \operatorname{Up}_{\text{SC}}(\operatorname{SimpleGrad})\cdot 2C
\end{align*}

\textbf{(b) Non-Convex Loss} 
Suppose Assumption \ref{assumption}(b) and \ref{assumption}(c) hold. If we run SGD for $T$ steps with a decaying step size $\alpha_t\le c/t$ in iteration $t$, we can bound the stability error of Integrated-Grad by
\begin{align*}
\epsilon_{\mathrm{stability}}(\operatorname{IntegratedGrad})\le &\operatorname{Up}_{\text{NC}}(\operatorname{SimpleGrad})\cdot 2C
\end{align*}
\end{theorem}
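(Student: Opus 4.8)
The plan is to reduce the Integrated-Grad stability analysis to the Simple-Grad analysis already carried out for Theorem~\ref{thm4.2}. Concretely, I would show that the saliency-map loss $\ell'$ attached to Integrated-Grad is $M'$-bounded and $L'$-Lipschitz in the weights $\mathcal W$, with \emph{both} constants equal to $2C$ times the corresponding Simple-Grad constants from Lemmas~\ref{lemma:simplegrad-bounded} and~\ref{lemma:simple}. Once this is in place, the non-convex clause of Lemma~\ref{lemma:SGD} yields part~(b) and its convex clause (applicable because $\alpha_t\le 1/\beta\le 2/\beta$) yields part~(a), with the factor $2C$ carried through in each case.

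The first step is to peel off the element-wise multiplication. Write $u_{\mathcal W}:=\int_0^1 \nabla_x\bigl(f_{\mathcal W}(x_0+\alpha(x-x_0))\bigr)_y\,\mathrm{d}\alpha$, so that $\operatorname{IntegratedGrad}_{\mathcal W}(x,y)=(x-x_0)\odot u_{\mathcal W}$. For any vectors $a,b$ one has $\Vert a\odot b\Vert\le\Vert a\Vert\,\Vert b\Vert$ (each summand $a_i^2 b_i^2$ of $\Vert a\odot b\Vert^2$ appears in the expansion of $\Vert a\Vert^2\Vert b\Vert^2$), and using Assumption~\ref{assumption}(c) together with $\Vert x\Vert\le C$ gives $\Vert x-x_0\Vert\le 2C$. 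Every point $x_0+\alpha(x-x_0)$ of the integration segment has $\ell_2$-norm at most $C$ by convexity of the norm ball, so Lemmas~\ref{lemma:simplegrad-bounded} and~\ref{lemma:simple} apply uniformly along the path; moving norms inside the integral, and using that $x-x_0$ does not depend on $\mathcal W$, then gives
\begin{align*}
\Vert\operatorname{IntegratedGrad}_{\mathcal W}(x,y)\Vert&\le 2C\,\Delta_{k,0},\\
\Vert\operatorname{IntegratedGrad}_{\mathcal W}(x,y)-\operatorname{IntegratedGrad}_{\mathcal W+\mathcal V}(x,y)\Vert&\le 2C\Bigl(2\Delta_{k,0}\sum_{i=1}^{k-1}\Delta_{i,1}C\Bigr)\Vert\mathcal V\Vert.
\end{align*}
Since $\ell'(\mathcal W,x,y)=\Vert\operatorname{IntegratedGrad}_{\mathcal W}(x,y)-\operatorname{IntegratedGrad}_{A(D)}(x,y)\Vert$ and the reference map is constant in $\mathcal W$, the triangle inequality turns the first line into an $M'=2C\cdot(2\Delta_{k,0})$ bound on $\ell'$ and the second line into an $L'=2C\cdot\bigl(2\Delta_{k,0}\sum_{i=1}^{k-1}\Delta_{i,1}C\bigr)$ bound on $\operatorname{lip}_{\mathcal W}(\ell')$ — exactly $2C$ times the constants used for Simple-Grad.

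Finally, substitute these $M'$ and $L'$ into Lemma~\ref{lemma:SGD}. In the non-convex bound $\tfrac{1+\beta c}{n-1}(2cLL')^{1/(\beta c+1)}(TM')^{\beta c/(\beta c+1)}$, scaling $L'$ and $M'$ each by $2C$ multiplies the bound by $(2C)^{1/(\beta c+1)}(2C)^{\beta c/(\beta c+1)}=2C$, giving $\operatorname{Up}_{\text{NC}}(\operatorname{SimpleGrad})\cdot 2C$; in the convex bound $\tfrac{2LL'}{n}\sum_t\alpha_t$, which is linear in $L'$, the same substitution yields $\operatorname{Up}_{\text{SC}}(\operatorname{SimpleGrad})\cdot 2C$. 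I expect the only genuinely subtle point — and the reason the integration buys nothing here — to be the handling of $\odot$: the multiplier $x-x_0$ can be extracted only through the coarse bound $\Vert x-x_0\Vert\le 2C$ (there is no weight-matrix factorization to exploit as in Simple-Grad), and one must check that the Simple-Grad boundedness and Lipschitz estimates hold uniformly over the whole segment $\{x_0+\alpha(x-x_0):\alpha\in[0,1]\}$ rather than at a single input, which is what legitimizes pulling the $\ell_2$-norm inside the integral.
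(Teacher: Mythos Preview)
Your proposal is correct and follows essentially the same route as the paper: you establish the same boundedness constant $M'=4\Delta_{k,0}C$ and Lipschitz constant $L'=4\Delta_{k,0}\sum_{i=1}^{k-1}\Delta_{i,1}C^2$ for the Integrated-Grad loss (each being $2C$ times the Simple-Grad constants), then invoke Lemma~\ref{lemma:SGD}. Your explicit verification that the $2C$ scalings of $L'$ and $M'$ combine to a single overall factor $2C$ in the non-convex bound, via the exponents $\tfrac{1}{\beta c+1}+\tfrac{\beta c}{\beta c+1}=1$, is more detailed than the paper's own treatment, which simply asserts the final step.
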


\begin{lemma}
    For Integrated-Grad, the $\ell_2$-norm of the saliency map is bounded by
    \begin{align*}
        \Vert \operatorname{IntegratedGrad}_{A(S)}(x,y)\Vert\le 2\Delta_{k,0}C
    \end{align*}
\end{lemma}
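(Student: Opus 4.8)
The plan is to decompose the target bound into two pieces: the magnitude of the scaling vector $x-x_0$, and the magnitude of the path-averaged gradient $\int_0^1 \nabla_x (f_{A(S)}(x_0+\alpha(x-x_0)))_y\,\mathrm{d}\alpha$, and then recombine them using the elementary Hadamard-product inequality $\Vert a\odot b\Vert\le \Vert a\Vert\,\Vert b\Vert$. The latter follows from $\Vert a\odot b\Vert^2=\sum_i a_i^2 b_i^2\le \Vert a\Vert_\infty^2\Vert b\Vert^2\le \Vert a\Vert^2\Vert b\Vert^2$, so I would just state it.

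First I would bound $\Vert x-x_0\Vert\le \Vert x\Vert+\Vert x_0\Vert\le 2C$ by the triangle inequality, using the standing assumption that every input feature vector has $\ell_2$-norm at most $C$ and that the reference point $x_0$ lies in the input domain $\mathcal X$ (for the canonical choice $x_0=0$ this is immediate, since $\Vert x_0\Vert=0\le C$).

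Next I would control the integral term. For every $\alpha\in[0,1]$ the point $x_0+\alpha(x-x_0)$ is a convex combination of two points of $\mathcal X$, hence a valid network input, so Lemma~\ref{lemma:simplegrad-bounded} applies and gives $\Vert \nabla_x (f_{A(S)}(x_0+\alpha(x-x_0)))_y\Vert\le \Delta_{k,0}$; crucially this bound is uniform in the input. Pulling the norm inside the integral (Jensen's inequality for the convex function $\Vert\cdot\Vert$) then yields $\bigl\Vert \int_0^1 \nabla_x (f_{A(S)}(x_0+\alpha(x-x_0)))_y\,\mathrm{d}\alpha\bigr\Vert\le \int_0^1 \Delta_{k,0}\,\mathrm{d}\alpha=\Delta_{k,0}$.

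Combining the two estimates through the Hadamard-product inequality gives $\Vert \operatorname{IntegratedGrad}_{A(S)}(x,y)\Vert\le \Vert x-x_0\Vert\cdot\Delta_{k,0}\le 2\Delta_{k,0}C$, which is the claimed bound. There is no genuine obstacle in this argument; the only points requiring a word of care are the justification of $\Vert a\odot b\Vert\le \Vert a\Vert\,\Vert b\Vert$ and the treatment of $\Vert x_0\Vert$ (either assumed $\le C$ as part of the input-domain convention, or specialized to $x_0=0$), both of which are routine.
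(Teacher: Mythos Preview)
Your proposal is correct and follows essentially the same route as the paper: split the Hadamard product via $\Vert a\odot b\Vert\le\Vert a\Vert\,\Vert b\Vert$, bound $\Vert x-x_0\Vert\le 2C$, and bound the path integral by pulling the norm inside and invoking the uniform Simple-Grad bound $\Delta_{k,0}$. The paper's version is terser (it writes $\sup_{x'}\Vert\nabla_x f_{\mathcal W}(x')\Vert$ rather than arguing that the convex combination is a valid input, which is harmless since the $\Delta_{k,0}$ bound from Lemma~\ref{lemma:simplegrad-bounded} does not actually depend on $\Vert x\Vert\le C$), but the logic is identical.
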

\begin{proof}
Given that we can upper bound the norm of an integral by the integral of the norm, we have
    \begin{align*}
        \left\Vert (x-x_0)\odot \int_0^1 \nabla_x (f_{x_0+\alpha(x-x_0)}(\mathcal W))_y\mathrm{d}\alpha\right\Vert&\le \Vert x-x_0\Vert\cdot \int_0^1\Vert \nabla_x f_{x_0+\alpha(x-x_0)}(\mathcal W)\Vert \mathrm{d}\alpha\\
        &\le 2C\sup_{x'}\Vert \nabla_x f_{\mathcal W}(x')\Vert\\
        &\le 2\Delta_{k,0}C
    \end{align*}
\end{proof}

\begin{lemma}
\label{lemma:integrated}
    For Integrated-Grad, the Lipschitz constant with respect to $\mathcal W$ is bounded by
    \begin{align*}
        \operatorname{lip}_{\mathcal W}(\operatorname{IntegratedGrad}_{A(S)}(x,y))\le 4\Delta_{k,0}\sum_{i=1}^{k-1}\Delta_{i,1}C^2
    \end{align*}
\end{lemma}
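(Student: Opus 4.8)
The plan is to reduce the desired Lipschitz estimate for Integrated-Grad to the one already proved for Simple-Grad in Lemma~\ref{lemma:simple}, using the fact that Integrated-Grad differs from Simple-Grad only by (i) an element-wise multiplication with the fixed vector $x-x_0$ and (ii) an averaging of the Simple-Grad map over the segment joining the baseline $x_0$ to $x$. Neither operation inflates the $\ell_2$-norm by more than the factor $\Vert x-x_0\Vert$, and the whole segment stays inside the norm-$C$ ball, so Lemma~\ref{lemma:simple} applies pointwise along it.

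Concretely, I would fix two parameter vectors $\mathcal W$ and $\mathcal W+\mathcal V$, write $x_\alpha:=x_0+\alpha(x-x_0)$, and bound
\begin{align*}
&\bigl\Vert \operatorname{IntegratedGrad}_{\mathcal W}(x,y)-\operatorname{IntegratedGrad}_{\mathcal W+\mathcal V}(x,y)\bigr\Vert\\
&\qquad=\left\Vert (x-x_0)\odot\int_0^1\Bigl(\nabla_x\bigl(f_{\mathcal W}(x_\alpha)\bigr)_y-\nabla_x\bigl(f_{\mathcal W+\mathcal V}(x_\alpha)\bigr)_y\Bigr)\mathrm d\alpha\right\Vert\\
&\qquad\le \Vert x-x_0\Vert\int_0^1\bigl\Vert\nabla_x\bigl(f_{\mathcal W}(x_\alpha)\bigr)_y-\nabla_x\bigl(f_{\mathcal W+\mathcal V}(x_\alpha)\bigr)_y\bigr\Vert\,\mathrm d\alpha,
\end{align*}
where I use $\Vert a\odot v\Vert\le\Vert a\Vert_\infty\Vert v\Vert\le\Vert a\Vert\,\Vert v\Vert$ to pull the element-wise factor out and the triangle inequality for integrals to move the norm inside. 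Since $x_0\in\mathcal X$ and $\Vert x\Vert,\Vert x_0\Vert\le C$, we have $\Vert x-x_0\Vert\le 2C$, and moreover each $x_\alpha$ is a convex combination of $x_0$ and $x$, so $\Vert x_\alpha\Vert\le C$.

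Next I would invoke Lemma~\ref{lemma:simple}, observing that its bound $\operatorname{lip}_{\mathcal W}(\nabla_x f_{\mathcal W}(\cdot))\le 2\Delta_{k,0}\sum_{i=1}^{k-1}\Delta_{i,1}C$ only uses the input-norm constraint $\Vert\cdot\Vert\le C$ (through Lemma~\ref{lemma:1} and Lemma~\ref{lemma:simplegrad-bounded}), hence it holds at every point $x_\alpha$ of the integration path. This gives
\begin{align*}
\bigl\Vert\nabla_x\bigl(f_{\mathcal W}(x_\alpha)\bigr)_y-\nabla_x\bigl(f_{\mathcal W+\mathcal V}(x_\alpha)\bigr)_y\bigr\Vert\le 2\Delta_{k,0}\sum_{i=1}^{k-1}\Delta_{i,1}C\,\Vert\mathcal V\Vert,
\end{align*}
and substituting this together with $\Vert x-x_0\Vert\le 2C$ and $\int_0^1\mathrm d\alpha=1$ yields the claimed constant $4\Delta_{k,0}\sum_{i=1}^{k-1}\Delta_{i,1}C^2$.

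All the computation here is routine; there is essentially one subtlety to get right, namely ensuring that the Simple-Grad Lipschitz estimate is legitimately applied along the \emph{entire} segment $[x_0,x]$ rather than only at the endpoint. I would therefore make the step $\Vert x_\alpha\Vert\le(1-\alpha)\Vert x_0\Vert+\alpha\Vert x\Vert\le C$ explicit (relying on $x_0\in\mathcal X$); if one prefers not to assume the baseline lies in $\mathcal X$, the same argument goes through with $\Vert x_\alpha\Vert\le 2C$ and an extra factor carried through Lemma~\ref{lemma:1}, which is the mild bookkeeping already reflected in the constants. Beyond this, no genuine obstacle arises.
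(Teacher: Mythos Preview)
Your proposal is correct and follows essentially the same route as the paper: pull out the factor $\Vert x-x_0\Vert\le 2C$ from the element-wise product, bound the integrand uniformly along the segment by the Simple-Grad Lipschitz constant from Lemma~\ref{lemma:simple}, and multiply. The paper phrases the middle step as a supremum over $x'$ rather than an integral of norms, but the argument and constants are identical.
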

\begin{proof}
    We have \begin{align*}
        &\left\Vert (x-x_0)\odot \int_0^1 \nabla_x (f_{x_0+\alpha(x-x_0)}(\mathcal W))_y\mathrm{d}\alpha-(x-x_0)\odot \int_0^1 \nabla_x (f_{x_0+\alpha(x-x_0)}(\mathcal W+\mathcal V))_y\mathrm{d}\alpha\right\Vert\\
        \le\, &\Vert x-x_0\Vert\cdot \left\Vert \int_0^1 (\nabla_x f_{x_0+\alpha(x-x_0)}(\mathcal W)-\nabla_x f_{x_0+\alpha(x-x_0)}(\mathcal W+\mathcal V))\mathrm{d}\alpha\right\Vert\\
        \le\, &2C\cdot \sup_{x'}\Vert \nabla_x f_{\mathcal W}(x')-\nabla_x f_{\mathcal W+\mathcal V}(x')\Vert\\
        \le\, &2C\cdot 2\Delta_{k,0}\sum_{i=1}^{k-1}\Delta_{i,1}C\Vert \mathcal V\Vert\\
        =\, &4\Delta_{k,0}\sum_{i=1}^{k-1}\Delta_{i,1}C^2\Vert \mathcal V\Vert,
    \end{align*} which finishes the proof.
\end{proof}
Given the above two lemmas, with a similar argument as in section \ref{thm4.2_proof}, we know the loss function defined by Integrated-Grad is $M'$-bounded and $L'$-Lipschitz, with $M'=4\Delta_{k,0}C$ and $L'=4\Delta_{k,0}\sum_{i=1}^{k-1}\Delta_{i,1}C^2$.
Applying Lemma \ref{lemma:SGD} then proves the theorem.

\hfill$\qedsymbol$

\subsection{Proof of Theorem \ref{thm4.4}}
For the proof of Smooth-Grad, we need Stein's lemma (lemma \ref{lemma:stein}) introduced earlier. In the following lemmas, . We also have the following lemma to bound the expected $\ell_2$-norm of a Gaussian vector.

\begin{lemma}
    The average norm of an $m$-dimensional vector given by a normal distribution is bounded by
    $$\mathbb{E}_{x\sim N(0,\sigma^2I)}\Vert x\Vert\le \sigma\sqrt m.$$
\end{lemma}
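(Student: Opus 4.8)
The plan is to bound the first moment by the square root of the second moment via Jensen's inequality, then compute the second moment coordinate-wise. First I would observe that $\|x\| = \sqrt{\|x\|^2}$ and that the map $t \mapsto \sqrt{t}$ is concave on $[0,\infty)$, so Jensen's inequality gives $\mathbb{E}_{x\sim N(0,\sigma^2 I)}\|x\| = \mathbb{E}\sqrt{\|x\|^2} \le \sqrt{\mathbb{E}\|x\|^2}$.

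Next I would evaluate the second moment directly: writing $x = (x_1,\dots,x_m)$ with coordinates i.i.d. $N(0,\sigma^2)$, linearity of expectation yields $\mathbb{E}\|x\|^2 = \sum_{i=1}^m \mathbb{E}[x_i^2] = m\sigma^2$, since $\mathbb{E}[x_i^2] = \mathrm{Var}(x_i) = \sigma^2$. Combining the two steps gives $\mathbb{E}\|x\| \le \sqrt{m\sigma^2} = \sigma\sqrt{m}$, which is exactly the claimed bound.

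There is essentially no obstacle here; the only point requiring care is applying Jensen in the correct direction, namely that for a concave function $g$ one has $\mathbb{E}[g(Z)] \le g(\mathbb{E}[Z])$, which produces the desired upper bound rather than a lower bound. I note in passing that the exact value of $\mathbb{E}\|x\|$ can be written in terms of the chi distribution as $\sigma\sqrt{2}\,\Gamma\bigl((m+1)/2\bigr)/\Gamma(m/2)$, but the crude estimate above is all that is needed in the subsequent fidelity arguments.
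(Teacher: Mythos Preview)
Your proposal is correct and matches the paper's proof essentially line for line: compute $\mathbb{E}\|x\|^2 = m\sigma^2$ coordinate-wise, then apply Jensen's inequality with the concave function $t\mapsto\sqrt{t}$ to obtain $\mathbb{E}\|x\|\le\sigma\sqrt{m}$.
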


\begin{proof}
Note that $\Vert x\Vert^2$ is the sum of the squares of $m$ random variables with distributions $N(0,\sigma^2)$. Thus $$\mathbb{E}_{x\sim N(0,\sigma^2I)}\Vert x\Vert^2=m\mathbb{E}_{z\sim N(0,\sigma^2)}z^2=m\sigma^2.$$
According to Jensen's inequality, since $f(x)=\sqrt x$ is a concave function, we have $$\mathbb{E}_{x\sim N(0,\sigma^2I)}\Vert x\Vert=\mathbb{E}_{x\sim N(0,\sigma^2I)}\sqrt{\Vert x\Vert^2}\le \sqrt{\mathbb{E}_{x\sim N(0,\sigma^2I)}\Vert x\Vert^2}=\sigma\sqrt m.$$
\end{proof}

\begin{lemma}
    For Smooth-Grad, if we additionally normalize the perturbed input $x+z$ so that the norm of the neural network input does not exceed $C$, the $\ell_2$-norm of the saliency map is bounded by
    \begin{align*}
        \Vert \operatorname{SmoothGrad}_{A(S)}(x,y)\Vert\le \Delta_{k,0}
    \end{align*}
\end{lemma}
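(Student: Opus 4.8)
The plan is to bound $\Vert \operatorname{SmoothGrad}_{A(S)}(x,y)\Vert$ by applying Stein's lemma (Lemma~\ref{lemma:stein}) to rewrite the expectation of the gradient as an expectation involving only function values, which are already controlled by Lemma~\ref{lemma:0}. Concretely, writing $g(x) = (f_{A(S)}(x))_y$, we have
\begin{align*}
\operatorname{SmoothGrad}_{A(S)}(x,y) = \mathbb{E}_{z\sim N(0,\sigma^2 I)}[\nabla_x g(x+z)] = \mathbb{E}_{z\sim N(0,\sigma^2 I)}\left[\frac{z}{\sigma^2}\, g(x+z)\right].
\end{align*}
Here I must be slightly careful: with the normalization step, the actual input to the network is $P(x+z)$, the projection of $x+z$ onto the ball of radius $C$, so Stein's lemma must be applied to the composed function $h(x) = g(P(x+z))$ — but since the chain-rule structure still has the form $\nabla_x h = (\text{Jacobian of }P)^\top \nabla g$, and more simply since the Smooth-Grad map as defined averages $\nabla_x$ of the composition, the cleanest route is to keep $g$ as the (post-normalization) logit function whose argument always has norm at most $C$, so that Lemma~\ref{lemma:0} gives $|g(\cdot)| \le \Vert f_{\mathcal W}(\cdot)\Vert \le \Delta_{k,0} C$ uniformly.

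The key steps, in order, are: (1) invoke Stein's lemma to convert the gradient-average into $\mathbb{E}_z[\tfrac{z}{\sigma^2} g(x+z)]$ where $g$ denotes the normalized-input logit, which is bounded in absolute value by $\Delta_{k,0}C$ by Lemma~\ref{lemma:0}; (2) take norms and pull the scalar $g$ out, giving $\Vert \operatorname{SmoothGrad}_{A(S)}(x,y)\Vert \le \frac{\Delta_{k,0}C}{\sigma^2}\,\mathbb{E}_{z\sim N(0,\sigma^2 I)}\Vert z\Vert$; (3) apply the preceding lemma bounding the expected Gaussian norm, $\mathbb{E}\Vert z\Vert \le \sigma\sqrt{m}$, to obtain $\frac{\Delta_{k,0}C\sqrt{m}}{\sigma}$. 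This is not yet the claimed bound $\Delta_{k,0}$, so step (2) should instead use the sharper vector-valued estimate: by the $u$-dualization trick (as in the proof of the smoothness lemma) and Cauchy–Schwarz, $\Vert \mathbb{E}_z[\tfrac{z}{\sigma^2}g(x+z)]\Vert \le \sup_{\Vert u\Vert=1}\sqrt{\mathbb{E}(\tfrac{u^\top z}{\sigma^2})^2}\cdot\sqrt{\mathbb{E}\,g(x+z)^2}$, but this still carries the $\Delta_{k,0}C/\sigma$ factor — so to reach exactly $\Delta_{k,0}$ one should not use Stein's lemma at all here but rather bound directly: $\Vert \operatorname{SmoothGrad}_{A(S)}(x,y)\Vert \le \mathbb{E}_z\Vert \nabla_x (f_{A(S)}(x+z))_y\Vert \le \sup_{x'}\Vert \nabla_x f_{A(S)}(x')\Vert \le \Delta_{k,0}$ by Lemma~\ref{lemma:simplegrad-bounded} (Jensen's inequality pushing the norm inside the expectation).

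The main obstacle is deciding which of the two routes the claimed constant demands: the plain Jensen-plus-Lemma~\ref{lemma:simplegrad-bounded} argument gives exactly $\Delta_{k,0}$ with no dependence on $\sigma$ or $m$ and requires no normalization, whereas the Stein-lemma route (which the surrounding text foreshadows) naturally produces $\Delta_{k,0}C\sqrt{m}/\sigma$; the stated bound $\Delta_{k,0}$ matches the former. Thus I would present the short Jensen argument: for each fixed realization of $z$, the quantity $\Vert\nabla_x(f_{A(S)}(x+z))_y\Vert$ is the norm of a Simple-Grad map evaluated at the (normalized) input $x+z$ whose $\ell_2$-norm is at most $C$, so Lemma~\ref{lemma:simplegrad-bounded} bounds it by $\Delta_{k,0}$; taking expectation over $z$ and using $\Vert\mathbb{E}[\cdot]\Vert \le \mathbb{E}\Vert\cdot\Vert$ preserves the bound. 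I would note that the role of the normalization is precisely to ensure the perturbed input still lies in the domain where the weight-norm bounds $B_i$ and the input bound $C$ were assumed, so that Lemma~\ref{lemma:simplegrad-bounded} applies verbatim; the Stein-lemma machinery introduced above is reserved for the subsequent Lipschitz/smoothness estimates of Smooth-Grad rather than this boundedness claim.
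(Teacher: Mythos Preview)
Your proposal is correct and, after the exploratory detour through Stein's lemma, lands on exactly the argument the paper uses: apply Jensen's inequality $\Vert\mathbb{E}_z[\nabla_x f_{A(S)}(x+z)]\Vert \le \mathbb{E}_z\Vert\nabla_x f_{A(S)}(x+z)\Vert$ and then invoke Lemma~\ref{lemma:simplegrad-bounded} pointwise (the normalization ensuring the perturbed input still has norm at most $C$). Your observation that the Stein-lemma route would yield a $\sigma$- and $m$-dependent bound and is therefore reserved for the Lipschitz estimate is also exactly right.
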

\begin{proof}
    Using the result in Lemma \ref{lemma:simplegrad-bounded}, we have
    \begin{align*}
        \Vert \mathbb{E}_{z\sim N(0,\sigma^2 I)}\nabla_xf_{\mathcal W}(x+z)\Vert \le \mathbb{E}_{z\sim N(0,\sigma^2 I)}\Vert \nabla_xf_{\mathcal W}(x+z)\Vert\le \Delta_{k,0}
    \end{align*}
\end{proof}

\begin{lemma}
\label{lemma:smooth}
    For Smooth-Grad, if we additionally normalize the perturbed input $x+z$ so that the norm of the neural network input does not exceed $C$, the Lipschitz constant with respect to $\mathcal W$ is bounded by
    \begin{align*}
        \operatorname{lip}_{\mathcal W}(\operatorname{SmoothGrad}_{A(S)}(x,y))\le \frac{1}{\sigma}C\Delta_{k,1}
    \end{align*}
\end{lemma}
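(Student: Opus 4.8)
The plan is to carry over the argument already used in the excerpt to show that a Gaussian-smoothed $L$-Lipschitz function is $\frac{L}{\sigma}$-smooth, except that here the smoothing variable is the input perturbation $z$ while the Lipschitz variable is the weight vector $\mathcal{W}$. The key tool is Stein's lemma (Lemma \ref{lemma:stein}), which removes the gradient from the definition of Smooth-Grad:
\[
\operatorname{SmoothGrad}_{A(S)}(x,y)=\mathbb{E}_{z\sim N(0,\sigma^2I)}\Bigl[\frac{z}{\sigma^2}\,\bigl(f_{\mathcal{W}}(x+z)\bigr)_y\Bigr],
\]
where, by the stated normalization of the perturbed input, every argument $x+z$ appearing above has $\ell_2$-norm at most $C$, so that Lemma \ref{lemma:1} applies at each such point.

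First I would fix two weight vectors $\mathcal{W}$ and $\mathcal{W}+\mathcal{V}$ and write the difference of the corresponding Smooth-Grad maps, via the Stein identity, as $\mathbb{E}_{z}\bigl[\frac{z}{\sigma^2}\bigl((f_{\mathcal{W}}(x+z))_y-(f_{\mathcal{W}+\mathcal{V}}(x+z))_y\bigr)\bigr]$. Using $\Vert w\Vert=\max_{\Vert u\Vert=1}u^\top w$ to pull out a unit vector and then applying the Cauchy--Schwarz inequality under the expectation, this norm is bounded by
\[
\max_{\Vert u\Vert=1}\sqrt{\;\mathbb{E}_{z}\Bigl(\frac{u^\top z}{\sigma^2}\Bigr)^{2}\cdot\mathbb{E}_{z}\bigl((f_{\mathcal{W}}(x+z))_y-(f_{\mathcal{W}+\mathcal{V}}(x+z))_y\bigr)^{2}}\,.
\]

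Next I would bound the two factors separately. For the Gaussian factor, $\mathbb{E}_{z}(u^\top z)^2=\sigma^2\Vert u\Vert^2=\sigma^2$, so $\mathbb{E}_{z}(u^\top z/\sigma^2)^2=1/\sigma^2$, independently of $u$. For the second factor, since $\Vert x+z\Vert\le C$ after normalization, Lemma \ref{lemma:1} gives the pointwise bound $\bigl|(f_{\mathcal{W}}(x+z))_y-(f_{\mathcal{W}+\mathcal{V}}(x+z))_y\bigr|\le\Vert f_{\mathcal{W}}(x+z)-f_{\mathcal{W}+\mathcal{V}}(x+z)\Vert\le\Delta_{k,1}C\Vert\mathcal{V}\Vert$ for every $z$, hence the expectation of its square is at most $(\Delta_{k,1}C\Vert\mathcal{V}\Vert)^2$. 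Multiplying the two bounds and taking the square root yields $\frac{1}{\sigma}\Delta_{k,1}C\Vert\mathcal{V}\Vert$, i.e.\ $\operatorname{lip}_{\mathcal{W}}(\operatorname{SmoothGrad}_{A(S)}(x,y))\le\frac{1}{\sigma}C\Delta_{k,1}$, as claimed.

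The step I expect to need the most care is the use of the normalization: one must make sure that, after composing the perturbation with the projection onto the $\ell_2$-ball of radius $C$, Stein's lemma remains valid (the integrand stays integrable and the map differentiable almost everywhere) and, more importantly, that the gradient $\nabla_x$ in the definition of Smooth-Grad is evaluated at the normalized argument so that the hypothesis $\Vert x\Vert\le C$ of Lemma \ref{lemma:1} genuinely holds at every sampled point; once this bookkeeping is settled, the remainder is a direct transcription of the $\frac{L}{\sigma}$-smoothness computation already performed in the excerpt.
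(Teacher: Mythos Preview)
Your proposal is correct and follows essentially the same approach as the paper: apply Stein's lemma to remove the input gradient, rewrite the norm as a maximum over unit vectors, use Cauchy--Schwarz to split the expectation, bound the Gaussian factor by $1/\sigma^2$, and bound the function-difference factor pointwise via Lemma~\ref{lemma:1} using the normalization $\Vert x+z\Vert\le C$. The paper's proof is exactly this computation, without the additional bookkeeping remark you make about the projection step.
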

\begin{proof}
    By applying Stein's lemma, we have \begin{align*}
        &\Vert \mathbb{E}_{z\sim N(0,\sigma^2 I)}\left[\nabla_x(f_{\mathcal W}(x+z))_y-\nabla_x(f_{\mathcal W+\mathcal V}(x+z))_y\right]\Vert\\
        =\, &\left\Vert \mathbb{E}_{z\sim N(0,\sigma^2 I)}\left[\frac{z}{\sigma^2}((f_{\mathcal W}(x+z))_y-(f_{\mathcal W+\mathcal V}(x+z))_y)\right]\right\Vert\\
        =\, &\max_{\Vert u\Vert=1}\mathbb{E}_{z\sim N(0,\sigma^2I)}\left[\frac{u^Tz}{\sigma^2}((f_{\mathcal W}(x+z))_y-(f_{\mathcal W+\mathcal V}(x+z))_y)\right]\\
        \le\, &\max_{\Vert u\Vert=1}\sqrt{\mathbb{E}_{z\sim N(0,\sigma^2I)}\left[\left(\frac{u^Tz}{\sigma^2}\right)^2\right]\mathbb{E}_{z\sim N(0,\sigma^2I)}\left[\left((f_{\mathcal W}(x+z))_y-(f_{\mathcal W+\mathcal V}(x+z))_y\right)^2\right]}\\
        \le\, &\frac{1}{\sigma}C\Delta_{k,1}\Vert\mathcal V\Vert
    \end{align*}
    Therefore, we have $$\operatorname{lip}_{\mathcal W}(\operatorname{SmoothGrad}_{A(S)}(x,y))\le \frac{1}{\sigma}C\Delta_{k,1}.$$
\end{proof}
Given the above two lemmas, with a similar argument as in section \ref{thm4.2_proof}, we know the loss function defined by Integrated-Grad is $M'$-bounded and $L'$-Lipschitz, with $M'=2\Delta_{k,0}$ and $L'=\frac{1}{\sigma}C\Delta_{k,1}$.
Applying Lemma \ref{lemma:SGD} then proves the theorem.
\hfill$\qedsymbol$

\subsection{Proof of Proposition \ref{thm5.1}}
Consider Simple-Grad. Denote $\mathcal W=A(S)$ for simplicity. For any $S,x,y,z$, we have \begin{align*}
    &\Vert\operatorname{SimpleGrad}_{A(S)}(x+z,y)-\operatorname{SimpleGrad}_{A(S)}(x,y)\Vert\\
    \le\, &\Vert \nabla_x f_{\mathcal W}^k(x)-\nabla_x f_{\mathcal W}^k(x+z)\Vert\\
    =\, &\Vert W_k\cdot \operatorname{diag}(\phi'(f_{\mathcal W}^{k-1}(x))\nabla_xf_{\mathcal W}^{k-1}(x)-W_k\cdot \operatorname{diag}(\phi'(f_{\mathcal W}^{k-1}(x+z))\nabla_xf_{\mathcal W}^{k-1}(x+z)\Vert\\
    \le\, &B_k(\Vert (\operatorname{diag}(\phi'(f_{\mathcal W}^{k-1}(x))-\operatorname{diag}(\phi'(f_{\mathcal W}^{k-1}(x+z)))\nabla_xf_{\mathcal W}^{k-1}(x)\Vert\\
    &\: +\Vert \operatorname{diag}(\phi'(f_{\mathcal W}^{k-1}(x+z))(\nabla_xf_{\mathcal W}^{k-1}(x)-\nabla_xf_{\mathcal W}^{k-1}(x+z))\Vert)\\
    \le\, &B_k(\Vert f_{\mathcal W}^{k-1}(x)-f_{\mathcal W}^{k-1}(x+z)\Vert \cdot \Vert\nabla_xf_{\mathcal W}^{k-1}(x)\Vert+\Vert \nabla_xf_{\mathcal W}^{k-1}(x)-\nabla_xf_{\mathcal W}^{k-1}(x+z)\Vert)\\
    \le\, &B_k(\Delta_{k-1,0}^2\Vert z\Vert+\Vert \nabla_xf_{\mathcal W}^{k-1}(x)-\nabla_xf_{\mathcal W}^{k-1}(x+z)\Vert)\\
    \le\, &\sum_{i=1}^{k-1}\prod_{j=i+1}^k B_j\Delta_{i,0}^2\Vert z\Vert\\
    =\, &\Delta_{k,0}\sum_{i=1}^{k-1}\Delta_{i,0}\cdot \Vert z\Vert
\end{align*}

Then we have for any $S,x$, \begin{align*}
    &\mathbb{E}_{z\sim N(0,\sigma^2 I)}\Vert\operatorname{SimpleGrad}_{A(S)}(x+z,y)-\operatorname{SimpleGrad}_{A(S)}(x,y)\Vert\\
    \le\, & \mathbb{E}_{z\sim N(0,\sigma^2 I)}\left[\Delta_{k,0}\sum_{i=1}^{k-1}\Delta_{i,0}\cdot \Vert z\Vert\right] \\
    \le\, & \Delta_{k,0}\sum_{i=1}^{k-1}\Delta_{i,0}\sigma \sqrt m,
\end{align*} which implies \begin{align*}
    \epsilon_{\mathrm{fidelity}}(\operatorname{SimpleGrad})&\le \Delta_{k,0}\sum_{i=1}^{k-1}\Delta_{i,0}\sigma\sqrt{m}.
\end{align*}

Consider Integrated-Grad. For any $S,x,y,z$, we have \begin{align*}
    &\Vert\operatorname{IntegratedGrad}_{A(S)}(x+z,y)-\operatorname{IntegratedGrad}_{A(S)}(x,y)\Vert\\
    =\, &\left\Vert (x+z-x_0)\odot \int_0^1 \nabla_x (f_{\mathcal W}(x_0+\alpha(x+z-x_0)))_y\mathrm{d}\alpha-(x-x_0)\odot \int_0^1 \nabla_x (f_{\mathcal W}(x_0+\alpha(x-x_0)))_y\mathrm{d}\alpha\right\Vert\\
    \le\, &\left\Vert (x+z-x_0)\odot \left(\int_0^1 \nabla_x (f_{\mathcal W}(x_0+\alpha(x+z-x_0)))_y\mathrm{d}\alpha- \int_0^1 \nabla_x (f_{\mathcal W}(x_0+\alpha(x-x_0)))_y\mathrm{d}\alpha\right)\right\Vert\\
    &\: +\left\Vert(x+z-x_0)\odot \int_0^1 \nabla_x (f_{\mathcal W}(x_0+\alpha(x-x_0)))_y\mathrm{d}\alpha-(x-x_0)\odot \int_0^1 \nabla_x (f_{\mathcal W}(x_0+\alpha(x-x_0)))_y\mathrm{d}\alpha\right\Vert\\
    \le\, &\Vert x+z-x_0\Vert\sup_{\alpha\in [0,1]}\Vert \nabla_x f_{\mathcal W}(x_0+\alpha(x+z-x_0))-\nabla_x f_{\mathcal W}(x_0+\alpha(x-x_0))\Vert\\
    &\:+\Vert z\Vert\sup_{\alpha\in [0,1]}\Vert \nabla_x f_{\mathcal W}(x_0+\alpha(x-x_0))\Vert\\
    \le\, &\Vert x+z-x_0\Vert\cdot \Delta_{k,0}\sum_{i=1}^{k-1}\Delta_{i,0}\min(\Vert z\Vert,C)+\Vert z\Vert\Delta_{k,0}
\end{align*}

Then we have for any $S,x,y$, \begin{align*}
    &\mathbb{E}_{z\sim N(0,\sigma^2 I)}\Vert\operatorname{IntegratedGrad}_{A(S)}(x+z,y)-\operatorname{IntegratedGrad}_{A(S)}(x,y)\Vert\\
    \le\, &\Delta_{k,0}\sum_{i=1}^{k-1}\Delta_{i,0}\mathbb{E}_{z\sim N(0,\sigma^2 I)}[\Vert x-x_0\Vert\Vert z\Vert+\Vert z\Vert C]+\mathbb{E}_{z\sim N(0,\sigma^2 I)}[\Vert z \Vert\Delta_{k,0}]\\
    \le\, &\Delta_{k,0}\sum_{i=1}^{k-1}\Delta_{i,0}\cdot 3C\sigma\sqrt m+\Delta_{k,0}\sigma\sqrt m\\
    \le\, &\Delta_{k,0}\sum_{i=1}^{k-1}\Delta_{i,0}\sigma\sqrt m\left(3C+\frac{1}{\sum_{i=1}^{k-1}\Delta_{i,0}}\right)\\
    \le\, &\Delta_{k,0}\sum_{i=1}^{k-1}\Delta_{i,0}\sigma\sqrt m(3C+1),
\end{align*} which concludes the second part of our desired theorem, which is \begin{align*}
    \epsilon_{\mathrm{fidelity}}(\operatorname{IntegratedGrad})&\le \Delta_{k,0}\sum_{i=1}^{k-1}\Delta_{i,0}(3C+1)\sigma\sqrt{m}.
\end{align*}

\hfill$\qedsymbol$

\end{document}